\documentclass[3p,computermodern,11pt]{elsarticle}
\usepackage{fullpage}
\usepackage{booktabs} 
\usepackage{algorithm}
\usepackage{algorithmic}
\usepackage[utf8]{inputenc}
\usepackage[english]{babel}
\usepackage{amsmath,amsfonts,amssymb,amsthm}
\usepackage{ulem}

\newtheorem{proposition}{Proposition}[section]

\newtheorem{remark}{Remark}[section] 
\usepackage{color}
\usepackage{xcolor}
\usepackage{mathtools}
\usepackage[titletoc,title]{appendix}
\usepackage{float}
\usepackage{graphicx} 
\usepackage{epstopdf} 
\usepackage{caption}[font=small,skip=-2pt]
\usepackage{subcaption}[font=small,skip=-2pt]
\usepackage{pdfpages}
\usepackage{comment}
\usepackage{tabularx}
\usepackage{listings}

\usepackage{hyperref}
\hypersetup{
    colorlinks=true,
    linkcolor=blue,
    filecolor=magenta,      
    urlcolor=cyan,
}
\usepackage{tikz}
\usetikzlibrary{positioning, fit, arrows.meta, shapes}

\usetikzlibrary{
  arrows.meta, 
  fit, 
  positioning, 
}
\tikzset{
  neuron/.style={ 
    circle,draw,thick, 
    inner sep=0pt, 
    minimum size=3.5em, 
    node distance=1ex and 2em, 
  },
  group/.style={ 
    rectangle,draw,thick, 
    inner sep=0pt, 
  },
  io/.style={ 
    neuron, 
    fill=gray!15, 
  },
  conn/.style={ 
    -{Straight Barb[angle=60:2pt 3]}, 
    thick, 
  },
}
\numberwithin{equation}{section}

\lstdefinestyle{nnopinf}{
  language=Python,
  basicstyle=\ttfamily\small,
  keywordstyle=\color{blue!60!black},
  commentstyle=\color{green!50!black},
  stringstyle=\color{orange!70!black},
  backgroundcolor=\color{gray!10},
  frame=single,
  rulecolor=\color{gray!60},
  breaklines=true,
  showstringspaces=false
}

\newcommand{\nnopinf}{NN-OpInf}

\newcommand{\HTwo}{\mathrm{H}_2}
\newcommand{\OTwo}{\mathrm{O}_2}
\newcommand{\HTwoO}{\mathrm{H}_2\mathrm{O}}
\newcommand{\code}[1]{{\ttfamily\color{black} #1}}

\newcommand{\reducedOperatorLinear}{\hat{\mathbf{A}}}
\newcommand{\reducedOperatorQuadratic}{\hat{\mathbf{H}}}
\newcommand{\range}[1]{\mathrm{range}\left(#1\right)}

\newcommand{\nBfgsSteps}{n_{\mathrm{lbfgs}}}

\newcommand{\reducedLhs}{\dot{\hat{\state}}}
\newcommand{\reducedState}{\hat{\state}}
\newcommand{\reducedStateDot}{\dot{\hat{\state}}}

\newcommand{\reducedStateCoDomain}{\RR{\romDim}}

\newcommand{\EP}[1]{{\color{red}EP: #1}}
\definecolor{blue}{cmyk}{1.00, .34, .0, .02}

\newcommand{\reducedSkewMatrix}{\hat{\boldsymbol S}}

\newcommand{\reducedOperatorConstant}{\hat{\mathbf{c}}}

\newcommand{\snapshotsLhs}{\dot{\mathbf{X}}}
\newcommand{\reducedSnapshotsLhs}{\dot{\hat{\snapshotsState}}}
\newcommand{\penaltyFunction}{p}
\newcommand{\penaltyParameter}{\lambda}
\newcommand{\dt}{\Delta t}
\newcommand{\nOperators}{M}
\newcommand{\reducedVelocityNN}{\hat{\boldsymbol g}}
\newcommand{\networkInputs}{\boldsymbol \eta}

\newcommand{\nHidden}{n_{\mathrm{h}}}
\newcommand{\nNeurons}{n_{\mathrm{n}}}
\newcommand{\nOpsActivation}{n_{o,a}}
\newcommand{\ops}{\hat{\boldsymbol O}}

\newcommand{\reducedStiffnessMatrixNN}{\hat{\boldsymbol A}}
\newcommand{\reducedSpdMatrixNN}{\hat{\boldsymbol K}}
\newcommand{\reducedSpdMatrixLowerNN}{\hat{\boldsymbol L}}

\newcommand{\reducedSkewMatrixLowerNN}{\hat{\boldsymbol S}}

\newcommand{\weights}{\mathbf{w}}

\newcommand{\reducedSpdMatrixLower}{\hat{\boldsymbol L}}

\newcommand{\nt}{N_t}
\newcommand{\sqr}[1]{\mathrm{sqr}\big(#1 \big)}
\newcommand{\snapshotsState}{\mathbf{X}}

\newcommand{\reducedSnapshotsState}{\hat{\mathbf{X}}}

\newcommand{\x}{\boldsymbol x}

\newcommand{\mass}{\boldsymbol M}

\newcommand{\reducedVelocity}{\skew{4}\hat{\velocity}}

\newcommand{\defeq}{:=}

\newcommand{\state}{\boldsymbol x}

\newcommand{\stateDummy}{\boldsymbol y}

\newcommand{\approxState}{\tilde{\state}}

\newcommand{\refState}{\state_{\text{ref}}}
\newcommand{\romTrialSpace}{\mathcal{V}}
\newcommand{\romBasis}{\mathbf{V}}

\newcommand{\stateDot}{\dot{\state}}
\newcommand{\velocity}{\boldsymbol f}

\newcommand{\params}{\boldsymbol \mu}
\newcommand{\paramDomain}{\mathcal{D}}
\newcommand{\paramDomainTrain}{\paramDomain_{\mathrm{train}}}
\newcommand{\paramDomainTest}{\paramDomain_{\mathrm{test}}}

\newcommand{\timeDomain}{[0,T]}
\newcommand{\spatialDomain}{\Omega}
\newcommand{\RR}[1]{\mathbb{R}^{#1}}
\newcommand{\fomDim}{N}
\newcommand{\romDim}{K}

\newcommand{\stateCoDomain}{\RR{\fomDim}}
\newcommand{\nParams}{N_{\params}}
\newcommand{\paramFullDomain}{\RR{\nParams}}

\newcommand{\reducedStateDummy}{\hat{\stateDummy}}

\newcommand{\nTrain}{N_{\text{train}}}

\begin{document}
\begin{frontmatter}

\title{NN-OpInf: an operator inference approach using structure-preserving composable neural networks}
\author[a]{Eric Parish}
\ead{ejparis@sandia.gov}
\author[b]{Anthony Gruber}
\author[a]{Patrick Blonigan}
\author[a]{Irina Tezaur}

\address[a]{Sandia National Laboratories, Livermore, CA, USA}
\address[b]{Sandia National Laboratories, Albuquerque, NM, USA}

\begin{abstract}
We propose neural network operator inference (NN-OpInf): a structure-preserving, composable, and minimally restrictive operator inference framework for the non-intrusive reduced-order modeling of dynamical systems. The approach learns latent dynamics from snapshot data, enforcing local operator structure such as skew-symmetry, (semi-)positive definiteness, and gradient preservation, while also reflecting complex dynamics by supporting additive compositions of heterogeneous operators. We present practical training strategies and analyze computational costs relative to linear and quadratic polynomial OpInf (P-OpInf). Numerical experiments across several nonlinear and parametric problems demonstrate improved accuracy, stability, and robustness over P-OpInf and prior NN-ROM formulations, particularly when the dynamics are not well represented by polynomial models.  These results suggest that NN-OpInf can serve as an effective drop-in replacement for P-OpInf when the dynamics to be modeled contain non-polynomial nonlinearities, offering potential gains in accuracy and out-of-distribution performance at the expense of higher training computational costs and a more difficult, non-convex learning problem.
\end{abstract}
\end{frontmatter}

\section{Introduction}
Data-driven model order reduction (MOR) remains an important tool for enabling many-query analyses of high-fidelity and high-cost computational simulations. Broadly speaking, such reduced-order models (ROMs) separate into two categories based on whether they require direct access to the high-fidelity full-order model (FOM) code: intrusive and non-intrusive. Intrusive reduced-order models (ROMs), such as Galerkin projection using proper orthogonal decomposition (POD-Galerkin) and least-squares Petrov--Galerkin (LSPG)~\cite{CaBaAn17}, directly project the FOM governing equations onto a ``maximally expressive'' reduced-subspace identified from data. Ideally, this projection process drastically reduces the computational cost in exchange for a small amount of approximation error.  While effective, these methods rely on access to the FOM operators, and their intrusive nature can be prohibitive for certain types of codes where this access is not practical. 

\subsection{Related work}

Non-intrusive ROMs address the aforementioned challenge by constructing ROMs using  ``snapshot'' data alone, i.e., instantaneous samples of representative FOM solutions at various parameter values. Non-intrusive model reduction methods typically have two steps. First, and similar to intrusive ROMs, a low-dimensional subspace is identified from high-fidelity snapshot data via, e.g., the POD~\cite{BeHoLu93}. Second, the snapshots are projected onto this low-dimensional subspace and a model for the latent dynamics is learned via a regression procedure. The choice of latent space dynamics model to be learned plays a critical role in the accuracy of a non-intrusive ROM. Polynomial Operator inference (P-OpInf or simply OpInf)~\cite{PeWi16}, a particularly popular non-intrusive model reduction technique, enforces these dynamics to be polynomial in the reduced state. This structure is motivated by the observation that many models arising in computational physics can be written in a polynomial form; examples include linear elasticity, the shallow water equations, and the Euler equations. Imposing polynomial structure on the latent space dynamics simplifies the inference process to a convex least-squares problem and further enables guarantees on the inferred operators under certain conditions.  For example, in the case of a polynomial-based FOM, the P-OpInf ROM can exactly recover the intrusive Galerkin ROM via targeted snapshot collection procedures such as ``re-projection''~\cite{Pe20} and ``exact'' OpInf~\cite{RoSaSt25}. 

A myriad of extensions to P-OpInf have been proposed since~\cite{PeWi16}. The so-called ``Lift and learn''~\cite{QiKrPe20} approach was introduced as a systematic framework for learning ROMs in augmented state spaces to mitigate non-polynomial non-linearities. P-OpInf was formulated for partial differential equations in~\cite{QiFaIo22} from a function space viewpoint. Structure-preserving P-OpInf approaches were proposed for Hamiltonian systems~\cite{ShWaKr22,ViMcGr25,GrTe23,GrTe25}, Lagrangian systems~\cite{ShKa24,ShNaTo24},  energy-conserving systems~\cite{KoQi24,GkDuGo26}, and systems with gradient structure~\cite{GeSiJu24}.  
Recently, methods to couple subdomain-local P-OpInf ROMs with other subdomain-local P-OpInf ROMs and/or full-order models to improve predictive accuracy and robustness have been explored \cite{TePaGr26, Moore:2024, Rodriguez:2025, Gkimisis:2025, Farcas:2024}.
While the original P-OpInf relied on interpolation to handle parametric dependence, more efficient extensions to ``affine'' systems have also been proposed~\cite{McKhWi23,ViMcGr25}.  Under the affine parametric assumption, FOM dynamics are linear in a known nonlinear function of the parameters, leading to effective P-OpInf ROMs which retain convexity of the learning problem.

While P-OpInf has seen success, many systems of practical interest do not admit a polynomial model form. Common examples include finite deformation solid mechanics, the Reynolds-averaged Navier--Stokes equations, and fluid flow with reacting gases. While lifting procedures~\cite{QiKrPe20} provide an approach to circumvent this issue in certain situations, an exact transformation does not always exist, or it may be infeasible to derive. Non-affine parametric dependence is also common in the form of exponential or rational nonlinearities involving both states and parameters. In these circumstances, P-OpInf methods with affine parametric dependence have limited accuracy due to their strong inductive bias, and it is desirable to have a higher capacity inference method for learning latent space dynamics. To that end, many works have replaced the global polynomial approximation in P-OpInf with a kernel surrogate or artificial neural network.  

For example, San et al.~\cite{SaMaAh19} examined a neural-network-based surrogate modeling approach similar to operator inference and demonstrated it on Burgers' equation. The approach employed POD for dimension reduction and fully connected neural networks to learn the one-step dynamics mapping between POD coefficients supplemented with problem-relevant parameters; the time coordinate and Reynolds number were used for the viscous Burgers' equation example. Two network architectures were considered in \cite{SaMaAh19}: a ``sequential net'' that directly targeted the POD coefficients at the next step, and a ``residual net'' that targeted the difference between the POD coefficients from one step to the next, which can be viewed as an explicit Euler update.  This is related to the approach taken in \cite{FrDeMa21}, which uses two deep neural networks in sequence: an encoder-decoder to produce an informative latent space, followed by a dynamics predictor which handles future latent state prediction.  Since this procedure is highly sensitive to the network architecture used, a comparison of various encoder-decoder architectures was presented in \cite{GrGuJu22}.
These works demonstrated that residual-based neural networks can lead to ROMs capable of out-performing Galerkin projection. However, comparisons were not made with polynomial-based OpInf~\cite{PeWi16}, which can also outperform Galerkin projection in certain circumstances. Furthermore, these approaches did not embed any physical structure within the ROM.

A similar neural-network-based effort was proposed in Gao et al.~\cite{GaWaZh20}, where neural networks were used to learn a right-hand side operator with the goal of bypassing hyper-reduction in intrusive model reduction
Unlike the works~\cite{SaMaAh19,FrDeMa21}, which learn the mapping from one time step to another, Gao et al. directly target learning the right-hand side ``velocity'' operator for the reduced model. POD is again used for dimension reduction and training data for the neural network comprises snapshot data of the right-hand side operator evaluated about the FOM solution projected onto the POD subspace. The proposed approach was demonstrated on two different problems: Burgers' equation with non-affine parametric dependence and a nonlinear convection-diffusion-reaction system based on a premixed $H_2$-air flame model. Parametric variation and convergence with basis dimension were considered. Results showed that the neural-network formulation tended to outperform hyper-reduced intrusive ROMs based on the discrete empirical interpolation method (DEIM)~\cite{ChSo10}, particularly for lower basis dimensions. Gao et al. further observed that the NN-ROM formulation did not exhibit ``deep convergence'' for large numbers of basis vectors and stagnated at relative errors of around $1\%$; the NN-ROMs became an order of magnitude less accurate than DEIM-based ROMs and several orders of magnitude less accurate than non-hyper-reduced ROMs for higher basis dimensions. Comparisons were not made to polynomial-based operator inference approaches. To remedy some of these deficiencies, the related work~\cite{DiNeTe25} considered replacing the right-hand velocity operator with a kernel surrogate instead of a neural network, demonstrating that this class of approximators compares favorably to P-OpInf in various non-polynomial cases of interest.

Neural ODEs~\cite{ChRuBe19} (NODEs) comprise another body of work similar to OpInf. In NODEs, a neural network is trained to learn the right-hand side of a dynamical system. Unlike P-OpInf, however, the network is typically trained on state data using the backpropagation of gradient information through time. This training approach can improve the reliability of the network, but drastically increases training cost. Parameterized NODEs, which learn a parameterized right-hand side, were considered in~\cite{LePa21}. We additionally note that Ref.~\cite{WaHeRa19} considers traditional P-OpInf models learned using backpropagation through time (referred to as rollouts), which showed an improved training accuracy.  The flexibility of neural networks has also enabled many fruitful techniques for physical structure-preservation, including Hamiltonian neural networks and their generalizations~\cite{GrDzYo19,ElGaHu24,DuAlSt24}, Lagrangian neural networks~\cite{CrGrHo20,FiWaWi20}, and metriplectic neural networks~\cite{LeTrSt21,GrLeTr23,GrLeLi25}, to name a few.

Numerous other variations on these themes exist in the literature. 
Park et al.~\cite{PaChCh24} propose the ``thermodynamics-informed latent space dynamics identification'' (tLaSDI) approach, based on their previous LaSDI approach of the same name~\cite{FrHeCh22}. In tLaSDI, nonlinear dimension reduction is used to identify the latent states, and a GENERIC formalism informed neural network~\cite{ZhShKa22} (GFINN) is used to construct a thermodynamics-preserving latent dynamics surrogate. Hesthaven and Wang et al.~\cite{HeUb18,WaHeRa19} propose a neural-network-based non-intrusive model reduction approach for both steady~\cite{HeUb18} and transient systems~\cite{WaHeRa19} with parameterizations. Their approach employs POD for dimension reduction and designs a feed-forward-neural network that maps input parameters (and time in the case of transient systems) to the target reduced coefficients. 

\subsection{Our contributions}

This work introduces a structure-preserving, composable, minimally restrictive, and non-intrusive framework for the model reduction of dynamical systems.
The proposed neural network operator inference (NN-OpInf) approach is motivated by fundamental limitations of existing operator inference methodologies and advances the state of the art in several ways.

First, as previously discussed, many systems of practical interest do not admit a polynomial operator structure, limiting the utility of traditional P-OpInf approaches. To address this limitation, 
NN-OpInf uses generic feed-forward neural networks capable of representing general nonlinear operators. While prior studies have also explored NN-based operator inference~\cite{SaMaAh19,GaWaZh20}, this work is distinguished in its modular approach: we introduce novel 
neural architectures with prescribed algebraic structure, which can be additively composed to improve learning stability and expressiveness. 

Second, discretizations of many governing equations yield operators with intrinsic mathematical structure, such as skew-symmetry, (symmetric) positive definiteness, and the property of being an exact derivative (gradient structure).  This implicitly encodes physical consequences including dynamical stability, conservation laws, and dissipation inequalities.  Since preserving this structure in ROMs can be essential for robustness, interpretability, and accuracy, we tailor NN-OpInf to structure-preserving discretizations.  Besides offering generic vector- and matrix-valued surrogates, we implement skew-symmetric, symmetric positive-definite, and gradient operators, enabling NN-OpInf ROMs that explicitly respect these key properties by construction.

Third, we emphasize modularity: while individual operators in a full-order model (FOM) may exhibit distinguished algebraic structure, the fully discretized system rarely admits a global physical characterization in terms of this structure. For example, hyper-elastic solid mechanics models typically combine symmetric positive definite stiffness operators with unstructured forcing and boundary condition operators, precluding monolithic structure-preserving formulations based on, e.g., global Hamiltonian dynamics. To overcome this limitation, we propose a composable operator inference paradigm in which the NN-OpInf reduced model is expressed as an additive sum of learned operators. Each operator can enforce its own distinct mathematical structure, accept different inputs, and employ a tailored neural architecture. This composability enables flexible yet principled reduced-order modeling of complex systems that would otherwise be inaccessible to existing approaches.

Fourth, to support the development, training, and deployment of NN-OpInf models, we developed \code{NN-OpInf}~\cite{nnopinf}, an open-source software package for constructing, training, and deploying composable, structure-preserving neural-network operator inference models. The package provides a modular API built around variables, operators, and models, enabling users to mix and match heterogeneous operators in a single ROM. We provide utilities for data handling, loss construction, and training.  We hope that this software will encourage further work in this area.

Lastly, we provide a comprehensive assessment of the accuracy of the proposed NN-OpInf approach on a range of nonlinear and parametric problems. We assess convergence with basis dimensions and predictive accuracy within both reproductive and predictive regimes. We compare NN-OpInf with both intrusive ROMs and P-OpInf models; to our knowledge, we provide the first systematic comparison between neural-network-based and polynomial-based operator inference methods across multiple problem classes.

The structure of this paper is as follows. Section~\ref{sec:fom} introduces the full-order model, notation, and problem setup. Section~\ref{sec:modelreduction} reviews projection-based ROMs, summarizes polynomial OpInf, and introduces a baseline neural-network OpInf formulation. Section~\ref{sec:nnopinf} presents the proposed NN-OpInf framework, including structured operators and training considerations. Section~\ref{sec:analysis} analyzes computational costs and problem convexity. Section~\ref{sec:results} reports numerical experiments across a range of nonlinear and parametric problems. Finally, Section~\ref{sec:conclusions} concludes and outlines perspectives.

\section{Full-order model}\label{sec:fom}
We consider non-intrusive reduced-order modeling of first- and second-order dynamical systems in this work. For simplicity of presentation, the approach is now presented in the context of the first-order system of ordinary differential equations (ODEs) 
\begin{equation}\label{eq:fom_ode}
\begin{split}
\dot{\state} \left(t,\params\right) = \velocity(\state(t,\params),\params),\\
\end{split}
\end{equation}
supplemented with appropriate initial conditions $\state_0(\params) \coloneqq \state(0,\params)$. In the above, $\state: \timeDomain \times \paramDomain \rightarrow \stateCoDomain$ is the state, $\params \in \paramDomain \subset \paramFullDomain$ are system parameters, $t \in [0,T]$ is the time coordinate, and $\velocity:  \RR{\fomDim} \times \paramDomain \rightarrow \RR{\fomDim}$ is the right-hand side operator. We use the notation $\dot{\state} = \frac{d}{dt} \state$. The FOM  \eqref{eq:fom_ode} is assumed to arise, for example, from the semidiscretization of a system of partial differential equations (PDEs), so that $N \gg 0$ is large and \eqref{eq:fom_ode} is expensive to simulate. While we do not explicitly consider the case where the operator $\velocity$ has exogenous inputs, this setting can be addressed with straightforward modifications to what follows.
\section{Model reduction}\label{sec:modelreduction}
Our objective is to develop a ROM for the FOM~\eqref{eq:fom_ode} of the form
\begin{equation}\label{eq:rom_ode}
\begin{split}
\dot{\reducedState}\left(t,\params \right) =& \reducedVelocity(\reducedState(t,\params),\params),\\
\end{split}
\end{equation}
where $\reducedState: \timeDomain \times \paramDomain \rightarrow \reducedStateCoDomain$ is the reduced state of dimension $K\ll N$ and $\reducedVelocity: \RR{\romDim} \times \paramDomain  \rightarrow \RR{\romDim}$ is the ``reduced'' velocity.
Classic model reduction methods follow an offline-online paradigm, where comparatively expensive operations offline are used to dramatically reduce simulation costs online. In the offline stage, a low-dimensional trial space for the state is constructed from full-order simulation data, i.e., snapshots of $\state$ at various time and parameter instances. This trial subspace is then used to build a ROM via projection (e.g., an intrusive Galerkin ROM) or inference (e.g., an OpInf ROM). In the online stage, the resulting reduced-order model is queried to generate fast approximate solutions for new testing configurations.
\subsection{Reduced basis approximation}
Many popular ROM methods generate approximate solutions $\approxState(t,\params) \approx \state(t,\params)$ to the FOM ODE~\eqref{eq:fom_ode} by restricting the state at time $t \in \timeDomain$ to a trial subspace $\romTrialSpace$ of dimension $\romDim \ll \fomDim$\footnote{We note the use of an affine trial subspace is also common.}.  These so-called reduced basis methods often consider an approximation $\approxState \in  \romTrialSpace \subset \RR{\fomDim}$ to the full-order state, where $\text{dim}(\romTrialSpace) = \romDim$. The most common approach for constructing $\romTrialSpace$ is POD~\cite{BeHoLu93}, which assumes access to a set of snapshots corresponding to realizations of the state at $\nt$ time instances for $\nTrain$ parameter realizations.  Collecting these data into a global snapshot matrix,
\begin{equation}\label{eq:state_snapshots}
\snapshotsState = \left[ \snapshotsState^{\params_1} , \cdots, \snapshotsState^{\params_{\nTrain}} \right] \in \RR{\fomDim \times \nt \nTrain},
\end{equation}
where
$$\snapshotsState^{\params_i} = \left[ \state(t_1;\params_i), \cdots, \state(t_{\nt};\params_{i}) \right] \in \RR{\fomDim \times \nt},$$
POD identifies a $\romDim$-dimensional trial subspace $\romTrialSpace = {\rm span}\,\romBasis$ described by an orthonormal basis matrix $\romBasis\in\mathbb{R}^{\fomDim\times\romDim}$ that minimizes the projection error,
$$\underset{\romBasis^\intercal \romBasis = \mathbf{I} }{\mathrm{minimize} }\,\, \big|  \snapshotsState - \romBasis \romBasis^\intercal  \snapshotsState  \big|_F^2,$$
where $|\cdot|_F$ denotes the Frobenius matrix norm.
This optimization problem is classical and can be effectively solved via the singular value decomposition of the snapshot matrix $\snapshotsState$, 
yielding the POD approximation
\begin{equation}\label{eq:sdrApproximation}
\romBasis \reducedState(t,\params) = \approxState(t,\params) \approx \state(t,\params),
\end{equation}
where $\reducedState : \timeDomain \times \paramDomain \rightarrow \RR{\romDim}$ is the reduced state, i.e., the coordinates of the approximation $\approxState$ in the basis $\romBasis$.

\subsection{Intrusive Galerkin projection and motivation for operator inference}
Given a reduced basis, many common intrusive model reduction approaches employ Galerkin projection. The Galerkin ROM is constructed by (1) inserting the approximation~\eqref{eq:sdrApproximation} into the FOM ODE~\eqref{eq:fom_ode} and (2) restricting the residual to be orthogonal to the spatial trial subspace $\romBasis$. This process yields the reduced system
\begin{equation}\label{eq:galerkinRom}
\reducedStateDot(t,\params) = \romBasis^\intercal \velocity\big(\romBasis \reducedState(t,\params),\params\big). 
\end{equation}
This process results in a $\romDim$-dimensional system that is variationally consistent\footnote{i.e., projection of the FOM solution satisfies the ROM equations} with the original FOM. We refer to the system~\eqref{eq:galerkinRom} as the Galerkin ROM.

While the Galerkin ROM is effective in numerous circumstances, it has several drawbacks that motivate OpInf approaches. First, it is intrusive: simulation of $\eqref{eq:galerkinRom}$ requires access to the FOM operator $\velocity$, which is often impractical for, e.g., legacy or commercial codes. Second, the reduction in system dimension enabled by Galerkin projection does not always yield a reduction in computational cost, particularly in non-polynomially nonlinear systems. Specifically, computing the right-hand side of the Galerkin ROM~\eqref{eq:galerkinRom} requires a series of steps.  First, evaluation of the $\fomDim$-dimensional state via the basis matrix $\romBasis$, reduced state $\reducedState$, and the reference state $\refState$. Next, computation of the $\fomDim$-dimensional velocity operator $\velocity$. Finally, computation of the matrix-vector product $\romBasis^\intercal \velocity$. Unfortunately, all of these operations scale with FOM dimension $N$ and are computational bottlenecks for the ROM. Hyper-reduction techniques~\cite{BARRAULT2004667,ChSo10,YANO20191104,ecsw,Carlberg:2011, CarlbergGnat} have been devised to address these bottlenecks, but typically result in less accurate ROMs due to their use of additional (potentially variationally inconsistent) approximations in the governing equations. On the other hand, non-intrusive model reduction approaches can address both of these issues by pre-supposing a scalable model form and fitting the constituent operators to data. Note that such non-intrusive approaches can remain advantageous even when an alternative intrusive approach is available, due to their ability to bypass hyper-reduction~\cite{GaWaZh20}. 

\subsubsection{Operator inference}
Non-intrusive ROMs circumvent the aforementioned challenges by constructing a ROM from snapshot data alone. Arguably, the most popular non-intrusive ROMs are based on P-OpInf \cite{PeWi16}, which is a three-step procedure.  First, a reduced basis is constructed as in intrusive model reduction.  Then,  an ansatz (or ersatz) for the functional form of the ROM is made in terms of learnable operators.  Finally, an inference problem for these operators is solved to obtain a non-intrusive ROM. Assuming a quadratic functional form for the state dynamics, a generic P-OpInf ROM takes the form
\begin{equation}\label{eq:opinf_quadratic}
\dot \reducedState(t,\params)= \reducedOperatorLinear(\params) \reducedState(t,\params) + \reducedOperatorQuadratic(\params)\, \sqr{\reducedState(t,\params)} + \reducedOperatorConstant(\params) 
\end{equation}
where sqr encodes the $s = K(K+1)/2$ unique terms in the symmetric Kronecker square of its argument:
$$\sqr{\reducedStateDummy} = \left[ \reducedStateDummy_1 \reducedStateDummy_1, \reducedStateDummy_2 \reducedStateDummy_1, \reducedStateDummy_2 \reducedStateDummy_2, \reducedStateDummy_3 \reducedStateDummy_1, \reducedStateDummy_3 \reducedStateDummy_2 ,\reducedStateDummy_3 \reducedStateDummy_3,\reducedStateDummy_4 \reducedStateDummy_1,\cdots, \reducedStateDummy_N \reducedStateDummy_N \right]^\intercal \in \RR{s}.$$
Here, $\reducedOperatorLinear : \paramDomain \rightarrow \RR{\romDim \times \romDim}$ is a reduced matrix modeling the linear dependence of the dynamics on the state, $\reducedOperatorQuadratic : \paramDomain \rightarrow \RR{\romDim \times s}$ models quadratic dependence of the dynamics on the state, and $\reducedOperatorConstant: \paramDomain \rightarrow \RR{\romDim}$ models a potentially parameter-dependent constant forcing. Note that higher-order polynomial nonlinearities are applicable but omitted here for brevity.

Operator inference requires training data in the form of state snapshots, along with a dictionary of parameters and time coordinates associated to them. Accommodating parametric variation in the polynomial operators is not trivial but can be handled in several ways. In the initial work~\cite{PeWi16}, parametric dependence was handled by solving a P-OpInf problem for each parameter realization and forming an interpolant of the learned operators. In subsequent years, different approaches have been proposed for handling ``affine'' parametric dependence~\cite{McKhWi23,GoBeKa21,ViMcGr25}, where the operators depend linearly on a known nonlinear function of the parameters. For simplicity, the present work considers the former interpolatory case. To this end, along with the $\nTrain$ snapshot matrices 
given in Eq.~\eqref{eq:state_snapshots}, we assume access to the associated parameter realizations
$\left[\params_1,\ldots,\params_{\nTrain} \right]$,
as well as to (possibly approximate) snapshot data of the state velocity at these same temporal and parametric indices, 
$$\dot{\snapshotsState}^{\params_i} \approx  \left[ \dot{\state}(t_1;\params_i), \cdots, \dot{\state}(t_{\nt};\params_{i}) \right] \in \RR{\fomDim \times \nt}, \qquad i=1,\ldots,\nTrain.$$
In practice, the approximate derivatives $\dot{\snapshotsState}^{\params_i}$ are often obtained by numerically differentiating the states in $\snapshotsState^{\params_i}$, but in some cases they may be directly available from the high-fidelity FOM solver.
 
With these data, P-OpInf is a two-step procedure for inferring the polynomial operators in \eqref{eq:opinf_quadratic}.  First, the training snapshot matrices are projected into the reduced space, i.e., for $i=1,\ldots,\nTrain$, 
 $$\reducedSnapshotsState^{\params_i} = \romBasis^\intercal \snapshotsState^{\params_i} \in \RR{\romDim \times \nt}, \qquad \reducedSnapshotsLhs^{\params_i} = \romBasis^\intercal \snapshotsLhs^{\params_i} \in \RR{\romDim \times \nt}.$$
Then, $\nTrain$ optimization problems are solved to obtain the reduced operators $\ops^{\params_i} = \{\reducedOperatorConstant^{\params_i},\reducedOperatorLinear^{\params_i},\reducedOperatorQuadratic^{\params_i}\}$ for each parameter realization $\params_i$, $i=1,\ldots,\nTrain$,
\begin{equation}\label{eq:opinf_optimization}
\underset{\ops^{\params_i}}{\mathrm{minimize}}\, \sum_{j=1}^{\nt}\Big\| \reducedOperatorLinear^{\params_i} \reducedSnapshotsState^{\params_i}_j  + \reducedOperatorQuadratic^{\params_i}\, \sqr{ \reducedSnapshotsState^{\params_i}_j } + \reducedOperatorConstant^{\params_i} - \reducedSnapshotsLhs^{\params_i}_j \Big\|_2^2 + \penaltyFunction \big(\penaltyParameter,\ops^{\params_i} \big),
\end{equation}
where we use $\reducedSnapshotsState^{\params_i}_j$ to denote the $j$th column of $\reducedSnapshotsState^{\params_i}$ and the same for $\reducedSnapshotsLhs^{\params_i}$. The term $\penaltyFunction(\penaltyParameter,\ops^{\params_i}) \in \RR{+}$ is a penalty function (e.g., Tikhonov regularization) depending on input parameter $\penaltyParameter \in \RR{+}$ that provides regularization and enhances stability.\footnote{We note that it is possible to have multiple regularization parameters in P-OpInf. This is particularly true for higher-order P-OpInf models, wherein you can have a different regularization parameter for each operator. We omit this here for brevity.} 
After solving the problems \eqref{eq:opinf_optimization}, global operators for $\reducedOperatorLinear$, $\reducedOperatorQuadratic$, and $\reducedOperatorConstant$ are formed via interpolation, and equation \eqref{eq:opinf_quadratic} is solved for new parameter instances $\params$ not seen during training. 

\subsection{Vanilla OpInf with neural networks}
While the P-OpInf just described is relatively fast to execute and has been effective as a data-driven ROM for numerous systems, its assumption of polynomial structure and interpolatory treatment of parametric dependence limits its expected accuracy in many cases of practical interest. Neural network-based OpInf approaches can, in theory, address this challenge. 

Neural networks employ repeated function composition to create nonlinear input--output maps, providing a straightforward approach to mitigate the limited capacity of P-OpInf for non-polynomial systems. A direct extension of the previous P-OpInf to this setting replaces the polynomial ansatz in~\eqref{eq:opinf_quadratic} with a neural network $(\reducedState,\params)\mapsto\reducedVelocity$ that maps reduced states (and, when relevant, parameters) to the reduced velocity. In this ``vanilla'' NN-OpInf formulation, the reduced dynamics are modeled as
\begin{equation}\label{eq:vanilla_nn_opinf}
\dot\reducedState(t,\params) = \reducedVelocityNN(\reducedState,\params;\weights),
\end{equation}
where $\reducedVelocityNN(\reducedState,\params;\weights)\coloneqq\reducedVelocity(\reducedState,\params)$ is a standard feed-forward neural network with trainable weights $\weights$. The model is trained by minimizing a regression loss between the network outputs 
at the given state data and the corresponding reduced snapshot 
of the time derivative. More precisely, the (optionally regularized) inference problem becomes
\begin{equation}\label{eq:vanilla_nn_opinf_opt}
\underset{\weights}{\mathrm{minimize}} \sum_{i=1}^{\nTrain}\sum_{j=1}^{\nt} \Big\| \reducedVelocityNN\big(\reducedSnapshotsState^{\params_i}_j,\params_i;\weights\big) - \reducedSnapshotsLhs^{\params_i}_j \Big\|_2^2 + \penaltyFunction(\penaltyParameter,\weights).
\end{equation}
This formulation provides substantially higher expressive power than P-OpInf, but has the drawback of yielding a generally non-convex optimization problem that can be difficult to solve. This vanilla NN-OpInf formulation aligns with the approach proposed in~\cite{GaWaZh20}, which employed deep neural networks for inference and 
highlighted that accuracy can stagnate for larger basis dimensions (limiting ``deep convergence'').  This  work did not compare the neural network approach to well-studied P-OpInf approaches. Thus, the comparative performance of NN-OpInf to P-OpInf in realistic ROM scenarios remains an unaddressed question.


\section{NN-OpInf: Neural-network operator inference}\label{sec:nnopinf}
Inspired by previous OpInf work, we propose NN-OpInf as a structure-preserving, composable alternative to P-OpInf for generic nonlinear problems.  Under this framework, the reduced right-hand side $\reducedVelocity$ is represented as a sum of operator blocks, each with a targeted structure and input dependence. While a myriad of works have investigated the use of neural networks for dynamics regression in contexts similar to OpInf~\cite{GaWaZh20,SaMaAh19}, such as the above ``vanilla'' NN-OpInf formulation, these methods typically model the right-hand side dynamics operator with a single neural network $\reducedVelocityNN$. This results in black-box models which are difficult to train, interpret, and analyze, resulting in inaccurate and potentially unstable models in practice.  

These challenges are addressed by our NN-OpInf framework for non-intrusive model reduction in the following ways. First, the universal approximation property of neural networks ensures that NN-OpInf is capable of approximating models with non-polynomial non-linearities. Second, NN-OpInf is modular and can produce operators with key structural properties including symmetry and definiteness, leading to the preservation of important physical invariants (e.g., global conservation laws) in the ROM.  We will show that this embedded structure and flexibility results in ROMs that are easier to train and  more accurate and robust than past NN-based efforts. 

The NN-OpInf framework has several distinguishing factors. Similar to P-OpInf, we make an ansatz that the reduced velocity $\reducedVelocity$ is described by a summation of independent terms, 
\begin{equation}\label{eq:composable_nn}
\begin{split}
\reducedLhs(t,\params) &= \sum_{r=1}^{\nOperators} \reducedVelocityNN_r (\networkInputs_r ; \weights_r )  , \\
\end{split}
\end{equation}
where $\reducedVelocityNN_r (\cdot,\cdot) \in \reducedStateCoDomain$, $r=1,\ldots,\nOperators$ are operators with trainable parameters $\weights_r$ and inputs $\networkInputs_r \equiv \networkInputs_r(\reducedState,\params) \subseteq {\rm span}\left(\reducedState, \params \right)$ comprising some pre-specified combination of the reduced state $\reducedState$ and the parameters $\params$ depending on the problem to be modeled. However, we employ operators $\reducedVelocityNN_r$ that (i) are potentially parameterized to enforce a specific algebraic structure, e.g., skew-symmetry, (ii) can be linear or nonlinear in both the state and parameters, and (iii) can have different numbers of inputs and different levels of complexity (e.g., number of learnable parameters).  
Additional details of these parameterizations will be discussed in subsequent sections. At training time, the parameters $\{\weights_1,\ldots,\weights_M\}$ of~\eqref{eq:composable_nn} are inferred by solving a regularized minimization problem of the form
\begin{equation}\label{eq:nn_opinf_optimization}
\underset{ \weights_1, \ldots, \weights_\nOperators}{\mathrm{minimize}} \sum_{i=1}^{\nTrain} \sum_{j=1}^{\nt} \bigg\|  \sum_{r=1}^{\nOperators} \reducedVelocityNN_r \Big(\networkInputs_r\big( \reducedSnapshotsState^{\params_i}_j , \params_i \big); \weights_r \Big)  - \reducedSnapshotsLhs^{\params_i}_j \bigg\|_2^2 + \penaltyFunction \left(\penaltyParameter,\weights_1,\ldots,\weights_M \right).
\end{equation}
The modular structure~\eqref{eq:composable_nn} for the NN-OpInf ROM is motivated by the observation that many discretizations of physical systems result in right-hand side operators that are comprised of a summation of different terms, e.g., a diffusion term, an advection term, and a body forcing term. Furthermore, while the entire right-hand side dynamics operator often does not have any noteworthy algebraic structure, its constituent parts do.  For example, a diffusion operator is symmetric positive definite in the presence of appropriate boundary conditions. 

Of course, a clear drawback of the NN-OpInf formulation~\eqref{eq:composable_nn} is that the optimization problem~\eqref{eq:nn_opinf_optimization} is considerably more difficult to solve than the linear least-squares problem~\eqref{eq:opinf_optimization} associated with P-OpInf, since it can be nonlinear, is generally non-convex for nonlinear operators, and will almost certainly have no closed form solutions. 
Fortunately, advances in optimization algorithms, automatic differentiation, and software have made problems such as~\eqref{eq:nn_opinf_optimization} relatively tractable.
However, it is important to note that the additive decomposition~\eqref{eq:composable_nn} is not unique: different combinations of operator blocks can yield the same aggregate right-hand side,  complicating training and interpretability in the absence of regularization or additional structural constraints.  Despite this, we find that the NN-OpInf ROM \eqref{eq:composable_nn} can be effectively trained via \eqref{eq:nn_opinf_optimization} using off-the-shelf methods, leading to a simple and effective alternative to P-OpInf when this additional flexibility is desired.

\subsection{Operators}
A key enabling concept in NN-OpInf is the use of operators $\reducedVelocityNN_r$ that are parameterized to enforce specific algebraic properties. As a concrete example, consider a neural-network-based operator that is parameterized to be symmetric positive semi-definite (SPSD):
 \begin{equation}\label{eq:spd_nn}
\reducedState \mapsto \reducedVelocityNN( \networkInputs ; \weights )\reducedState, \qquad  \reducedVelocityNN( \networkInputs ; \weights ) = \reducedSpdMatrixLowerNN(\networkInputs;\weights) \reducedSpdMatrixLowerNN^\intercal(\networkInputs; \weights), 
\end{equation}
where $\reducedSpdMatrixLowerNN(\cdot)$ is a lower triangular matrix (diagonal included) that is the output from a feed forward neural network with learnable weights $\weights$.  By construction, the matrix $\reducedSpdMatrixNN =   \reducedSpdMatrixLowerNN \reducedSpdMatrixLowerNN^\intercal$ is (1) symmetric positive semi-definite and (2) is potentially nonlinear in both the state and parameters.  However, we note $\reducedSpdMatrixLowerNN$ is not unique: any orthogonal matrix $\boldsymbol{R}$ satisfying $\boldsymbol{R}^\intercal = \boldsymbol{R}^{-1}$ leads to the same operator $\reducedSpdMatrixNN = (\reducedSpdMatrixLowerNN\boldsymbol{R}) (\reducedSpdMatrixLowerNN\boldsymbol{R})^\intercal =\reducedSpdMatrixLowerNN \reducedSpdMatrixLowerNN^\intercal$.

Parameterizations like~\eqref{eq:spd_nn} can be used to construct a wide variety of linear and nonlinear operators. These operators can range in complexity from a linear operator that is affine in its input parameters to a deep neural network parameterization of a skew-symmetric matrix that is nonlinear in both the state and input parameters. Table~\ref{tab:operators} summarizes a subset of operators that we consider, including SPSD operators, skew-symmetric operators, and potential function operators.
\begin{table*}[t]
    \centering
    \small
    \caption{Subset of operators used in NN-OpInf. }
    \label{tab:operators}
    \begin{tabularx}{\textwidth}{@{}l l X@{}}
        \toprule
        \textbf{Name} & \textbf{Expression} & \textbf{Description} \\
        \midrule
        Standard  &
        $\mathcal{N}(\networkInputs;\weights)$ &
        Fully connected neural network mapping reduced inputs to the reduced velocity. This is the most flexible, least constrained operator and serves as the baseline NN-OpInf formulation in~\cite{GaWaZh20}. \\
        Matrix &
        $\reducedStiffnessMatrixNN(\networkInputs;\weights)\reducedState$ &
        A generic 
        matrix-valued network function acting on the reduced state. \\
        SPSD &
        $\reducedSpdMatrixLowerNN(\networkInputs;\weights)\reducedSpdMatrixLowerNN(\networkInputs;\weights)^\intercal\reducedState$ &
        A Cholesky-inspired product of lower-triangular matrix operators enforcing
        symmetry and positive (semi-) definiteness. 
        Designed to yield dissipative dynamics consistent with energy decay in homogeneous systems. \\
        Skew &
        $\big(\reducedSkewMatrixLowerNN(\networkInputs;\weights)-\reducedSkewMatrixLowerNN(\networkInputs;\weights)^\intercal\big)\reducedState$ &
        Strictly lower-triangular matrix operator (uniquely) enforcing skew-symmetry by construction, leading to energy-preserving dynamics in the absence of forcing. \\
        Vector &
        $\mathbf{c}(\params)\ \text{or}\ \mathbf{c}(\networkInputs;\weights)$ &
        Operator representing an additive offset or forcing term. Used to represent constant inputs or purely parametric forcing independent of the reduced state. \\
        SPSD Potential &
        $\begin{aligned}[t]&\nabla_{\reducedState}\mathcal{L}(\reducedState), \\ &\mathcal{L}(\reducedState)=\reducedState^\intercal \reducedStiffnessMatrixNN(\networkInputs;\weights)  \reducedState \\ &\reducedStiffnessMatrixNN(\networkInputs;\weights) =  \reducedSpdMatrixLowerNN(\networkInputs;\weights)\reducedSpdMatrixLowerNN(\networkInputs;\weights)^\intercal\end{aligned}$ &
        Gradient-based model computed with automatic differentiation and derived from a scalar potential built from an SPSD operator. The resulting dynamics can be used to enforce Lagrangian structure and guarantee the conservation of energy.   \\
        \bottomrule
    \end{tabularx}
\end{table*}

\subsection{Pragmatic approaches to increase robustness}
Neural network training problems such as \eqref{eq:nn_opinf_optimization} are  known to be highly dependent on the choice of hyper-parameters, data normalization, optimization strategy, etc. 
We have found these sensitivities can occasionally degrade the performance of the inferred models. 
Here, we outline several practical strategies to improve robustness. 

\subsubsection{Optimization approach}
In P-OpInf, the inference problem can be solved using linear least-squares methods due to the polynomial structure of the inferred operators. This is not the case for the present approach due to the use of nonlinear parameterizations, which render the resulting optimization non-convex.  Note that this can even be the case for linear operators when additional structure is enforced.  For example, inference for the Cholesky parameterization of an SPSD linear operator is not convex.

 
First-order stochastic gradient descent (SGD)-based approaches, such as ADAM optimization, offer standard algorithms for training neural-network models. SGD scales well with increasing data and has been empirically shown to enhance generalization performance and mitigate the effect of local minima when training NN models. 
However, 
stochastic first-order methods
neglect curvature information in the loss landscape and 
can encounter convergence problems;
past works have found it difficult to obtain ``deep'' convergence with OpInf for neural-network-based approaches for non-intrusive learning~\cite{GaWaZh20}. 
To partially mitigate the slow convergence and lack of scale-invariance associated with SGD, we employ a simple, pragmatic hybrid optimization approach that utilizes the L-BFGS and SGD implementations
readily available in PyTorch. As described in Algorithm~\ref{alg:hybrid}, the approach interleaves $\nBfgsSteps$ L-BFGS steps with $n_f$ steps of ADAM.
The motivation behind this alternating strategy 
is to use L-BFGS to quickly reach a local minimum, followed by ADAM to ``escape'' this minimum and
further drive down the loss. An additional advantage of this approach is its effectiveness for linear and mildly nonlinear model architectures, which can result in convex optimization problems with a global minimizer. We emphasize that, while we have observed this approach to be more robust than a uniform ADAM optimization, L-BFGS typically assumes full-batch gradient evaluations and enforces an SPD Hessian approximation, which can limit its effectiveness for large-scale, non-convex optimization problems. Investigation of more sophisticated optimization algorithms, such as stochastic SR1~\cite{LiLiDe26}, will be a topic of future work. 

\begin{algorithm}
    \caption{Training Algorithm with L-BFGS and Adam}
    \label{alg:hybrid}
    \begin{algorithmic}[1] 
        \STATE \textbf{Input:} Number of epochs \( n_{\text{epochs}} \), L-BFGS frequency \( n_f \), number of BFGS steps \(\nBfgsSteps \)
        \FOR{epoch = 0 to \( n_{\text{epochs}}\)}
            \IF{epoch \% $n_f$ = 0}
              \FOR{j = 1 to \( \nBfgsSteps\)}
                \STATE do\_lbfgs
              \ENDFOR
            \ENDIF
            \STATE do\_adam
        \ENDFOR
    \end{algorithmic}
\end{algorithm}

\subsubsection{Weight normalization}
To encourage convergence, we also employ a simple $\ell_2$ weight normalization (equivalently, a Tikhonov regularization) on the network parameters, which can be implemented directly as weight decay in the optimizer. In our experiments, this normalization has a relatively small impact on performance and robustness, especially compared to the much stronger sensitivity to normalization observed in standard P-OpInf.

\subsubsection{Data normalization}
For data-normalization we employ a max-abs normalization strategy for the state and its (approximate) time derivative. For the max-abs strategy, the inputs and response are normalized by their maximum absolute value across the training set. To describe this mathematically, let $\reducedSnapshotsState_{\mathrm{train}} = \romBasis^\intercal \snapshotsState_{\mathrm{train}}  \in \RR{\romDim \times n_{\mathrm{train}} }$ denote the 
projection of the training snapshots onto the span of the reduced basis, where $\snapshotsState_{\mathrm{train}}$ denotes the training data matrix obtained after a training/ validation split on the snapshot data $\reducedSnapshotsState$. These projected data are scaled by
$$\reducedSnapshotsState_{\mathrm{train}}^* = \frac{\reducedSnapshotsState_{\mathrm{train}} }{\max{\Big( \mathrm{abs} \big(  \reducedSnapshotsState_{\mathrm{train}}  \big) \Big) } }, $$
and the same scaling is applied for the target response (e.g., reduced velocity). 

As opposed to standard normalization, i.e., shifting the feature/response by its mean and scaling by its standard deviation, we have found the max-abs strategy to be preferable for several reasons.
First, a max-abs normalization strategy ensures that the algebraic structure imposed on the normalized operators is the same as the structure on the non-normalized operators.  Moreover, it does not suffer from common pitfalls of component-wise normalization.
Focusing on the normalization of the response, we note that component-wise normalization strategies disrupt the hierarchical structure of the POD coefficients and break the equivalence between norms in the reduced and full spaces. Specifically, for two vectors $\mathbf{x}, \mathbf{y} \in \range{\romBasis}$ and an orthonormal basis, it is straightforward to show 
the distance between the reduced  and full vectors is the same, i.e.,  
$$\|\mathbf{x} - \mathbf{y} \|_2 = \big\|\romBasis^\intercal \mathbf{x} - \romBasis^\intercal \mathbf{y} \big\|_2.$$
Employing a component-based normalization strategy disrupts this equivalence and limits a physical interpretation of the loss function. 

 
\subsubsection{Ensembling}
Lastly, we have found ensembling during NN training to be a simple and effective tool to further improve robustness and performance. In an ensemble formulation, multiple models are trained independently, and their predictions are averaged at inference time. To describe this, let $\{\reducedVelocityNN(\networkInputs;\weights^{(m)})\}_{m=1}^{L}$ denote $L$ independently trained NN-OpInf operators (e.g., different random initializations, minibatch orderings, training/validation splits). We define the ensemble prediction of the operator as the uniform average
\begin{equation}\label{eq:composable_nn_ensemble}
\begin{split}
\reducedVelocityNN(\networkInputs ; \weights) &= \frac{1}{L}\sum_{m=1}^L\reducedVelocityNN^{(m)} \big(\networkInputs ; \weights^{(m)} \big) .\\
\end{split}
\end{equation}
At inference time, we evaluate the reduced-order dynamics using the averaged right-hand side above. Note that ensemble averaging reduces variance and mitigates sensitivity to a particular training realization. Moreover, it is a simple linear transformation of the outputs, and is therefore compatible with any of the algebraic structures enforced by our network parameterizations.
In practice, we have found that even small ensembles ($M=2$) can substantially improve performance. We note that more sophisticated ensembling approaches are possible, including ones that yield uncertainty estimates~\cite{lakshminarayanan2017simplescalablepredictiveuncertainty}.

\subsection{Software implementation in NN-OpInf}
We developed an open-source Python software package, \code{NN-OpInf}~\cite{nnopinf}, to support training and deployment of the described models. The package is built on PyTorch and organized around three core abstractions---\emph{variables}, \emph{operators}, and \emph{models}---that provide a thin, composable layer over standard PyTorch modules.

The \emph{variable} class encapsulates the data and normalization strategy for each input and output quantity, such as the reduced state $\reducedState$, parameters $\params$, or reduced velocity $\reducedVelocity$. Variables store training snapshots and metadata (size and name), and expose a normalization strategy. During training, we perform a standard randomized train/validation split, apply the requested normalizer (e.g., max-abs or abs scaling), and return normalized tensors together with the scaling factors for inverse transforms. This is the mechanism used to normalize both inputs and outputs prior to solving the regression problems described in Section~\ref{sec:nnopinf}.
The \emph{operator} layer implements the structured building blocks introduced in Table~\ref{tab:operators}. Each operator is a PyTorch \code{nn.Module} with a \code{forward} method that consumes a dictionary of named inputs (e.g., \code{inputs['x']}, \code{inputs['mu']}) and returns the operator evaluation and potentially its Jacobian. 
The \emph{model} class is a lightweight container that aggregates a list of operators and evaluates their sum to produce the reduced right-hand side. The model also propagates input/output scalings to each operator and provides a utility for saving learned operators to disk. This design allows heterogeneous operator blocks (different inputs, architectures, and structures) to coexist in a single ROM.

Training is handled by a small set of utilities in \code{nnopinf.training}. The default settings include ADAM optimization with optional LBFGS acceleration, weight decay for $\ell_2$ regularization, learning-rate scheduling, and periodic checkpointing. The training loop constructs mini-batches from normalized data, evaluates the model forward pass, and minimizes a relative mean-square error objective consistent with~\eqref{eq:nn_opinf_optimization}. Gradients are computed via PyTorch's automatic differentiation, so backpropagation through the operator graph is handled automatically. This makes it straightforward to prototype, insert, and evaluate new operator parameterizations by defining a new \code{nn.Module} with a \code{forward} method. We additionally note that, while not considered in the present work, it is straightforward to modify the training framework to utilize backpropagation in time for dynamics-constrained training. Below is a short example using \code{nnopinf} to construct and train a model with an SPSD term and an additive forcing term:

\begin{lstlisting}[style=nnopinf]
import nnopinf
import nnopinf.operators as operators
import nnopinf.models as models
import nnopinf.training as training

x_var = nnopinf.Variable(size=rom_dim, name="x", normalization_strategy="MaxAbs")
mu_var = nnopinf.Variable(size=n_params, name="mu", normalization_strategy="MaxAbs")
y_var = nnopinf.Variable(size=rom_dim, name="y", normalization_strategy="MaxAbs")
x_var.set_data(x_snapshots)
mu_var.set_data(mu_snapshots)
y_var.set_data(xdot_snapshots)

spd_op = operators.SpdOperator(
    acts_on=x_var,
    depends_on=(x_var, mu_var),
    n_hidden_layers=2,
    n_neurons_per_layer=64,
    positive=True,
)
forcing_op = operators.VectorOffsetOperator(n_outputs=rom_dim)

model = models.Model([spd_op, forcing_op])
settings = training.get_default_settings()
training.train(model, variables=[x_var, mu_var], y=y_var, training_settings=settings)
\end{lstlisting}

\section{Analysis}\label{sec:analysis}
This section summarizes computational and optimization considerations for the proposed NN-OpInf framework. We first compare the dominant FLOP counts for structured operators against standard linear and quadratic OpInf models, and then consider training costs. A summary of these costs is provided in Table~\ref{tab:analysis_cost_summary}.  We conclude with a brief discussion of the convexity properties of NN-OpInf optimization problems. We consider the non-parametric case in what follows.

\subsection{Computational cost (online evaluation)}
We compare the dominant FLOP counts for evaluating different reduced operators, focusing on the dependence on the reduced dimension $\romDim$. 

\paragraph{Linear OpInf}
Evaluating a linear OpInf model requires a matrix--vector product $\mathbf{A}\reducedState$, with cost (neglecting linear-in-$K$ terms)
$$C_{\mathrm{linear}} \approx  2\romDim^2 .$$

\paragraph{Quadratic OpInf}
For the quadratic model~\eqref{eq:opinf_quadratic}, the quadratic term requires forming $\sqr{\reducedState}$ and multiplying by $\reducedOperatorQuadratic$. The resulting cost is
$$C_{\mathrm{quadratic}} \approx  \romDim^3 + \frac{3}{2} \romDim^2 ,$$
which scales cubically in $\romDim$.

\paragraph{Linear SPSD operator}
For an SPSD operator of the form $\skew{-4}\hat{\mathbf{L}}\skew{-4} \hat{\mathbf{L}}^\intercal \reducedState$, where $\skew{-4}\hat{\mathbf{L}}$ is lower triangular, the evaluation cost is two triangular matrix--vector products:
$$C_{\mathrm{SPD}} \approx \romDim^2,$$
since each triangular product costs $\romDim^2/2 + \mathcal{O}(\romDim)$ FLOPs. 
Note that this cost assumes the iterative application $\skew{-4} \hat{\mathbf{L}}(\skew{-4} \hat{\mathbf{L}}^\intercal \hat{\mathbf{x}})$, as opposed to explicitly forming 
$\hat{\mathbf{A}}=\skew{-4}\hat{\mathbf{L}}\skew{-4}\hat{\mathbf{L}}^T$ (which would be $\mathcal{O}(K^3)$).

\paragraph{Linear Skew operator}
For a skew-symmetric operator $\left(\hat{\mathbf{S}}-\hat{\mathbf{S}}^T\right)\reducedState$ with $\hat{\mathbf{S}}$ strictly lower triangular, the evaluation cost is also two triangular products:
$$C_{\mathrm{skew}} \approx \romDim^2.$$
As with the SPSD operator, this is quadratic in $\romDim$.

\paragraph{NN-OpInf added cost}
In NN-OpInf, the total cost includes the neural-network forward pass used to produce, e.g., $\skew{-4}\hat{\mathbf{L}}$ or $\hat{\mathbf{S}}$. For a fully connected network with $\nHidden$ hidden layers, $\nNeurons$ neurons per layer, and activation cost $\nOpsActivation$, the forward-pass FLOPs for input dimension $n_{\mathrm{in}}$ and output dimension $n_{\mathrm{out}}$ are
$$
C_{\mathrm{NN}} = 2 \nNeurons n_{\mathrm{in}} + \nOpsActivation \nNeurons + \nHidden \left(2 \nNeurons^2 + \nOpsActivation \nNeurons \right) + 2 \nNeurons n_{\mathrm{out}},
$$
assuming no activation on the output layer. The output dimension depends on the operator:
$$
n_{\mathrm{out}} \approx
\begin{cases}
\romDim & \text{standard (vector) operator},\\
\romDim^2 & \text{matrix operator},\\
\romDim(\romDim+1)/2 & \text{SPSD operator (lower-triangular output)},\\
\romDim(\romDim-1)/2 & \text{skew operator (strictly lower-triangular output)}.
\end{cases}
$$
The operator application then adds the quadratic-in-$\romDim$ terms above.
For the choice $\nNeurons=\romDim$, $\nHidden=3$, and $n_{\mathrm{in}}=\romDim$ used in most numerical examples later in this work, this simplifies to
$$
C_{\mathrm{NN}} = 8\romDim^2 + 4\nOpsActivation \romDim + 2\romDim n_{\mathrm{out}}.
$$
This cost is cubic in $K$ for the matrix, SPSD, and skew operators. The total cost of the NN OpInf operator then adds cost of the application of the operator to the state, as described above. We write this as
$$
C_{\mathrm{NN-OpInf}} = C_{\mathrm{NN}} + C_{\mathrm{apply}}, 
$$
where $C_{\mathrm{apply}}$ is, e.g., $K^2$ for an SPSD operator.
\paragraph{SPSD potential operator}
Lastly, for an SPSD potential operator defined by $\reducedVelocity=\nabla_{\reducedState}\mathcal{L}(\reducedState)$ with $\mathcal{L}(\reducedState)=\reducedState^T\hat{\mathbf{A}}(\reducedState)\reducedState$ and $\hat{\mathbf{A}}=\skew{-3}\hat{\mathbf{L}}\skew{-3}\hat{\mathbf{L}}^T$, the forward pass first evaluates $\skew{-3}\hat{\mathbf{L}}$ (via an NN) and applies two triangular products to form $\hat{\mathbf{A}}\reducedState$. The scalar potential then adds a single inner product. If $\hat{\mathbf{A}}$ is treated as fixed with respect to $\reducedState$, the gradient reduces to $2\hat{\mathbf{A}}\reducedState$ and the cost matches the SPSD operator, i.e., $C_{\mathrm{SPSD\text{-}Potential}} \approx \romDim^2$ (plus the NN cost to produce $\mathbf{L}$). When $\mathbf{A}$ depends on $\reducedState$, the full gradient includes additional terms from $\partial \hat{\mathbf{A}}/\partial \reducedState$. In practice this is computed with reverse-mode autodifferentiation. For standard dense layers in PyTorch, the backward pass typically costs about twice the forward pass. Let $C_{\mathrm{NN}}^{\mathrm{SPSD}}$ denote the NN cost for the SPSD operator (i.e., using $n_{\mathrm{out}}=\romDim(\romDim+1)/2$). A representative complexity is
$$
C_{\mathrm{SPSD\text{-}Potential}} \approx C_{\mathrm{NN}}^{\mathrm{SPSD}} + 2C_{\mathrm{NN}}^{\mathrm{SPSD}} + \mathcal{O}(\romDim^2)
$$
so $C_{\mathrm{SPSD\text{-}Lag}} \approx 3C_{\mathrm{NN}}^{\mathrm{SPSD}} + \mathcal{O}(\romDim^2)$.

Figure~\ref{fig:operator_cost_ratio} summarizes the resulting operator cost ratios relative to linear and quadratic OpInf as a function of $K$ for an assumed network structure ($\nHidden=3,\nNeurons=\romDim,$ and $\nOpsActivation=1$). We find that the NN-OpInf models are substantially more expensive than linear P-OpInf. When compared to quadratic P-OpInf, we find that the SPSD, Skew, and SPSD-Potential NN-OpInf operators all display similar scaling to quadratic OpInf. A standard dense neural network can be more efficient, as it avoids the $K^3$ scaling. We emphasize that, while NN-OpInf ROMs are more expensive than P-OpInf, they provide a more expressive modeling approach that can capture generic non-linearities. 

\begin{figure}
\begin{center}
\includegraphics[trim={0cm 0cm 0cm 0cm},clip,width=0.85\linewidth]{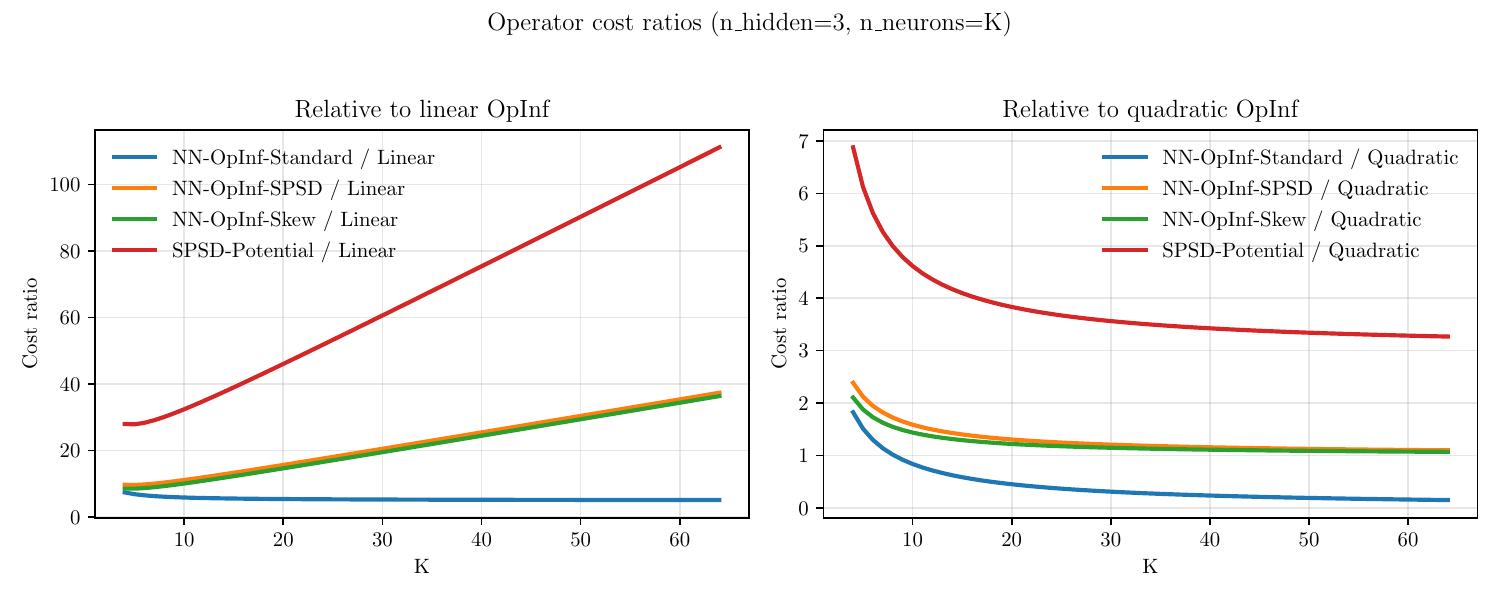}
\caption{Operator evaluation cost ratios relative to linear and quadratic OpInf baselines as a function of reduced dimension $K=\romDim$, assuming $\nHidden=3$, $\nNeurons=\romDim$, and $\nOpsActivation=1$ (RELU).}
\label{fig:operator_cost_ratio}
\end{center}
\end{figure}

\subsection{Computational cost (training)}
We now contrast the \emph{training} cost of NN-OpInf with P-OpInf. We do not include the costs of projecting data onto the reduced subspaces, as these costs are equivalent in both approaches. For NN-OpInf, we consider a model comprising only a single right-hand side operator, and do not account for extra cost of, e.g., hyper-parameter tuning in either method, which typically adds a constant multiplier to the total cost. Let $N_t N_{\text{train}}$ denote the total number of snapshot pairs used for training and $n_{\text{epochs}}$ the number of epochs.

\paragraph{P-OpInf (linear/quadratic)}
P-OpInf training reduces to a series of linear least-squares problems with $p$ unknown coefficients per output component.  For example, $p=\romDim$ for a linear operator while $p=\romDim(\romDim+1)/2$ for a quadratic operator. This inference problem is decoupled across output components, so the total cost is $K$ independent least-squares solves with the same design matrix but different right-hand sides. A representative scaling on the cost for a QR- or normal-equation-based least-squares solve is\footnote{We note that Ref.~\cite{PeWi16} approximates the cost as $\mathcal{O}(K N p^3)$, which is conservative.}
$$
C_{\text{train}}^{\text{OpInf}} = \mathcal{O}\!\left(K\left(N_t N_{\text{train}}\, p^2 + p^3\right)\right),
$$
where the first term corresponds to forming normal equations (or an equivalent QR factorization) and the second term to solving the resulting system. Note that this training process is one-shot: the cost is incurred only once. While the training cost is cheap for small $K$, we do note poor scaling with the ROM dimension, particularly for quadratic operators, in which case training goes as $\mathcal{O}(K^7)$.\footnote{This cost can be reduced to be $\mathcal{O}(K^6)$ through repeated factorizations of the data matrix.}

\paragraph{NN-OpInf}
NN-OpInf is trained with iterative gradient-based methods, and the cost is dominated by one forward pass and one backward pass. Using the forward-pass cost $C_{\mathrm{NN\text{-}OpInf}}$ (defined earlier as the sum of $C_{\mathrm{NN}}$ and $C_{\text{apply}}$) and the standard estimate that the backward pass is about twice the forward pass, a representative per-sample cost is $3C_{\mathrm{NN\text{-}OpInf}}$. Thus a representative total training cost is
$$
C_{\text{train}}^{\text{NN-OpInf}} \approx 3 N_t N_{\text{train}}\, n_{\text{epochs}} C_{\mathrm{NN\text{-}OpInf}}.
$$
Figure~\ref{fig:training_cost_ratio} summarizes the resulting training-cost ratios relative to linear and quadratic P-OpInf baselines as a function of $\romDim$, assuming $n_{\text{epochs}}=10{,}000$, $\nHidden=3$, $\nNeurons=\romDim$, and $N_t N_{\text{train}}=10{,}000$ snapshot pairs. We observe that NN-OpInf incurs orders of magnitude higher cost than P-OpInf for small basis dimensions. This increase in cost highlights one of the main tradeoffs of NN-OpInf: a potential increase in accuracy at the expense of an oftentimes sharp increase in offline training cost. We note that this increase in cost becomes more balanced with respect to quadratic P-OpInf as $K$ grows, due to the poor scaling noted above.

\begin{figure}
\begin{center}
\includegraphics[trim={0cm 0cm 0cm 0cm},clip,width=0.85\linewidth]{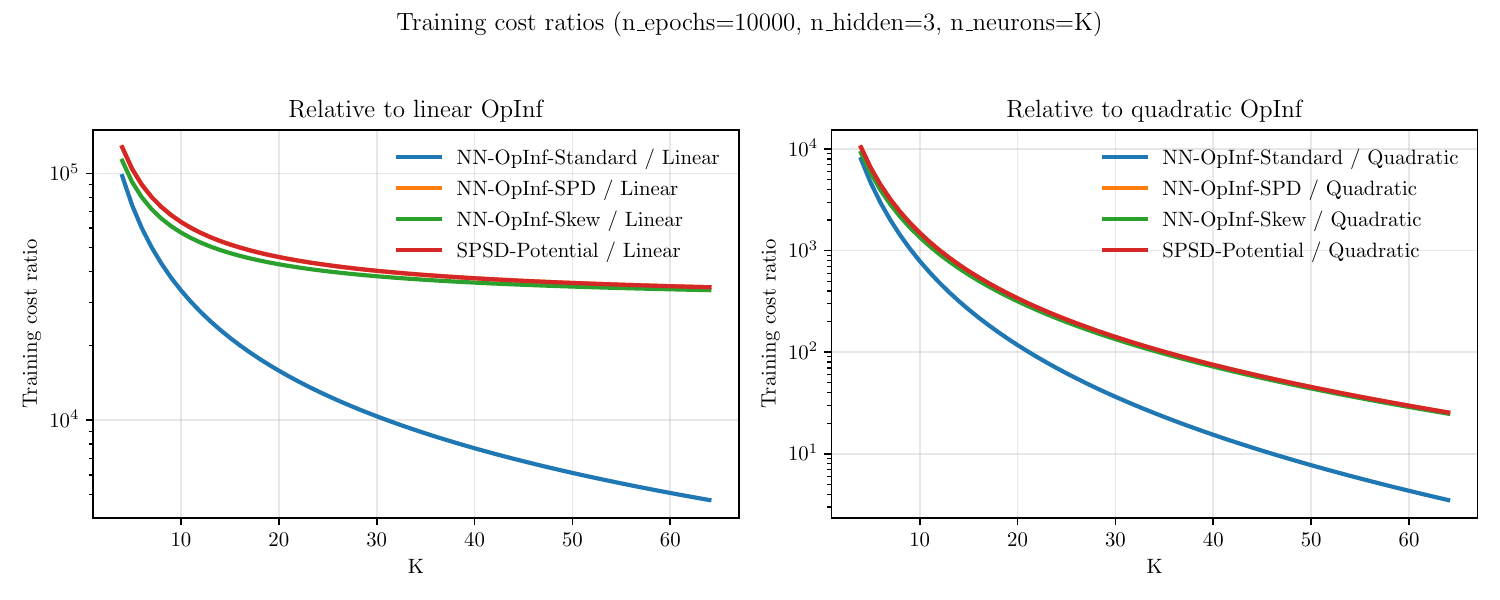}
\caption{Training cost ratios relative to linear and quadratic P-OpInf baselines as a function of reduced dimension $\romDim$, assuming $n_{\text{epochs}}=10{,}000$, $\nHidden=3$, $\nNeurons=\romDim$, and $N_t N_{\text{train}}=10{,}000$ snapshot pairs.}
\label{fig:training_cost_ratio}
\end{center}
\end{figure}


\begin{table}[t]
    \centering
    \caption{Summary of dominant computational costs in the analysis section. Here $N_s=N_tN_{\text{train}}$. For NN-operator costs, we assume $\nNeurons \propto K$, and $n_{\mathrm{in}}=K$.}
    \label{tab:analysis_cost_summary}
    \begin{tabularx}{\textwidth}{@{}l l X@{}}
        \toprule
        \textbf{Category} & \textbf{Model/operator} & \textbf{Dominant cost} \\
        \midrule
        Online evaluation & Linear OpInf & $\mathcal{O}(K^2)$ \\
        Online evaluation & Quadratic OpInf & $\mathcal{O}(K^3)$ \\
        Online evaluation & SPSD & $\mathcal{O}(K^2)$ \\
        Online evaluation & Skew & $\mathcal{O}(K^2)$ \\
        Online evaluation & Standard NN & $\mathcal{O}(K^2)$ \\
        Online evaluation & Matrix NN operator & $\mathcal{O}(K^3)$ \\
        Online evaluation & SPSD NN operator & $\mathcal{O}(K^3)$ \\
        Online evaluation & Skew NN operator & $\mathcal{O}(K^3)$ \\
        Online evaluation & SPSD potential & $\mathcal{O}(K^3)$ \\
        \midrule
        Training & P-OpInf (linear) & $\mathcal{O}\!\left(N_sK^3 + K^4\right)$ \\
        Training & P-OpInf (quadratic) & $\mathcal{O}\!\left(N_sK^5 + K^7\right)$ \\
        Training & NN-OpInf (Standard NN) & $\mathcal{O}\!\left(N_s\,n_{\text{epochs}}\,K^2\right)$ \\
        Training & NN-OpInf (Matrix NN operator) & $\mathcal{O}\!\left(N_s\,n_{\text{epochs}}K^3\right)$ \\
        Training & NN-OpInf (SPSD NN operator) & $\mathcal{O}\!\left(N_s\,n_{\text{epochs}}K^3\right)$ \\
        Training & NN-OpInf (Skew NN operator) & $\mathcal{O}\!\left(N_s\,n_{\text{epochs}}K^3\right)$ \\
        Training & NN-OpInf (SPSD potential) & $\mathcal{O}\!\left(N_s\,n_{\text{epochs}}K^3\right)$ \\
        \bottomrule
    \end{tabularx}
\end{table}

\subsection{Structure of the NN-OpInf problem}
NN-OpInf results in optimization problems that are nonconvex in general. However, linear parameterizations of the considered operators can retain convexity, and we summarize these properties here. We consider state-linear parameterizations with no parametric dependence in what follows. 

Consider reduced snapshot pairs $\{(\mathbf{x}_\ell,\mathbf{y}_\ell)\}_{\ell=1}^{N_s}$, with $\mathbf{x}_\ell,\mathbf{y}_\ell\in\RR{\romDim}$, and define the objective
\begin{equation}
J(\mathbf{A})=
\sum_{\ell=1}^{N_s}\| \mathbf{A}\mathbf{x}_\ell - \mathbf{y}_\ell \|_2^2,
\qquad
\mathbf{A}\in\RR{\romDim\times\romDim}.
\end{equation}
Let $\mathbf{X}_{\rm data}=[\mathbf{x}_1,\ldots,\mathbf{x}_N]\in\RR{\romDim\times N_s}$ and $\mathbf{y}_{\rm data}=[\mathbf{y}_1^\intercal,\ldots,\mathbf{y}_{N_s}^\intercal]^\intercal\in\RR{N_s\romDim}$. Using the ``vec trick'' 
\[
\operatorname{vec}(\mathbf{A}\mathbf{X}_{\rm data})=
(\mathbf{X}_{\rm data}^{\intercal}\otimes\mathbf{I})\operatorname{vec}(\mathbf{A}),
\]
we can write the OpInf optimization problem (neglecting regularization) as
\begin{equation}
J(\mathbf{A})=\|
(\mathbf{X}_{\rm data}^{\intercal}\otimes\mathbf{I})\operatorname{vec}(\mathbf{A})-\mathbf{y}_{\rm data}
\|_2^2,
\end{equation}
which is convex (quadratic) in the vectorized variable $\operatorname{vec}(\mathbf{A})\in\mathbb{R}^{K^2}$.

\begin{proposition}[Convex linear-operator cases]
For the objective $J(\mathbf{A})$, each of the following optimization problems is convex:
\begin{enumerate}
    \item[(i)] $\min_{\mathbf{A}\in\RR{\romDim\times\romDim}} J(\mathbf{A})$ (unconstrained linear operator),
    \item[(ii)] $\min_{\mathbf{S}\in\mathcal{L}_{\mathrm{sl}}} J(\mathbf{S}-\mathbf{S}^{\intercal})$ (skew-symmetric operator with $\mathcal{L}_{\mathrm{sl}}$ the strictly lower-triangular matrices),
    \item[(iii)] $\min_{\mathbf{A}=\mathbf{A}^{\intercal},\,\mathbf{A}\succeq 0} J(\mathbf{A})$ (SPSD operator).
\end{enumerate}
\end{proposition}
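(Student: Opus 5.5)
The plan is to reduce all three cases to one fact: a convex function composed with an affine map is convex, and minimizing a convex function over a convex set is a convex problem. The starting point is the vectorized form derived just before the statement,
\[
J(\mathbf{A})=\big\|(\mathbf{X}_{\rm data}^{\intercal}\otimes\mathbf{I})\operatorname{vec}(\mathbf{A})-\mathbf{y}_{\rm data}\big\|_2^2 .
\]
This is a squared Euclidean norm of an affine function of $\operatorname{vec}(\mathbf{A})$. Its Hessian, $2(\mathbf{X}_{\rm data}\otimes\mathbf{I})(\mathbf{X}_{\rm data}^{\intercal}\otimes\mathbf{I})$, is symmetric positive semidefinite, so $J$ is a convex quadratic on $\RR{\romDim\times\romDim}$.

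\textbf{Case (i).} The feasible set is all of $\RR{\romDim\times\romDim}$, which is convex. Convexity of the problem then follows directly from convexity of $J$.

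\textbf{Case (ii).} The decision variable is $\mathbf{S}\in\mathcal{L}_{\mathrm{sl}}$. This set is a linear subspace: strictly lower-triangular matrices are closed under addition and scalar multiplication, so it is convex. The map $\mathbf{S}\mapsto\mathbf{S}-\mathbf{S}^{\intercal}$ is linear. Hence $\mathbf{S}\mapsto J(\mathbf{S}-\mathbf{S}^{\intercal})$ is a convex function composed with a linear map, and is therefore convex. To make this concrete, write $\operatorname{vec}(\mathbf{S}-\mathbf{S}^{\intercal})=(\mathbf{I}-\mathbf{P})\mathbf{E}\mathbf{s}$. Here $\mathbf{s}\in\RR{\romDim(\romDim-1)/2}$ collects the free entries, $\mathbf{E}$ embeds them into $\operatorname{vec}$ form, and $\mathbf{P}$ is the commutation matrix satisfying $\mathbf{P}\operatorname{vec}(\mathbf{M})=\operatorname{vec}(\mathbf{M}^{\intercal})$. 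The objective becomes $\|\mathbf{B}\mathbf{s}-\mathbf{y}_{\rm data}\|_2^2$ with $\mathbf{B}=(\mathbf{X}_{\rm data}^{\intercal}\otimes\mathbf{I})(\mathbf{I}-\mathbf{P})\mathbf{E}$, which is an unconstrained linear least-squares problem in $\mathbf{s}$.

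\textbf{Case (iii).} The objective is $J$ itself, which is convex, so what remains is to show the feasible set $\{\mathbf{A}:\mathbf{A}=\mathbf{A}^{\intercal},\ \mathbf{A}\succeq 0\}$ is convex. The symmetric matrices form a subspace. The positive semidefinite cone is the intersection over all $\mathbf{v}\in\RR{\romDim}$ of the half-spaces $\{\mathbf{A}:\mathbf{v}^{\intercal}\mathbf{A}\mathbf{v}\ge 0\}$. Each half-space is convex because $\mathbf{A}\mapsto\mathbf{v}^{\intercal}\mathbf{A}\mathbf{v}$ is linear, and an intersection of convex sets is convex. Equivalently, for $\theta\in[0,1]$ one has $\mathbf{v}^{\intercal}(\theta\mathbf{A}_1+(1-\theta)\mathbf{A}_2)\mathbf{v}\ge 0$ whenever both $\mathbf{A}_1$ and $\mathbf{A}_2$ are PSD. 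The problem is therefore a convex quadratic minimized over a closed convex cone, i.e., a semidefinite-representable convex program.

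\textbf{Main subtlety.} Case (iii) is convex only in the variable $\mathbf{A}$ with the cone constraint. It is not convex in the Cholesky-type parametrization $\mathbf{A}=\mathbf{L}\mathbf{L}^{\intercal}$ used by the SPSD operator, since $\mathbf{L}\mapsto\mathbf{L}\mathbf{L}^{\intercal}$ is not affine. This matches the paper's earlier remark that Cholesky-based inference is nonconvex. The proof should therefore state explicitly that the decision variable in (iii) is $\mathbf{A}$ itself.
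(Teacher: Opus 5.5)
Your proof is correct and takes essentially the same route as the paper: $J$ is convex as a least-squares objective in $\operatorname{vec}(\mathbf{A})$, case (ii) is a linear map over the subspace $\mathcal{L}_{\mathrm{sl}}$, and case (iii) uses a feasible set that is the symmetric subspace intersected with the PSD cone. Your extra details (the explicit Hessian, the commutation-matrix form of (ii), and the half-space description of the PSD cone) spell out steps the paper only asserts.
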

\begin{proof}
Case (1) has a convex feasible set and convex objective. In case (2), the map $\mathbf{S}\mapsto \mathbf{S}-\mathbf{S}^{\intercal}$ is linear and $\mathcal{L}_{\mathrm{sl}}$ is a linear subspace, so $J(\mathbf{S}-\mathbf{S}^{\intercal})$ is convex in $\mathbf{S}$. In case (3), the feasible set is the intersection of the symmetric subspace and the PSD cone, hence convex. Therefore, all three problems are convex~\cite{BoydVandenberghe04}.
\end{proof}

\begin{remark}[When convexity is lost]
Linear and nonlinear parameterization of $\mathbf{A}$ can destroy (strong) convexity. For example, enforcing $\mathbf{A}\succeq 0$ via $\mathbf{A}=\mathbf{L}\mathbf{L}^\intercal$ gives an objective that is generally nonconvex in $\mathbf{L}$. Likewise, enforcing skew symmetry via $\mathbf{A}=\mathbf{S}-\mathbf{S}^\intercal$ introduces non-uniqueness and loss of strong convexity unless $\mathbf{S}$ is restricted (e.g., strictly lower triangular).
\end{remark}


\paragraph{Neural-network operators}
In the full NN-OpInf setting, operators depend nonlinearly on trainable weights, so the least-squares objective is nonconvex. In addition, the additive decomposition in~\eqref{eq:composable_nn} is generally non-unique, so multiple parameter settings can produce similar right-hand side dynamics.

\section{Numerical experiments}\label{sec:results}
We now assess the performance of the proposed NN-OpInf on a suite of numerical experiments, comparing against (1) the NN ROM models proposed in Ref.~\cite{GaWaZh20} (which we refer to as ``NN-OpInf-NN''), (2) linear and quadratic P-OpInf ROMs, and, where possible, (3) an intrusive (non-hyper-reduced) POD-Galerkin ROM. 
Table~\ref{tab:model_summary} summarizes the ROMs considered in this section.

\begin{table}[t]
    \centering
    \caption{Summary of models used in the numerical experiments.}
    \label{tab:model_summary}
    \begin{tabularx}{\textwidth}{@{}l X l@{}}
        \toprule
        \textbf{Name} & \textbf{Description} & \textbf{Implementation} \\
        \midrule
        POD-Galerkin & Intrusive POD ROM (no hyper-reduction) & In-house \\
        P-OpInf-A & Linear P-OpInf model & OpInf~\cite{OpInfPackage} \\
        P-OpInf-AH & Quadratic P-OpInf model & OpInf~\cite{OpInfPackage} \\
        P-OpInf-cA & Linear P-OpInf with affine forcing & OpInf~\cite{OpInfPackage} \\
        P-OpInf-cAH & Quadratic P-OpInf with affine forcing & OpInf~\cite{OpInfPackage} \\
        NN-OpInf-NN & Standard NN operator from Table~\ref{tab:operators} & NN-OpInf~\cite{nnopinf} \\
        NN-OpInf-SS & Skew-symmetric NN operator & NN-OpInf~\cite{nnopinf} \\
        NN-OpInf-PSD & $-1\times$(Skew operator + SPSD operator) & NN-OpInf~\cite{nnopinf} \\
        NN-OpInf-PSD-f & NN-OpInf-PSD + vector forcing operator & NN-OpInf~\cite{nnopinf} \\
        NN-OpInf-SPSD & SPSD NN operator & NN-OpInf~\cite{nnopinf} \\
        NN-OpInf-SPSD-Potential & SPSD potential operator & NN-OpInf~\cite{nnopinf} \\
        \bottomrule
    \end{tabularx}
\end{table}

For all NN-OpInf ROMs, the available data are separated randomly 
in a standard 80/20 training/validation split. Each layer is initialized with PyTorch's default initialization (Kaiming Uniform for dense linear layers).
Unless otherwise noted, the networks employ 3 layers, and the number of hidden units per layer is set to be the ROM dimension, $K$. We use an ensemble size of $L=2$. The networks are trained with the ADAM optimizer for 10000 epochs and a batch size of $50$. We accelerate training by performing 50 LBFGS iterations with a line search twice: at the start and after 5000 epochs.
The initial learning rate is $5 \times 10^{-3}$ with a learning-rate decay parameter of $0.9998$, meaning that the learning rate is $\left(5 \times 10^{-3}\right) \times (0.9998)^n$ after the $n$-th epoch.  The networks are ReLU-activated, and $\ell^2$ regularization is employed with a penalty parameter of $1 \times 10^{-6}$. While not shown, we note that the sensitivity of NN-OpInf to regularization hyperparameters was found to be mild. This contrasts with P-OpInf ROMs, which are notoriously sensitive to the choice of hyperparameter. 

All P-OpInf models are implemented using the OpInf Python package~\cite{OpInfPackage}. We consider four P-OpInf models in the following experiments: linear P-OpInf (P-OpInf-A), quadratic P-OpInf (P-OpInf-AH), linear P-OpInf with an affine forcing (P-OpInf-cA), and quadratic OpInf with an affine forcing (P-OpInf-cAH).  
We follow Ref.~\cite{McKhWi23} for selecting standard $\ell^2$ regularization parameters via a grid search: 
for each candidate regularization value, a corresponding ROM is trained and subsequently evaluated to determine the best-performing model with respect to the $\ell^2$ error measured across the training data. Our grid search considers 40 values for $\lambda$ on a logarithmic scale from $1 \times 10^{-8}$ to $1 \times 10^3$. For higher-order P-OpInf ROMs, the linear and quadratic terms share the same regularization parameter. In the parametric setting, we use the interpolatory OpInf approach of~\cite{PeWi16}, which is implemented in the OpInf Python package. 

All ROMs are assessed using the $\ell^2$ relative state error summed over all time steps,
$$e = \frac{ \sum_{i=1}^{N_t}\| \state^{\mathrm{OpInf}}(t_i;\params) - \state^{\mathrm{FOM}}(t_i;\params) \|_2}{ \sum_{i=1}^{N_t} \| \state^{\mathrm{FOM}}(t_i;\params) \|_2}.$$
\subsection{Burgers equation}
We first consider the one-dimensional Burgers' equation on the periodic domain $x \in [0,2\pi]$,
$$\frac{\partial u}{\partial t} + \frac{1}{2} \frac{\partial u^2}{\partial x} = 0,$$
subject to the initial condition $u(x,0) = 0.025(\sin(4x) + 2\cos(6x)) + 2$. This initial condition results in an advecting wave with small disturbances that grow in time. In the absence of shocks, Burgers' equation conserves energy, $\frac{d}{dt}\int_{0}^{2 \pi} u(x)^2 dx = 0$. The FOM is discretized with 500 spatial grid points and 
an energy-conserving symmetric flux for the convective term. Integration in time is performed with a fourth-order Runge-Kutta (RK4) scheme using $\Delta t=0.01$ and snapshots are collected at every time step.  Note that this problem is quadratic in the state $u$, and therefore a quadratic P-OpInf is expected to perform well.
To construct an \nnopinf\ ROM that satisfies the energy conservation constraint, we parameterize the right-hand side with a skew-symmetric operator,
\begin{equation}\label{eq:nnopinf_skew}
\reducedVelocity(\reducedState,\params) =   \left[ \reducedSkewMatrixLowerNN(\reducedState,\params;\weights) - \reducedSkewMatrixLowerNN(\reducedState,\params;\weights) \right] \reducedState. 
\end{equation}
We refer to this formulation as NN-OpInf-Skew-Symmetric (NN-OpInf-SS).
Two setups are considered, both of which remain shock-free:
\begin{itemize}
\item Reproductive: We collect snapshots and make predictions over $t \in [0,4]$. 
\item Future-state prediction: We collect snapshots over $t \in [0,1]$ and predict over $t \in [0,4]$. 
\end{itemize}
Figure~\ref{fig:burgers_training_error_converge} summarizes results for the reproductive and future-state cases. Figures~\ref{fig:burgers_training_error_converge_a} and~\ref{fig:burgers_training_error_converge_b} report relative error versus basis dimension, Figures~\ref{fig:burgers_training_error_converge_c} and~\ref{fig:burgers_training_error_converge_d} show the final-time solutions, and Figures~\ref{fig:burgers_training_error_converge_e} and~\ref{fig:burgers_training_error_converge_f} quantify energy-conservation violations.
Across both configurations, NN-OpInf-SS consistently outperforms the baseline NN-OpInf-NN. While we do not observe convergence with increasing basis dimension, the relative error is low (around $0.1\%$) in all cases. In the reproductive setting, NN-OpInf-SS is competitive with P-OpInf, while in the future-state configuration it outperforms both linear and quadratic P-OpInf. The skew-symmetric parameterization preserves energy at the semi-discrete level, which improves stability and predictive consistency in the future-state setting. We observe that both P-OpInf-A and P-OpInf-AH do a reasonable job at preserving energy throughout the training time window, but yield energy violations for future-state predictions. Note that recent work introduces quadratic P-OpInf approaches capable of preserving energy as well~\cite{KoQi24,GkDuGo26}; we do not examine these here but highlight that, in NN-OpInf, energy-preservation is easily embedded via the skew-symmetric parameterization.

\begin{figure}
\begin{center}
\begin{subfigure}[t]{0.49\textwidth}
\includegraphics[trim={0cm 0cm 0cm 0cm},clip,width=1.0\linewidth]{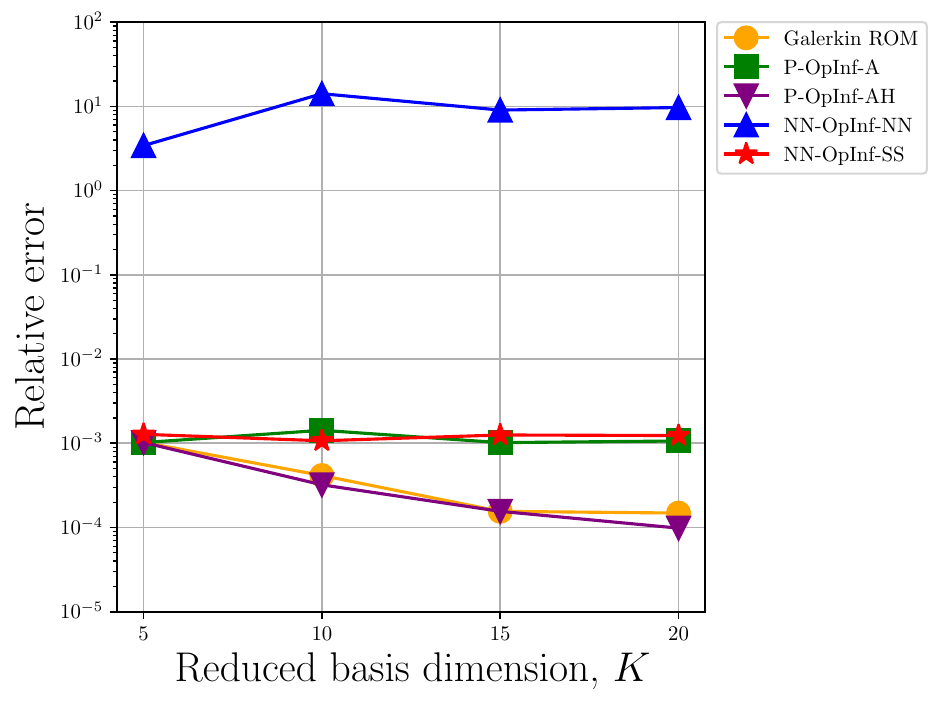}
\caption{Relative error vs. basis dimension (reproductive)}
\label{fig:burgers_training_error_converge_a}
\end{subfigure}
\begin{subfigure}[t]{0.49\textwidth}
\includegraphics[trim={0cm 0cm 0cm 0cm},clip,width=1.0\linewidth]{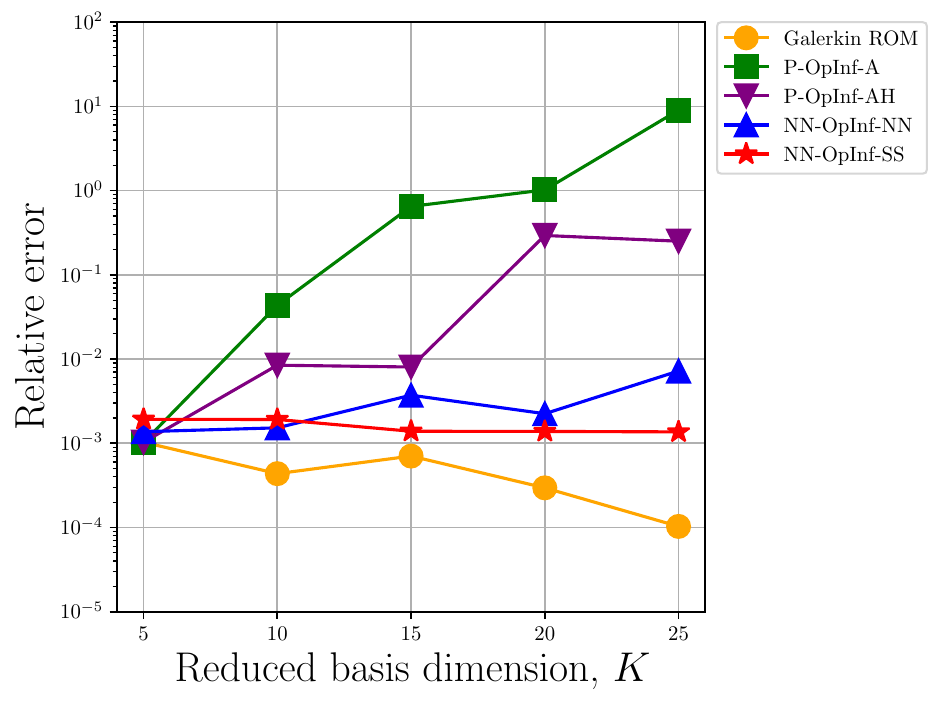}
\caption{Relative error vs. basis dimension (future-state)}
\label{fig:burgers_training_error_converge_b}
\end{subfigure}
\begin{subfigure}[t]{0.49\textwidth}
\includegraphics[trim={0cm 0cm 0cm 0cm},clip,width=1.0\linewidth]{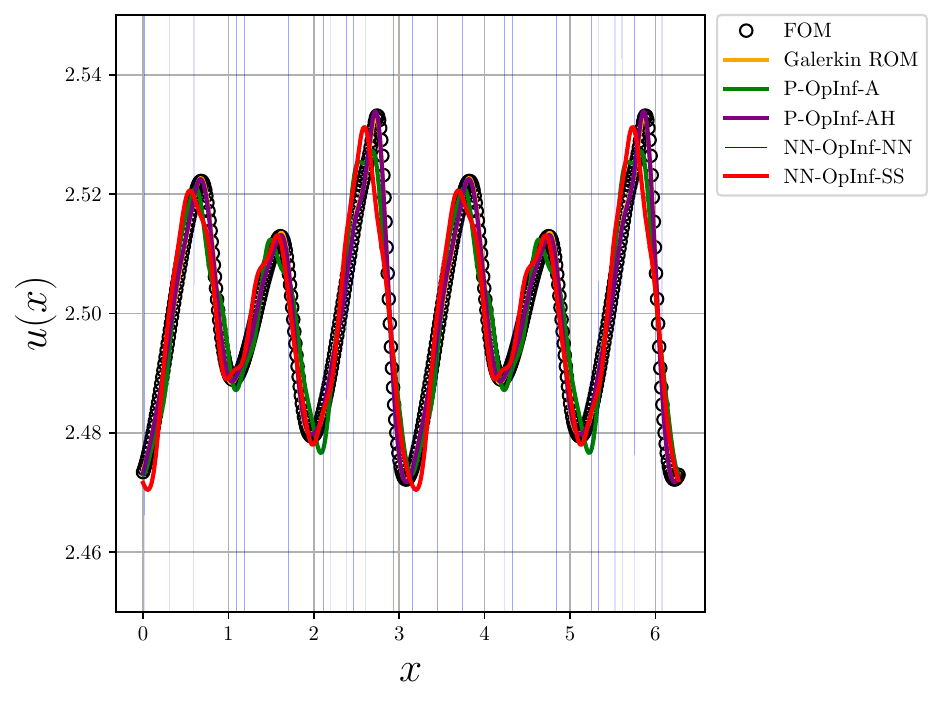}
\caption{Solution at $t=4.0$ (reproductive)}
\label{fig:burgers_training_error_converge_c}
\end{subfigure}
\begin{subfigure}[t]{0.49\textwidth}
\includegraphics[trim={0cm 0cm 0cm 0cm},clip,width=1.0\linewidth]{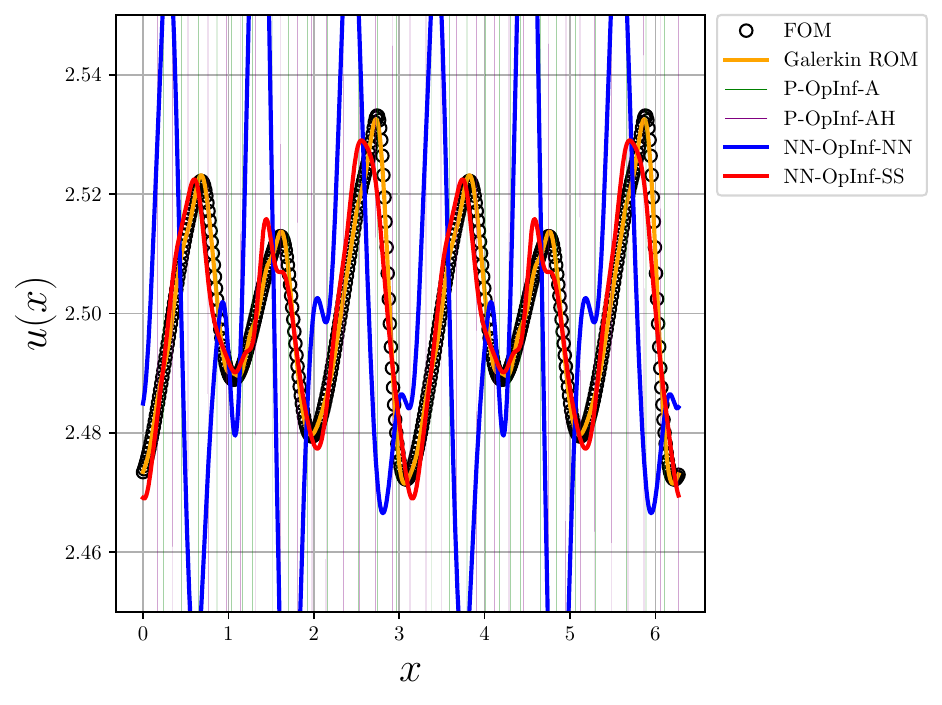}
\caption{Solution at $t=4.0$ (future-state)}
\label{fig:burgers_training_error_converge_d}
\end{subfigure}
\begin{subfigure}[t]{0.49\textwidth}
\includegraphics[trim={0cm 0cm 0cm 0cm},clip,width=1.0\linewidth]{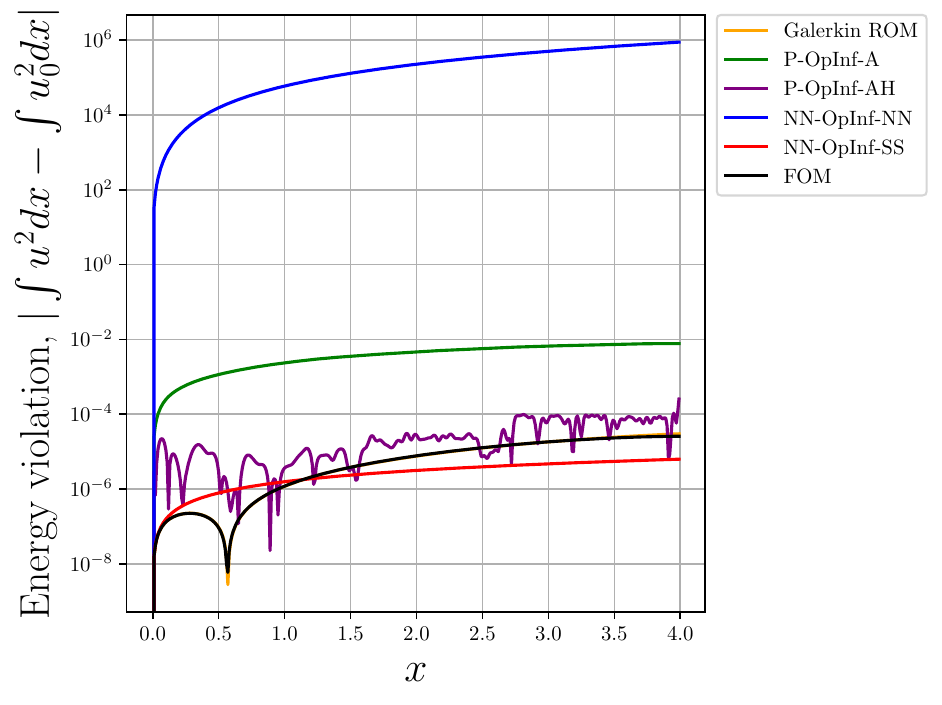}
\caption{Energy violation (reproductive)}
\label{fig:burgers_training_error_converge_e}
\end{subfigure}
\begin{subfigure}[t]{0.49\textwidth}
\includegraphics[trim={0cm 0cm 0cm 0cm},clip,width=1.0\linewidth]{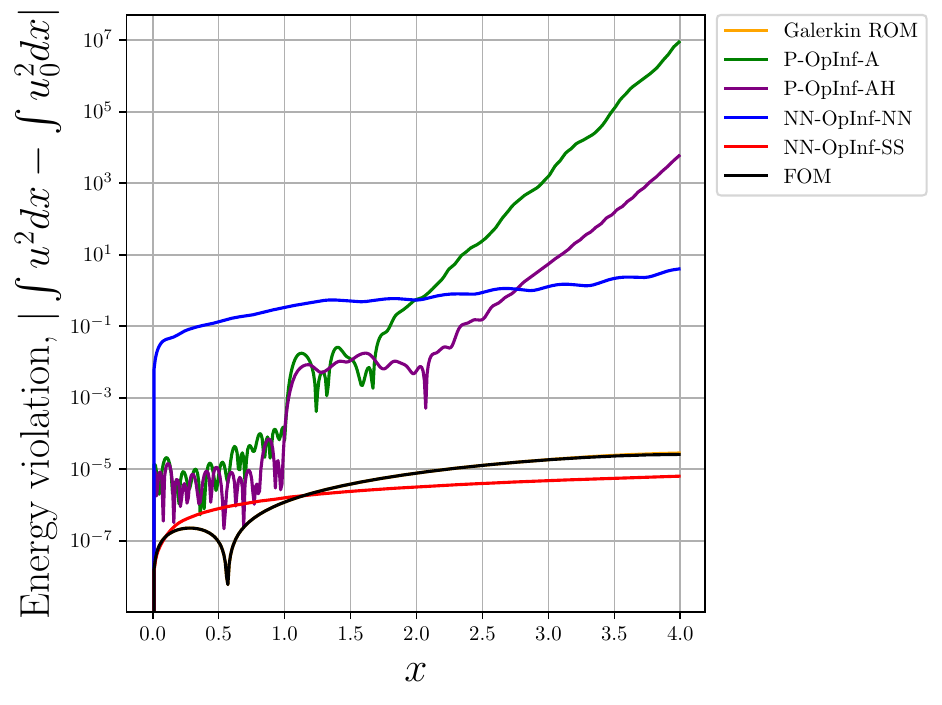}
\caption{Energy violation (future-state)}
\label{fig:burgers_training_error_converge_f}
\end{subfigure}

\caption{Burgers' equation. Relative errors for various OpInf formulations.}
\label{fig:burgers_training_error_converge}
\end{center}
\end{figure}

\subsection{Nonlinear convection-diffusion-reaction system}
We next consider a nonlinear convection-diffusion-reaction (CDR) system with multiple terms and non-polynomial nonlinearities, which highlights the expressiveness and composability of NN-OpInf.
The governing equations on the domain $\spatialDomain \coloneqq [0,1] \times [0,1]$ with $t \in [0,2.5]$ are given by
$$\frac{\partial u}{\partial t} =  - u \left(\boldsymbol b \cdot \nabla u \right)  - \sigma  u \exp(-\eta u^2) +  \nu \nabla^2 u + q,$$
with homogeneous initial and boundary conditions.  The forcing term $q=5\exp(-r)$ with $r = \sqrt{ (x_1 - 0.5)^2 + (x_2 - 0.5)^2 }$, and the parameters are defined on $\sigma \in U[2,3]$, $\nu \in U[1 \times 10^{-5},1\times 10^{-3}]$, $\mathbf{b} = [0.5 \cos(\theta),0.5\sin(\theta) ]$ with $\theta \in U[\frac{\pi}{6},\frac{\pi}{2}]$, $\eta \in U[1,3]$.
The FOM is discretized with $100 \times 100$ spatial grid points. A second-order upwind scheme is used for the nonlinear convection terms, with a second-order central scheme for diffusion. We integrate with an RK4 method using $\dt = 0.002$ and collect snapshots at every time step. Figure~\ref{fig:cdr_fom} depicts the FOM solution at $t=0.625$ (left), $t=1.25$ (center), and $t=2.50$ (right) for a random parameter realization. 

In the absence of forcing, the nonlinear CDR system satisfies the energy stability property
$$\frac{d}{dt}\frac{1}{2}\int_{\Omega} u(x,y)^2 dV \le 0.$$ We construct an NN-OpInf model that enforces this property and admits a forcing vector by writing the right-hand side as a sum of a negative symmetric positive semi-definite operator, a skew-symmetric operator, and a constant forcing operator,
\begin{equation}\label{eq:nnopinf_pd_f}
\reducedVelocity(\reducedState,\params) =  -\reducedSpdMatrixLowerNN(\reducedState,\params;\weights_1) \reducedSpdMatrixLowerNN^\intercal(\reducedState,\params; \weights_1) \reducedState + 
\left[ \reducedSkewMatrixLowerNN(\reducedState,\params;\weights_2) - \reducedSkewMatrixLowerNN(\reducedState,\params;\weights_2) \right] \reducedState  + \skew{-4}\hat{\mathbf{b}}.
\end{equation}
We refer to this as the NN-OpInf positive semi-definite forcing (NN-OpInf-PSD-f) ROM.
We also consider a linear OpInf model with forcing (P-OpInf-cA), a quadratic OpInf model with forcing (P-OpInf-cAH), the baseline NN-OpInf-NN model, and the intrusive POD-Galerkin ROM. We note the ROM does not employ hyper-reduction. 
 
Two setups are considered:
\begin{itemize}
\item Reproductive: We collect snapshots and make predictions over $t \in [0,2.5]$ for a single parameter realization $\sigma = 2.551$, $\nu = 7.111 \times 10^{-4}$, $\theta = 0.8282$, and $\eta = 2.0217$.
\item Parametric prediction: We simulate the FOM for 36 parameter samples drawn from the parameter domain. The first 16 parameter instances are obtained from sampling the corners of the hypercube of the parameter domain, while the remaining 20 are obtained using uniform sampling. We collect snapshots over $t \in [0,2.5]$. We test on 20 novel parameter instances obtained using random sampling.
\end{itemize} 
\begin{figure}
\begin{center}
\begin{subfigure}[t]{0.32\textwidth}
\includegraphics[trim={6cm 2cm 4cm 2cm},clip,width=1.0\linewidth]{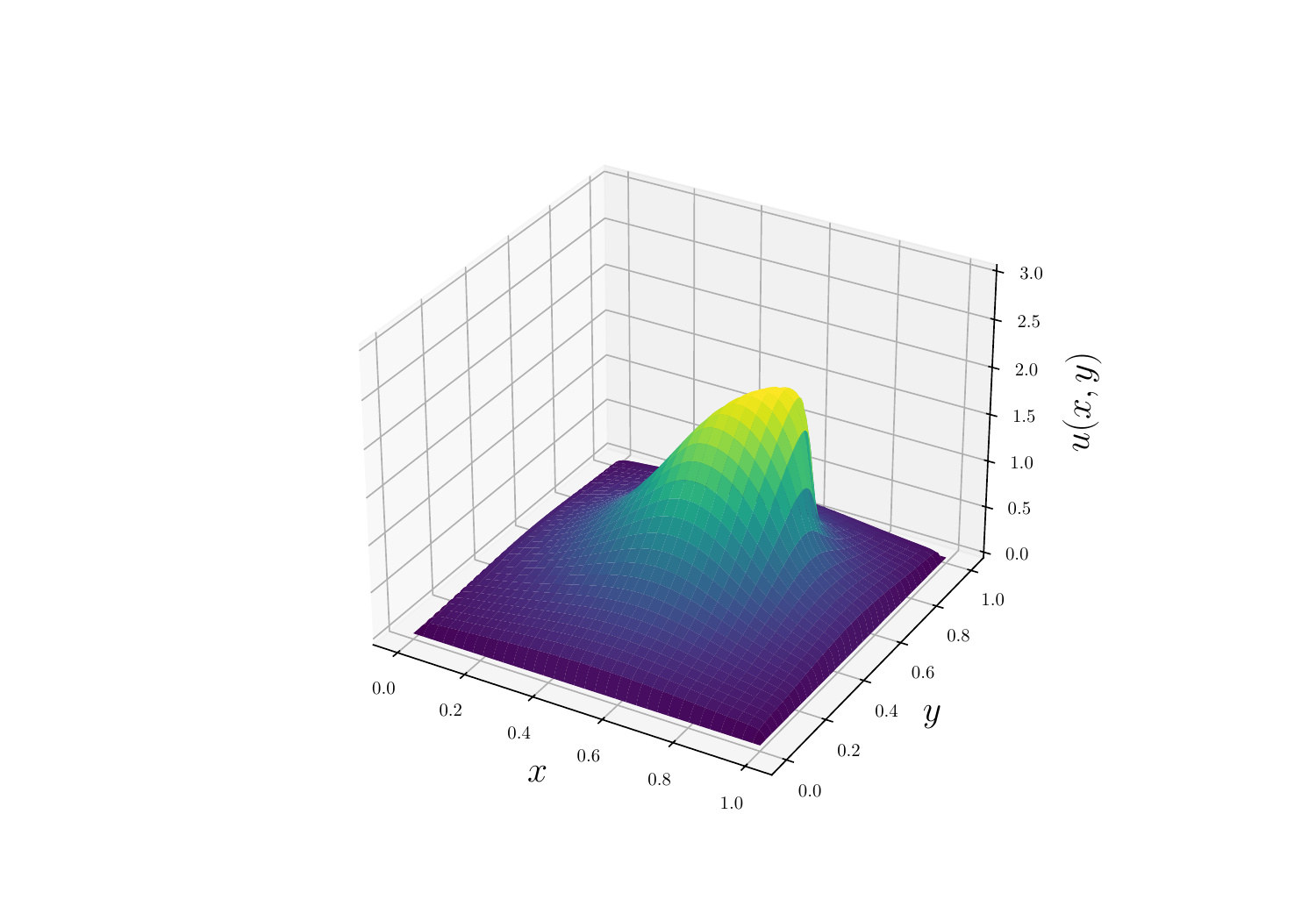}
\caption{$t=0.625$}
\end{subfigure}
\begin{subfigure}[t]{0.32\textwidth}
\includegraphics[trim={6cm 2cm 4cm 2cm},clip,width=1.0\linewidth]{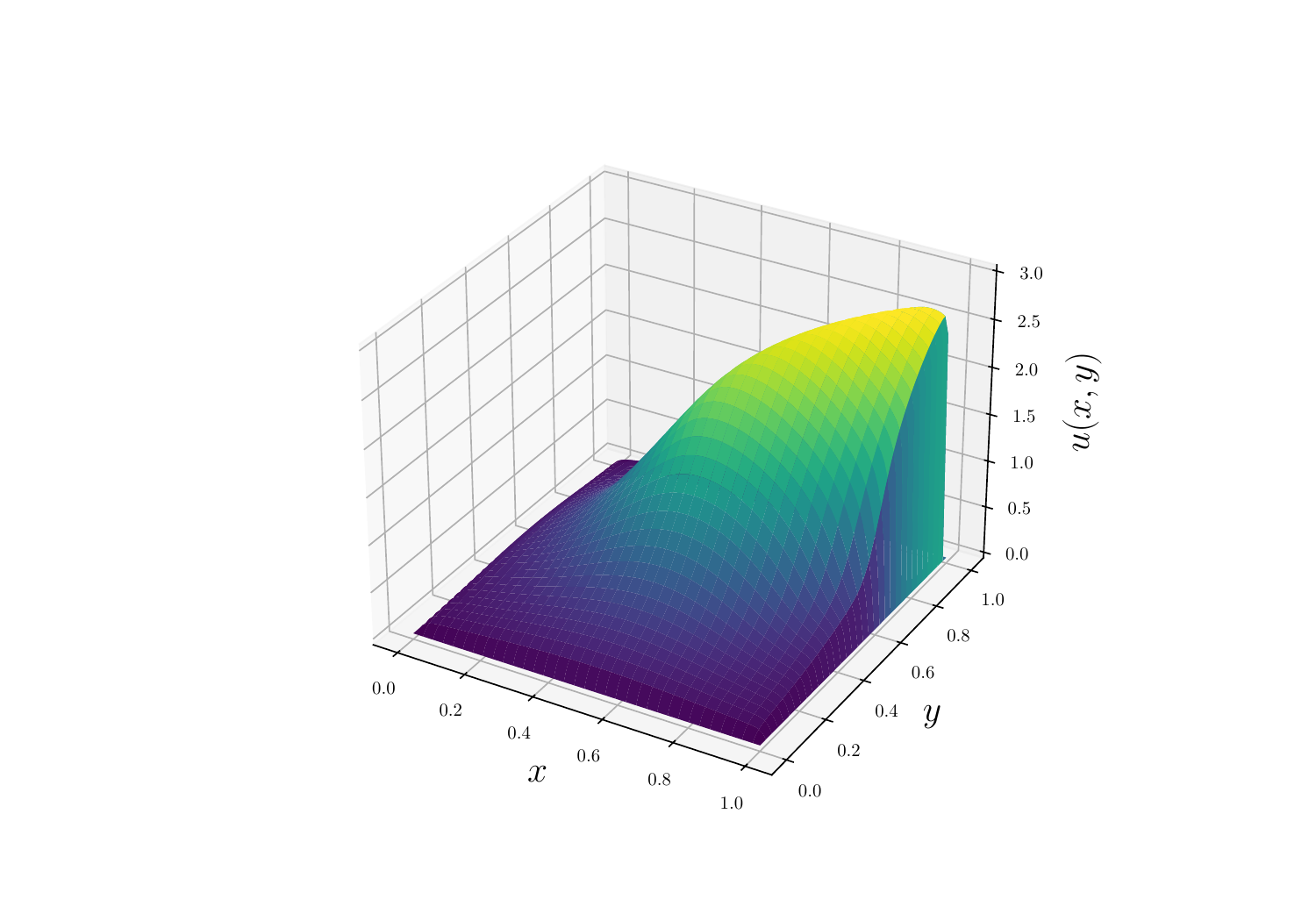}
\caption{$t=1.25$}
\end{subfigure}
\begin{subfigure}[t]{0.32\textwidth}
\includegraphics[trim={6cm 2cm 4cm 2cm},clip,width=1.0\linewidth]{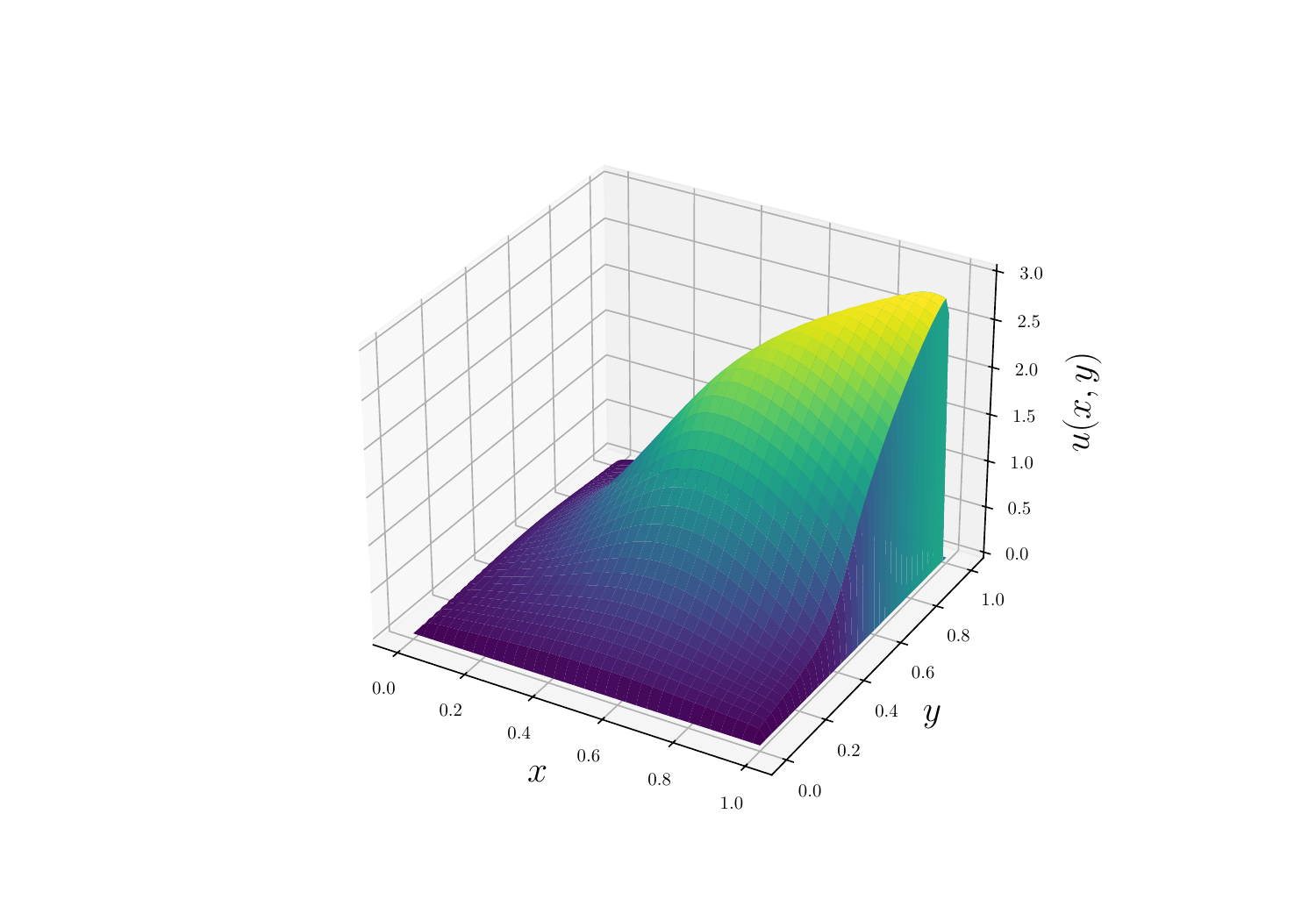}
\caption{$t=2.50$}
\end{subfigure}
\caption{Nonlinear CDR example. FOM solution for $\sigma = 2.551$, $\nu = 7.111 \times 10^{-4}$, $\theta = 0.8282$, and $\eta = 2.0217$.}
\label{fig:cdr_fom}
\end{center}
\end{figure}
Figure~\ref{fig:ex2_reproductive} shows results for the reproductive configuration. In Figure~\ref{fig:ex2_reproductive_a}, we observe that P-OpInf-cA and P-OpInf-cAH both fail to accurately characterize the solution and yield relative errors greater than $5\%$ for all basis dimensions, a consequence of their restricted polynomial model form. NN-OpInf-PSD-f is comparable to the Galerkin ROM and 5--10$\times$ more accurate than the P-OpInf models. Conversely, the standard NN-OpInf-NN model is unstable for most configurations. In the physical-space solutions at $t=1.0$ (Figure~\ref{fig:ex2_reproductive_b}), P-OpInf-cAH captures the moving shock better than 
P-OpInf-cA but mispredicts its location, whereas NN-OpInf-PSD-f and the Galerkin ROM are in good agreement with the FOM.
\begin{figure}
\begin{center}
\begin{subfigure}[t]{0.49\textwidth}
\includegraphics[trim={0cm 0cm 0cm 0cm},clip,width=1.0\linewidth]{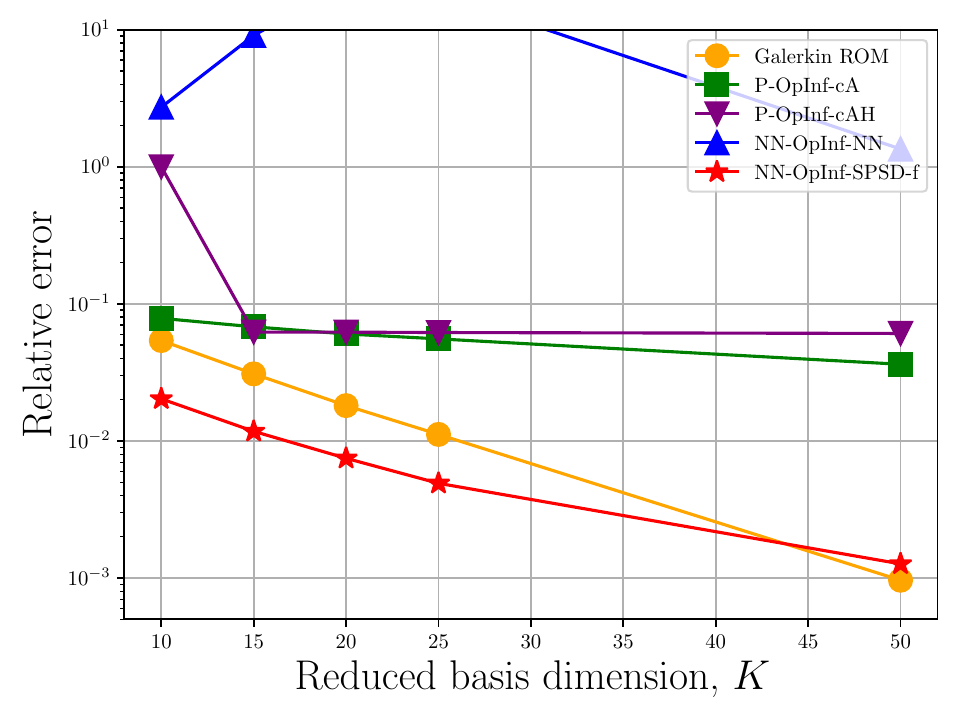}
\caption{Error as a function of reduced-basis dimension}
\label{fig:ex2_reproductive_a}
\end{subfigure}
\begin{subfigure}[t]{0.49\textwidth}
\includegraphics[trim={0cm 0cm 0cm 0cm},clip,width=1.0\linewidth]{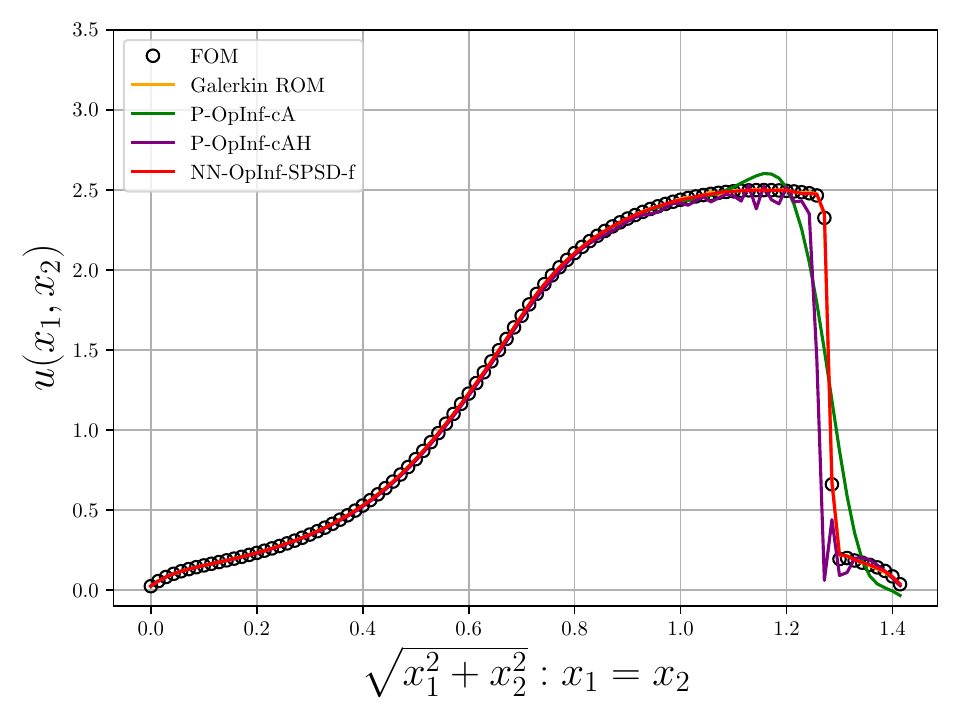}
\caption{Solution profiles at $t=1.0$}
\label{fig:ex2_reproductive_b}
\end{subfigure}
\caption{Nonlinear CDR example in a reproductive configuration. Relative errors for various OpInf formulations (left) and solutions for the finest ROMs at $t = 1.0$ (right). }
\label{fig:ex2_reproductive}
\end{center}
\end{figure}
%
%

Figure~\ref{fig:ex2_parametric} shows results for the parametric configuration. Figures~\ref{fig:ex2_parametric_a}--\ref{fig:ex2_parametric_b} report reproductive results on the training parameter set, while Figures~\ref{fig:ex2_parametric_c}--\ref{fig:ex2_parametric_d} report predictive results on the testing set. When evaluated on the training set (Figure~\ref{fig:ex2_parametric_a}), NN-OpInf-PSD-f is the best-performing model, followed closely by P-OpInf-cA and the Galerkin ROM. NN-OpInf-NN is unstable, and P-OpInf-cAH is stable but inaccurate. On the testing set, both P-OpInf-cA and P-OpInf-cAH are inaccurate, reflecting the limitations of the interpolatory parametric approach in the presence of non-polynomial nonlinearities. NN-OpInf-NN remains unstable, while NN-OpInf-PSD-f performs well and is slightly more accurate than the Galerkin ROM across all basis dimensions.  
\begin{figure}
\begin{center}
\begin{subfigure}[t]{0.49\textwidth}
\includegraphics[trim={0cm 0cm 0cm 0cm},clip,width=1.0\linewidth]{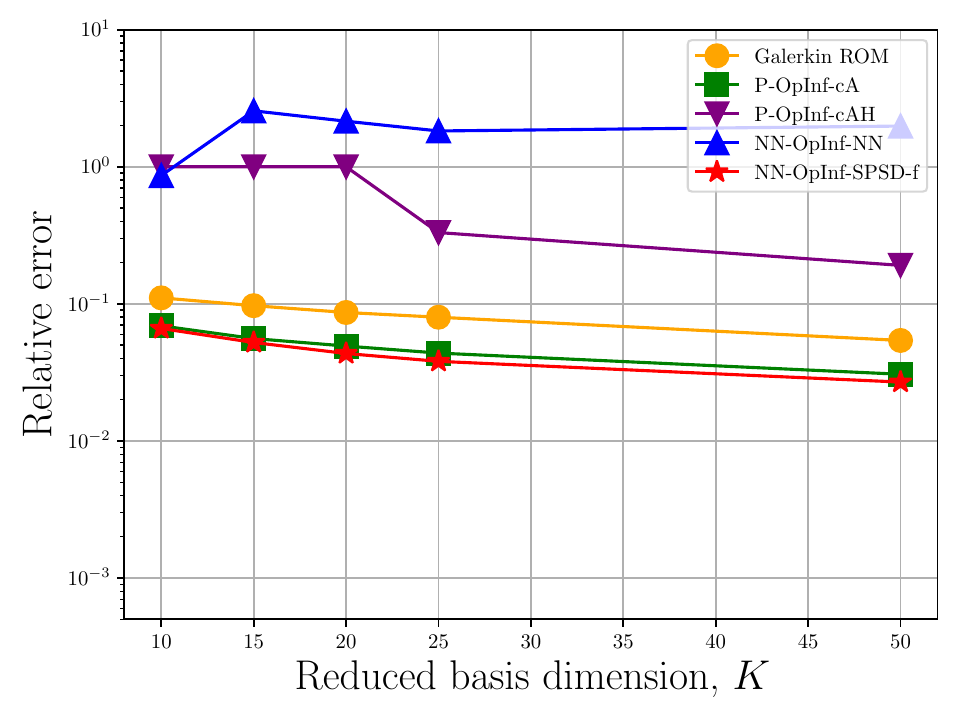}
\caption{Error on the training set as a function of reduced-basis dimension}
\label{fig:ex2_parametric_a}
\end{subfigure}
\begin{subfigure}[t]{0.49\textwidth}
\includegraphics[trim={0cm 0cm 0cm 0cm},clip,width=1.0\linewidth]{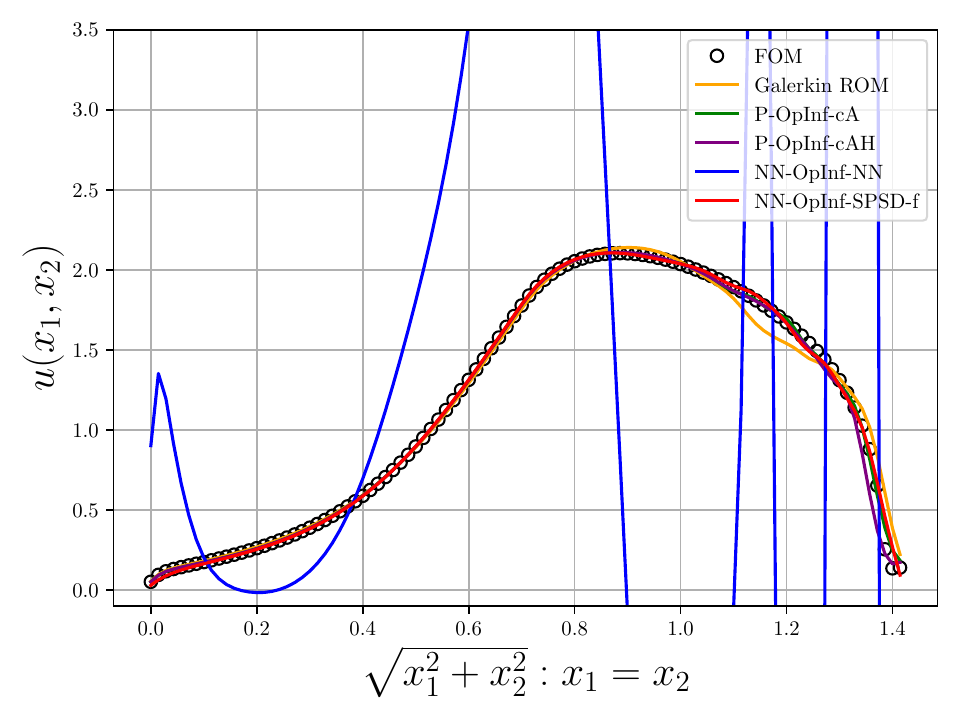}
\caption{Solution profiles at $t=2.5$ for a parameter instance randomly selected from the training set}
\label{fig:ex2_parametric_b}
\end{subfigure}
\begin{subfigure}[t]{0.49\textwidth}
\includegraphics[trim={0cm 0cm 0cm 0cm},clip,width=1.0\linewidth]{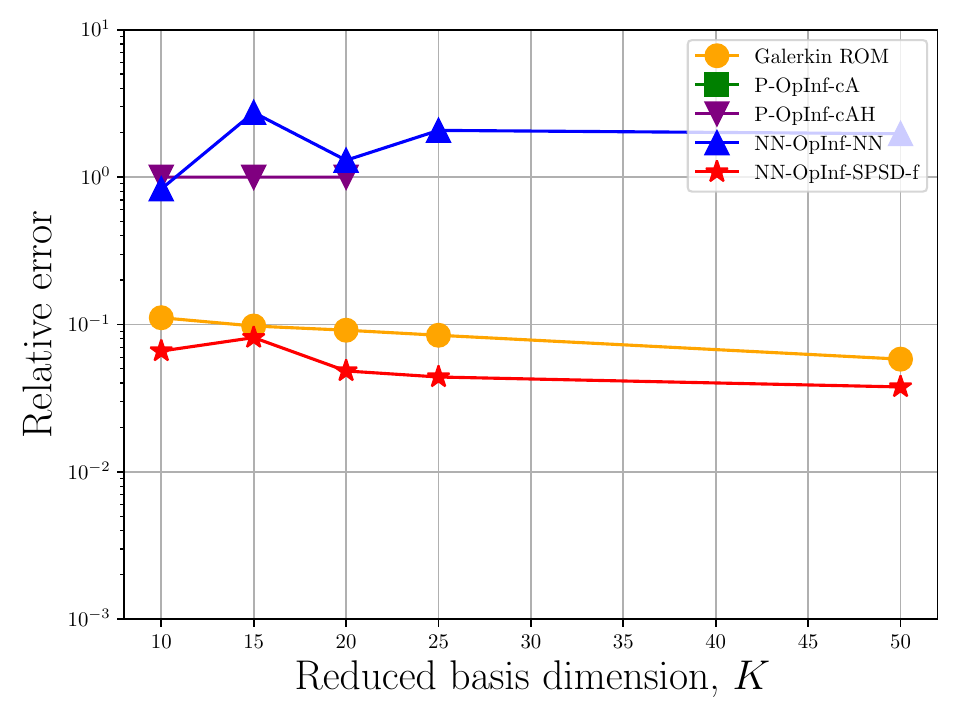}
\caption{Error on the testing set as a function of reduced-basis dimension}
\label{fig:ex2_parametric_c}
\end{subfigure}
\begin{subfigure}[t]{0.49\textwidth}
\includegraphics[trim={0cm 0cm 0cm 0cm},clip,width=1.0\linewidth]{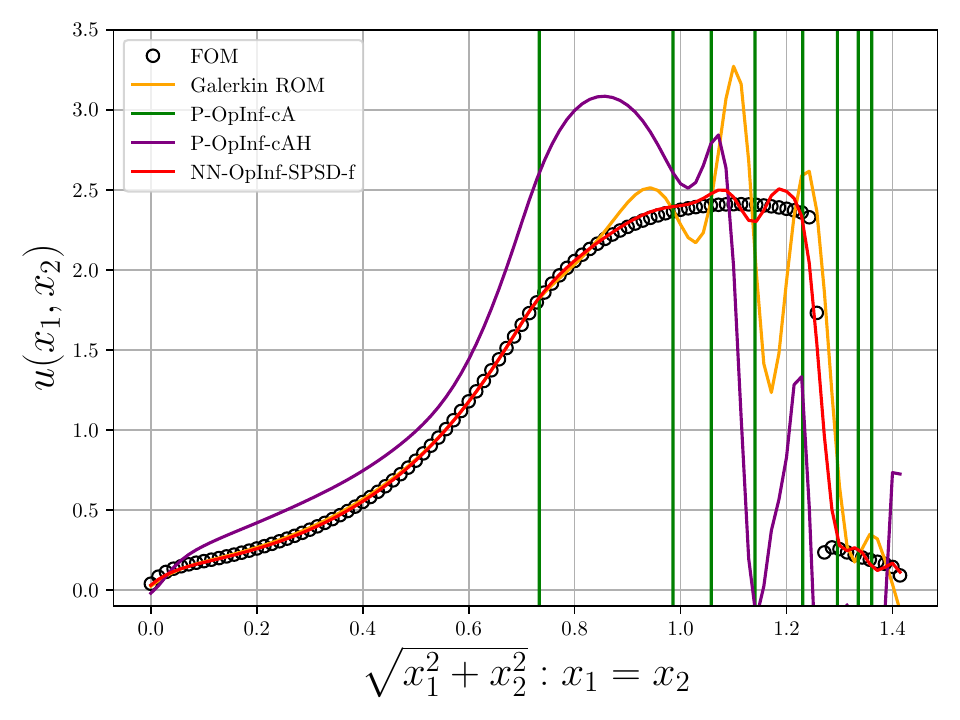}
\caption{Solution profiles at $t=2.5$ for a parameter instance randomly selected from the testing set}
\label{fig:ex2_parametric_d}
\end{subfigure}
\caption{Nonlinear CDR example in a parametric configuration. Relative errors for various OpInf formulations (left) and solutions for the finest ROMs at $t = 2.5$ (right). Solutions across the training set are shown on the top, while the testing set is shown on the bottom. In the testing regime, the P-OpInf-cAH ROM went unstable for $K>20$.}
\label{fig:ex2_parametric}
\end{center}
\end{figure}


\subsection{2D nonlinear heat conduction}
We next consider a two-dimensional (2D) nonlinear heat-conduction problem on the unit square $[0,1]^2$ with temperature-dependent diffusivity $\kappa(T)$ and a constant source term. This example again illustrates the ability of NN-OpInf to learn composable structure-preserving operators. The governing equations are
$$
\partial_t T(x,y,t) - \nabla \cdot \big(\kappa(T)\nabla T(x,y,t)\big) = 1, \quad (x,y)\in [0,1]^2,
$$
with Dirichlet boundary conditions $T(x,y,t) = 0$ on $\partial [0,1]^2$ and initial condition $T(x,y,0)=0$.
The diffusivity is parameterized as a smooth transition between $k_0$ and $k_1$,
$$
\kappa(T) = \tfrac{1}{2}(k_0+k_1) + \tfrac{1}{2}(k_1-k_0)\tanh\left(\frac{T-T_c}{w}\right),
$$
with $k_0=10^{-2}$, $w =  2 \times 10^{-2}$, $k_1 \in U[0.2,0.5]$, and $T_c\in U[0.3,0.5]$. 
The spatial discretization involves a uniform grid with $(n_x+1)\times(n_y+1)$ nodes and spacings $h_x=1/n_x$, $h_y=1/n_y$. A standard five-point finite-difference stencil is used with averaging of $\kappa(T)$ on cell faces to form a symmetric diffusion operator. Dirichlet boundary conditions are enforced weakly. For time integration, we use a Crank--Nicolson scheme. 

The spatial discretization results in a system that can be expressed as 
$$\dot\state = -\mathbf{A}(\state) \state + \mathbf{f},$$
where $\mathbf{A}(\state)$ is SPD. Thus, we consider the OpInf model form
\begin{equation}\label{eq:nnopinf_spd_f}
\reducedVelocity(\reducedState,\params) =  -\reducedSpdMatrixLowerNN(\reducedState,\params;\weights_1) \reducedSpdMatrixLowerNN^\intercal(\reducedState,\params; \weights_1) \reducedState + \skew{-4}\hat{\mathbf{b}}.
\end{equation}
We refer to this formulation as the NN-OpInf symmetric positive semi-definite with forcing (NN-OpInf-SPSD-f) ROM. We additionally consider P-OpInf-cA, P-OpInf-cAH, and NN-OpInf-NN. The Galerkin ROM is unavailable for this example.

Three setups are considered:
\begin{enumerate}
\item Reproductive: We collect snapshots and make predictions over $t \in [0,2.0]$ at a  single parameter realization $k_1 = 0.2$, $T_c = 0.3$.
\item Future-state prediction: We use the same parametric configuration but train only on $t \in [0,1.0]$, predicting over $t \in [0,2.0]$.
\item Parametric prediction: We simulate the FOM for four parameter samples drawn from the parameter domain and collect snapshots over $t \in [0,2.0]$. We train on the parameter grid $k_1 = [0.2,0.4]$, $T_c = [0.3,0.4]$ and test on four randomly drawn instances from the distribution.
\end{enumerate} 

Figure~\ref{fig:heat_training_error_converge} reports relative errors for the three configurations. We observe that NN-OpInf-SPSD-f outperforms both vanilla NN-OpInf-NN and the P-OpInf approaches in all cases. For both the reproductive and future-state configurations, NN-OpInf-SPSD-f is 5-10$\times$ more accurate than P-OpInf-cA. Both P-OpInf-cAH and NN-OpInf-NN result in unphysical solutions and do not improve as the basis dimension is refined. NN-OpInf-SPSD-f displays a general convergence trend with increasing basis dimension, but convergence is non-monotonic. In the parametric-predictive settings, NN-OpInf-SPSD-f again outperforms P-OpInf and NN-OpInf-NN, but again does not display monotonic convergence. While monotonic convergence can be obtained by tweaking training settings (e.g., changing the learning rate, regularization) or increasing the ensemble size, this highlights challenges associated with neural network training and inference in the presence of generic nonlinearities. Nonetheless, NN-OpInf-SPSD-f clearly outperforms existing OpInf ROM methods.  
\begin{figure}
\begin{center}
\begin{subfigure}[t]{0.32\textwidth}
\includegraphics[trim={0cm 0cm 0cm 0cm},clip,width=1.0\linewidth]{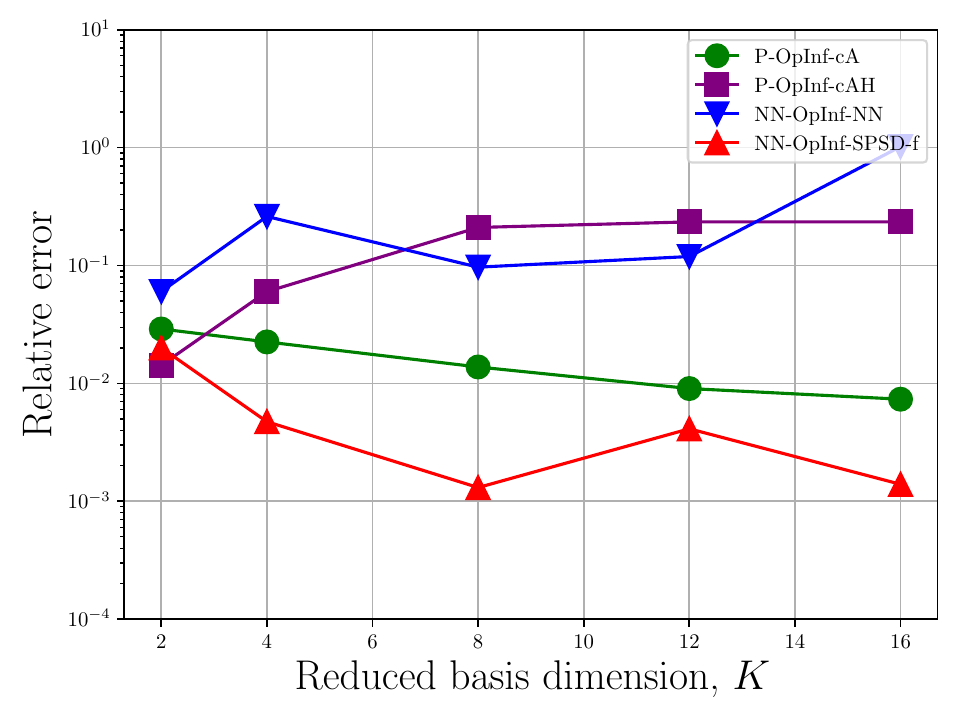}
\caption{Reproductive: relative error vs. basis dimension}
\label{fig:heat_reproductive_error_converge}
\end{subfigure}
\begin{subfigure}[t]{0.32\textwidth}
\includegraphics[trim={0cm 0cm 0cm 0cm},clip,width=1.0\linewidth]{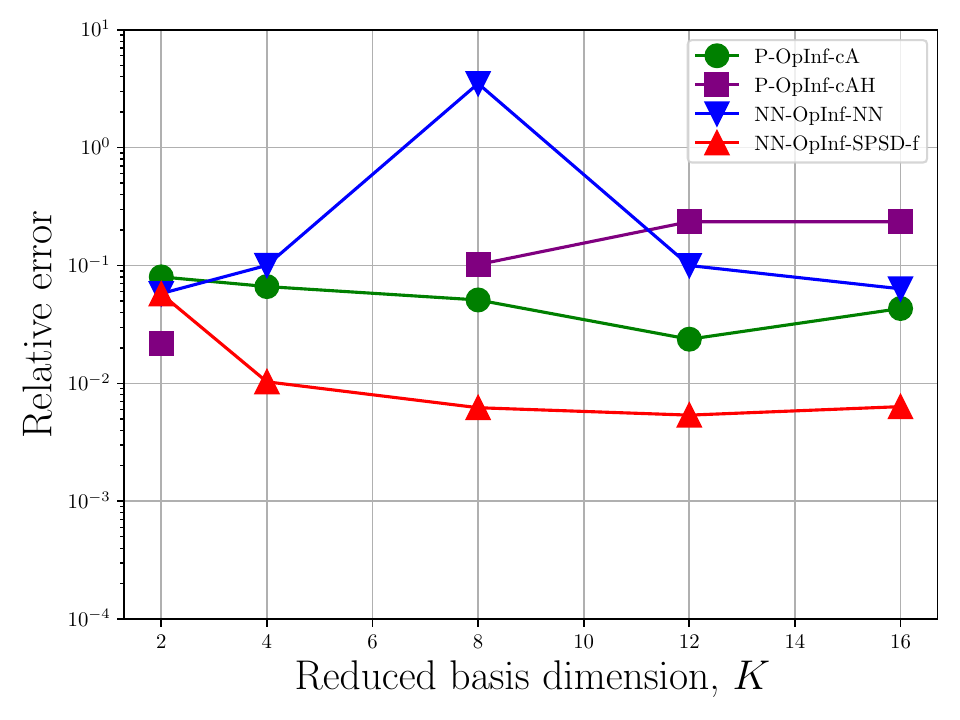}
\caption{Future-state: relative error vs. basis dimension}
\label{fig:heat_future_error_converge}
\end{subfigure}
\begin{subfigure}[t]{0.32\textwidth}
\includegraphics[trim={0cm 0cm 0cm 0cm},clip,width=1.0\linewidth]{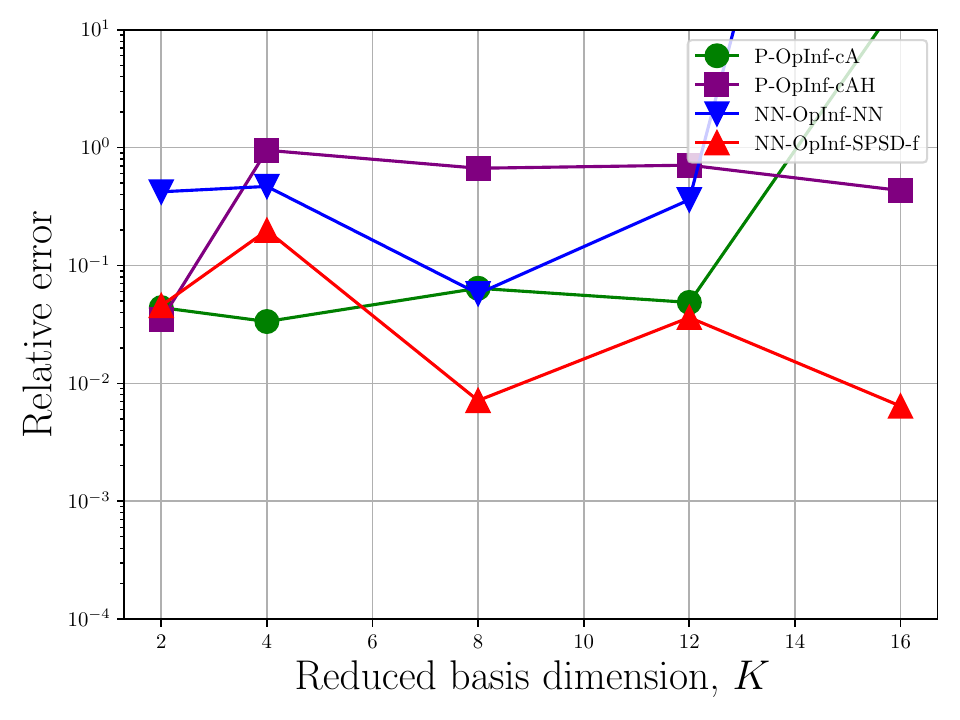}
\caption{Parametric: relative error vs. basis dimension on the testing set}
\label{fig:heat_parametric_error_converge}
\end{subfigure}
\caption{2D nonlinear heat conduction. Relative errors for various OpInf formulations. We note that the P-OpInf-AH ROM at $K=4$ was unstable in the future-state configuration.}
\label{fig:heat_training_error_converge}
\end{center}
\end{figure}

\subsection{Premixed $H_2$-air flame}

The next example is a simplified model for a premixed $H_2$-air flame, which highlights the ability to learn an OpInf model with a parameterized nonlinear forcing term. This example has been studied in various references~\cite{LeCa20,GaWaZh20,BuWi10}, and non-intrusive neural-network-based ROMs were examined in Ref.~\cite{GaWaZh20}. The system is described by the PDEs on $\Omega \times [0,T] \times \paramDomain$, 
\begin{equation}
\begin{aligned}
\frac{\partial }{\partial t} \boldsymbol{u}(y,t,\params) + \boldsymbol{\beta} \cdot \nabla \boldsymbol{u} (y,t,\params) &= \kappa \Delta \boldsymbol{u} (y,t,\params) + \boldsymbol{R}(\boldsymbol{u}, \params), & \quad & y \in \Omega, \\
\boldsymbol{u} (y,t,\params) &= \boldsymbol{u_d}(y), & \quad & y \in \Gamma_d, \\
\boldsymbol{\nabla}  \boldsymbol u (y,t,\params) \cdot \boldsymbol{n} &= 0, & \quad & y \in \Gamma_n, \\
\boldsymbol{u}(y,0,\params) &= \boldsymbol{u_0}(y), & \quad & y \in \Omega. \\
\end{aligned}
\end{equation}
The spatial domain is $\Omega \defeq \left[0,L_x\right] \times \left[0,L_y \right]$ with $L_x = 1.8$ $\mathrm{cm}$ and $L_y = 0.90$ $\mathrm{cm}$, and the temporal domain is $t \in [0, 0.06 \mathrm{s}]$.
The state $\boldsymbol u \coloneqq \left[ \mathrm{Y}_{\mathrm{H}_2},\mathrm{Y}_\mathrm{O},\mathrm{Y}_{\mathrm{H}_2\mathrm{O}}, \Theta \right]$ is the mass fraction of the species $\mathrm{H}_2, \mathrm{O}_2$, and $\mathrm{H}_{2}\mathrm{O}$, and temperature, respectively, $\boldsymbol \beta  = \left[ 50 , 0 \right]^T\in \RR{2}$ is the advection speed with units $\mathrm{cm}\cdot \mathrm{s}^{-1}$, $\kappa = \mathrm{2} \mathrm{cm}^2 \cdot \mathrm{s}^{-1}$ is the diffusion parameter, and $\boldsymbol R$ is a nonlinear reaction source term of the form
\begin{equation}
\begin{aligned}
\boldsymbol R_i &= -\nu_i \left( \frac{W_i}{\rho} \right) \left( \frac{\rho \mathrm{Y}_{\mathrm{H}_2}}{\mathrm{W}_{\mathrm{H}_2}}  \right)^{\nu_{\mathrm{H}_2}} \left( \frac{\rho \mathrm{Y}_{\mathrm{O}_2}}{\mathrm{W}_{\mathrm{O}_2}}  \right)^{\mathrm{\nu}_{\mathrm{O}_2}} A \exp \left( \frac{E}{R \Theta} \right), \\
\boldsymbol R_T &= Q R_{\HTwoO}.
\end{aligned}
\end{equation}
Here, $i\in \{\HTwo,\OTwo,\HTwoO\}$ indexes the species,  
$\nu_{\HTwo},\nu_{\OTwo},\nu_{\HTwoO} = 2,1,-2$ are the stoichiometric coefficients, $W_{\HTwo}, W_{\OTwo}, W_{\HTwoO} = 2.016, 31.9, 18$ are the molecular weights with units $\mathrm{g} \cdot \mathrm{mol}^{-1}$, $\rho = 1.39 \times 10^{-3}$ $\mathrm{g} \cdot \mathrm{cm}^{-3}$ is the density, $R = 8.314$ $\mathrm{J} \cdot  \mathrm{mol}^{-1} \cdot \mathrm{K}^{-1}$ is the universal gas constant, and $Q = 9800 \; \mathrm{K}$ is the heat of reaction. 
As depicted in Figure~\ref{fig:air_flame_schematic}, the domain boundary $\partial \Omega$ is decomposed into six segments, $\partial \Omega = \cup_{i=1}^6 \Gamma_i$ with a Dirichlet part $\Gamma_d = \cup_{i=1}^3 \Gamma_i$ and a Neumann part $\Gamma_n = \cup_{i=3}^6 \Gamma_i$. The initial condition is $\boldsymbol u(y,0,\params) = \left[0,0,0,300 \right]^\intercal$. The boundary conditions on $\Gamma_d$ are
$$\boldsymbol u_d(y,t,\params) = \begin{cases}
\left[0,0,0,300 \right]^\intercal \qquad &y \in \Gamma_1 \cup \Gamma_3, \\
\left[0.0282,0.2259,0,950 \right]^T \qquad &y  \in \Gamma_2. \\
\end{cases}
$$ 
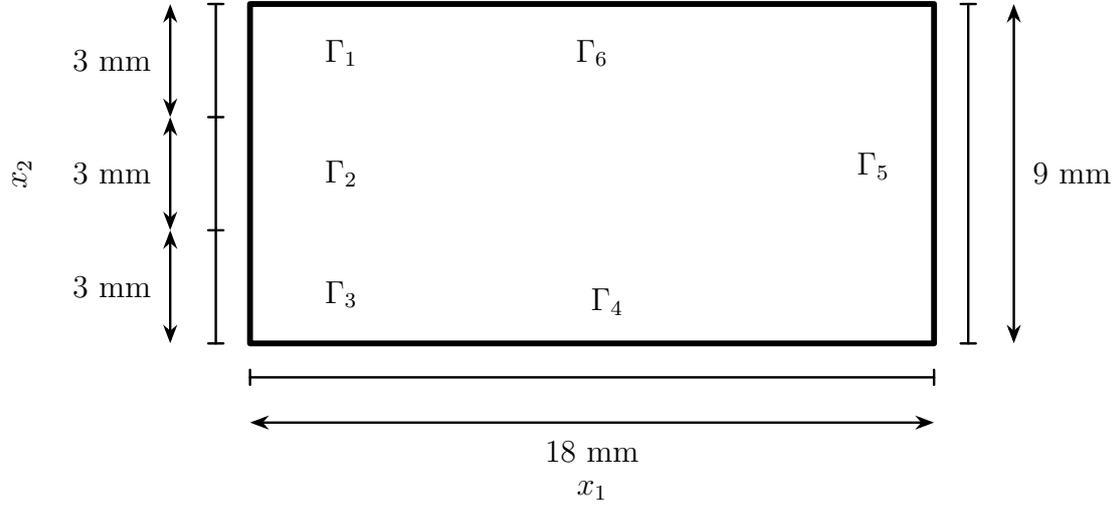
\begin{figure}
\centering
\begin{tikzpicture}[font=\large, line cap=round, line join=round]

\def\W{9}   
\def\H{4.5} 

\draw[line width=2.2pt] (0,0) rectangle (\W,\H);

\node at (1.2,3.85) {$\Gamma_{1}$};
\node at (1.2,2.25) {$\Gamma_{2}$};
\node at (1.2,0.65) {$\Gamma_{3}$};
\node at (4.7,0.55) {$\Gamma_{4}$};
\node at (8.2,2.35) {$\Gamma_{5}$};
\node at (4.5,3.85) {$\Gamma_{6}$};

\tikzset{
  dim/.style={line width=0.9pt},
  dimArrow/.style={<->,>=Stealth,dim},
  dimTick/.style={dim}
}

\draw[dim] (-0.45,0) -- (-0.45,\H);
\draw[dimTick] (-0.55,0) -- (-0.35,0);
\draw[dimTick] (-0.55,1.5) -- (-0.35,1.5);
\draw[dimTick] (-0.55,3.0) -- (-0.35,3.0);
\draw[dimTick] (-0.55,\H) -- (-0.35,\H);

\draw[dimArrow] (-1.05,3.0) -- (-1.05,\H)
  node[midway,left=3pt] {$3~\mathrm{mm}$};
\draw[dimArrow] (-1.05,1.5) -- (-1.05,3.0)
  node[midway,left=3pt] {$3~\mathrm{mm}$};
\draw[dimArrow] (-1.05,0) -- (-1.05,1.5)
  node[midway,left=3pt] {$3~\mathrm{mm}$};

\node[rotate=90] at (-3.0,2.25) {$x_{2}$};

\draw[dim] (\W+0.45,0) -- (\W+0.45,\H);
\draw[dimTick] (\W+0.35,0) -- (\W+0.55,0);
\draw[dimTick] (\W+0.35,\H) -- (\W+0.55,\H);
\draw[dimArrow] (\W+1.05,0) -- (\W+1.05,\H)
  node[midway,right=3pt] {$9~\mathrm{mm}$};

\draw[dim] (0,-0.45) -- (\W,-0.45);
\draw[dimTick] (0,-0.35) -- (0,-0.55);
\draw[dimTick] (\W,-0.35) -- (\W,-0.55);
\draw[dimArrow] (0,-1.05) -- (\W,-1.05)
  node[midway,below=3pt] {$18~\mathrm{mm}$};

\node at (\W/2,-1.95) {$x_{1}$};

\end{tikzpicture}
\caption{Premixed $\HTwo$ air-flame example. Schematic of geometry and boundary conditions.}
\label{fig:air_flame_schematic}
\end{figure}

As in Ref.~\cite{GaWaZh20}, we consider two parameters: the activation energy $\mu_1 = \mathrm{E}$ and the pre-exponential factor $\mu_2=  A$. The parameter space is $\paramDomain = \left[ 2.3375 \times 10^{12} ,6.2 \times 10^{12} \right] \times \left[ 5625.5, 9000 \right].$ Note that the governing equations are nonlinear in the state and non-affine in the parameters, due to the presence of $\exp(E/R\Theta)$ in the equation for $\boldsymbol R_i$. 
The FOM is discretized with a second-order finite difference method on a $50 \times 25$ grid. We employ central differencing for diffusion and second-order upwind differencing for advection. Time integration uses second-order Crank--Nicolson with time step $1 \times 10^{-4}$, yielding $600$ snapshots per FOM sample. Figure~\ref{fig:air_flame_foms} presents FOM temperature snapshots at a randomly selected testing parameter instance. We examine the same NN-OpInf-PSD-f ROM as considered in Eq.~\eqref{eq:nnopinf_pd_f}, with the exception that the nonlinear forcing term is taken to be a fully connected feed forward neural network whose inputs are the parameters. We additionally examine P-OpInf-cA, P-OpInf-cAH, and NN-OpInf-NN models for comparison. 

We use a training/testing protocol similar to Ref.~\cite{GaWaZh20}. The training set $\paramDomainTrain$ is a uniform $4 \times 4$ grid in $\paramDomain$, while the testing set is $\paramDomainTest = \left\{ 2.98125 \times 10^{12}, 4.26875\times 10^{12},5.55625\times 10^{12} \right\} \times \left\{ 6.187917\times 10^{3},7.31275\times 10^{3},8.437583\times 10^{3} \right\}$. The only difference from Ref.~\cite{GaWaZh20} is that our testing set excludes the training samples; Ref.~\cite{GaWaZh20} used a $7 \times 7$ grid with $\paramDomainTrain \subset \paramDomainTest$.

\begin{figure}
\begin{center}
\begin{subfigure}[t]{0.32\textwidth}
\includegraphics[trim={0cm 0cm 0cm 0cm},clip,width=1.0\linewidth]{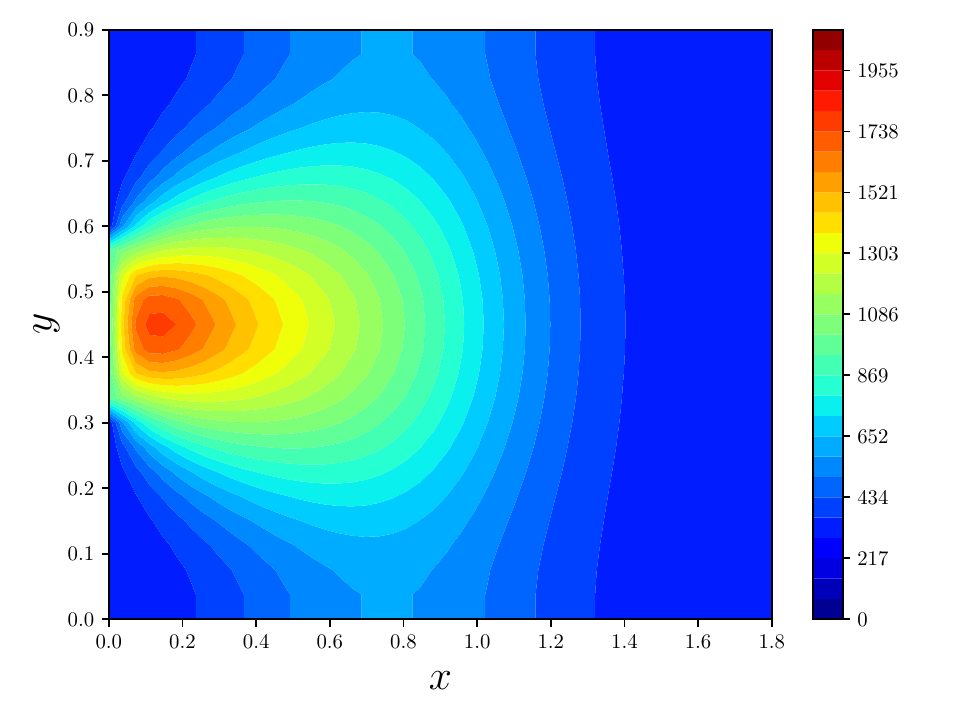}
\caption{$t=0.02$}
\end{subfigure}
\begin{subfigure}[t]{0.32\textwidth}
\includegraphics[trim={0cm 0cm 0cm 0cm},clip,width=1.0\linewidth]{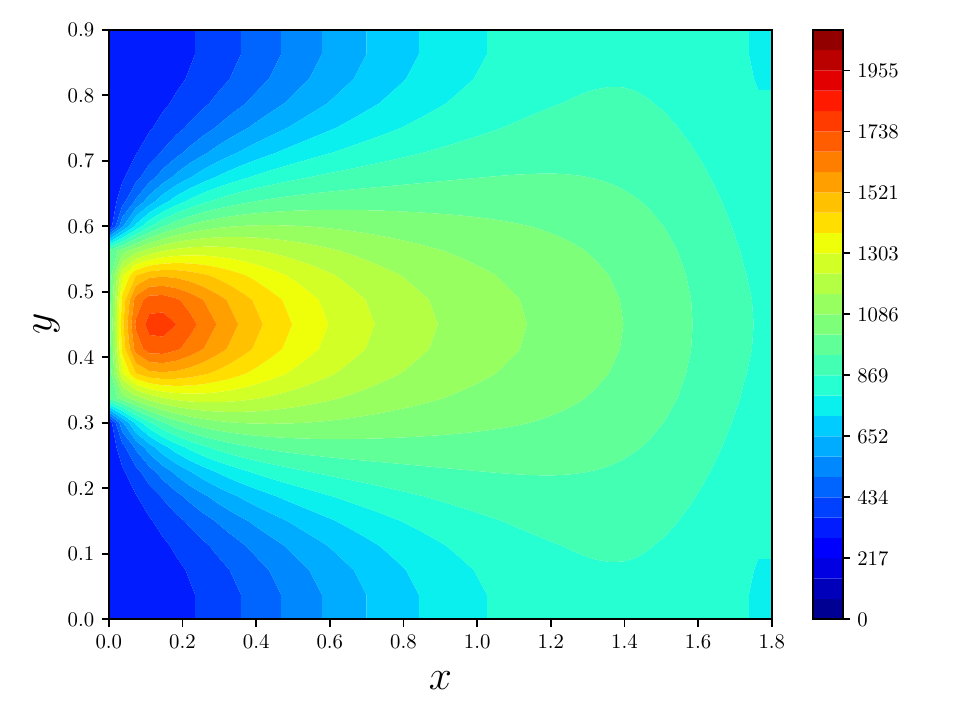}
\caption{$t=0.04$}
\end{subfigure}
\begin{subfigure}[t]{0.32\textwidth}
\includegraphics[trim={0cm 0cm 0cm 0cm},clip,width=1.0\linewidth]{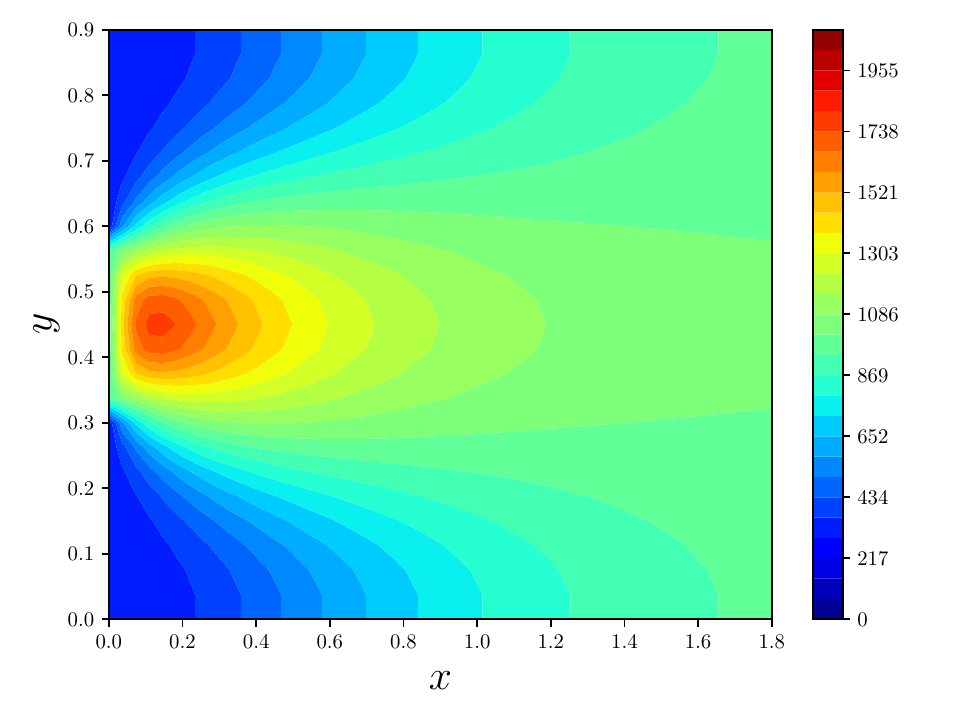}
\caption{$t=0.06$}
\end{subfigure}
\caption{Premixed $\HTwo$ air-flame example. FOM solutions for temperature at $t=0.02$ (left), $t=0.04$ (center), and $t=0.06$ (right) for $(\mu_1,\mu_2) = (5.55625 \times 10^{12},8.437583 \times 10^3)$.}
\label{fig:air_flame_foms}
\end{center}
\end{figure}

Figure~\ref{fig:air_flame_error} reports errors for the training set (left) and testing set (right) on the premixed $H_2$-air flame example. NN-OpInf-PSD-f outperforms NN-OpInf-NN by 5--10$\times$, and yields errors approximately an order of magnitude lower than those reported in Ref.~\cite{GaWaZh20}. Unlike the previous examples, the linear and quadratic P-OpInf models here are more accurate on the training set than NN-OpInf-PSD-f. While we note that a more accurate NN-OpInf-PSD-f ROM could be trained by tweaking the training settings, this result suggests that the polynomial terms dominate the dynamics and the use of an NN-OpInf model may be unnecessary. However, NN-OpInf-PSD-f yields slightly lower errors on the testing set than P-OpInf, demonstrating some benefit from the non-polynomial parameterization. At the highest basis dimension, NN-OpInf-PSD-f achieves roughly a $3\times$ lower error than the linear and quadratic P-OpInf models. We omit physical solution plots for these OpInf models as they are visually indistinguishable from the FOM.
 
\begin{figure}
\begin{center}
\begin{subfigure}[t]{0.49\textwidth}
\includegraphics[trim={0cm 0cm 0cm 0cm},clip,width=1.0\linewidth]{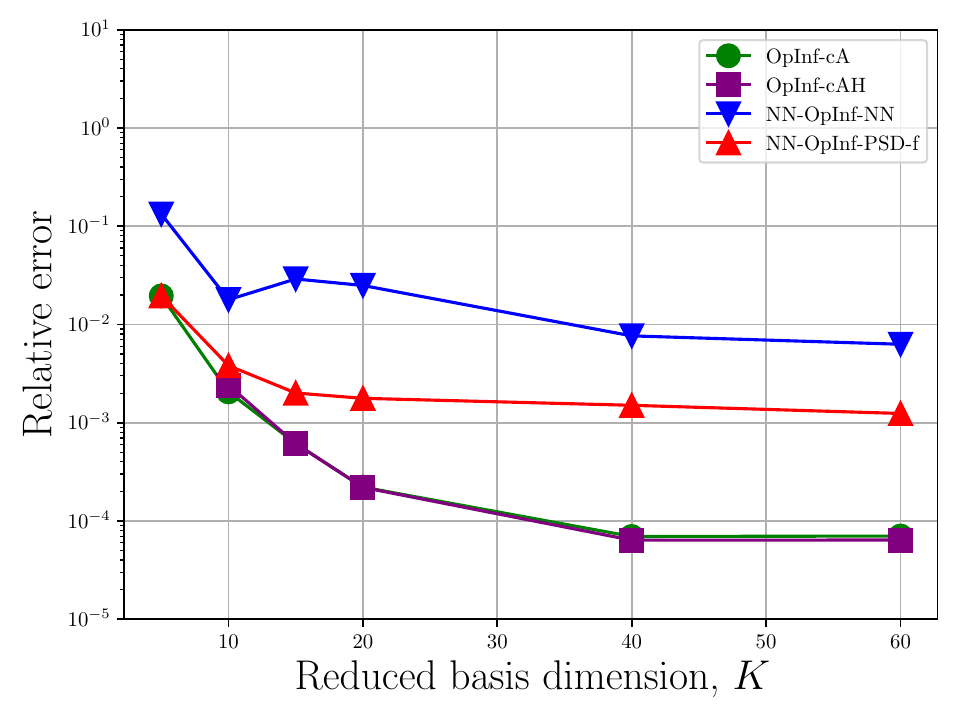}
\caption{Training parameter set, $\paramDomainTrain$}
\label{fig:ex1_training_error_converge_a}
\end{subfigure}
\begin{subfigure}[t]{0.49\textwidth}
\includegraphics[trim={0cm 0cm 0cm 0cm},clip,width=1.0\linewidth]{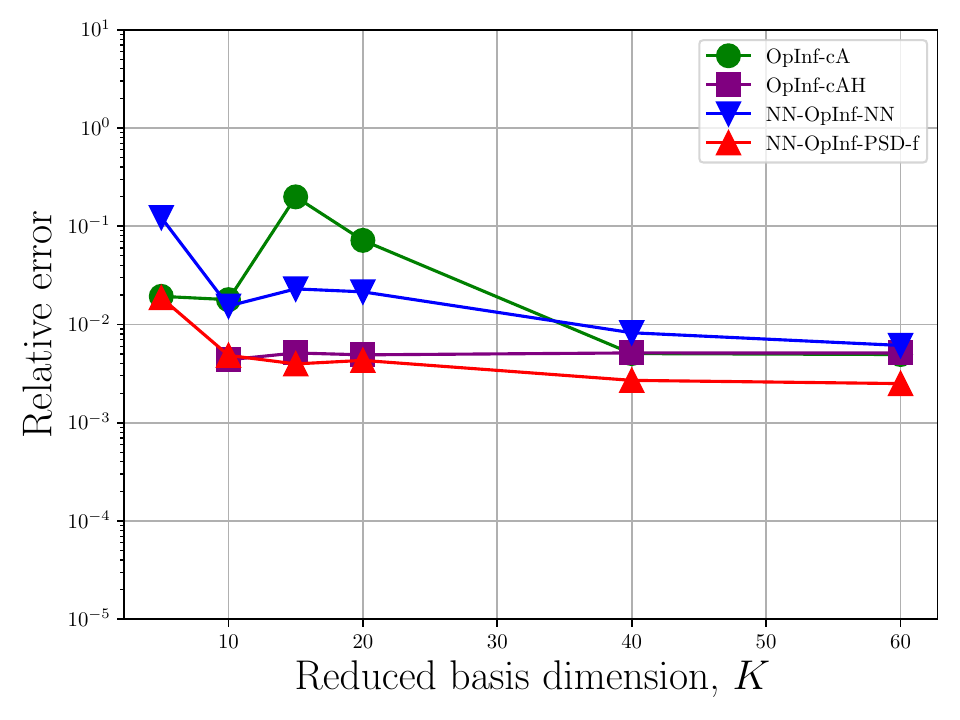}
\caption{Testing parameter set, $\paramDomainTest$}
\label{fig:ex1_training_error_converge_a}
\end{subfigure}

\caption{Premixed $\HTwo$ air-flame example. Relative errors for various OpInf formulations as a function of ROM dimension.}
\label{fig:air_flame_error}
\end{center}
\end{figure}

\subsection{3D hyper-elastic torsion problem}
The last application we consider arises in finite deformation solid mechanics, where the governing equations do not admit a polynomial structure for most material models. This example highlights the ability of NN-OpInf to infer a nonlinear OpInf model preserving the Lagrangian structure of the FOM. To this end, consider a body on the reference domain $\Omega \subset \RR{3}$ undergoing finite deformations described by $\boldsymbol x = \psi \left(\boldsymbol X \right)$, where $\psi : \Omega \rightarrow W$ is the deformation map, $\boldsymbol X \in \Omega$ are the reference coordinates, $\boldsymbol x \in W$ are the deformed coordinates, and $W$ is the deformed domain. The dynamics are described by the Euler--Lagrange equations
\begin{equation}\label{eq:sd}
\rho_0 \ddot{\psi} = \nabla \cdot P ,
\end{equation} 
subject to the initial conditions,
\begin{equation} \label{eq:dynamic_elasticity_bcs}
    \begin{array}{ll}
    \psi(t_0,x) = \psi_0 \quad \text{ in } \Omega, & \quad\dot{\psi}(t_0,x) = v_0 \quad \text{ in } \Omega. \\
      \end{array}
\end{equation}
In the above, $\rho_0$ is the mass density and $P$ is the first Piola-Kirchhoff stress. We consider free boundary conditions in the example that follows. A Galerkin finite element discretization of Eq.~\eqref{eq:sd} yields the corresponding discrete system
$$
\mass \ddot\state   = \boldsymbol f(\state),
$$
where $\mass$ is the mass matrix and $\boldsymbol f$ are the internal forces. 
We consider a Neohookean material model with elastic modulus $1 \times 10^9$ Pa, Poisson's ratio $0.25$, and density $\rho = 1000$ kg/m$^3$. We emphasize that this material model gives rise to dynamics that are highly nonlinear, with nonlinearities that are far more complex than polynomial nonlinearities; see Appendix A.3 of \cite{TePaGr26} for details.  Moreover, the simulated dynamics are fundamentally nonlinear. 
The problem setup models the distortion of a free beam subject to an initial velocity, which results in a high-degree of torsion; a similar example was considered in Ref.~\cite{Mota:2022, TePaGr26}. The beam is of size $0.05 \times 1.0 \times 0.05$ subject to an initial velocity profile $v_x = \alpha y z$, $v_y = \beta x z$, where $\alpha$ and $\beta$ control its magnitude. The system is discretized using (first-order) piecewise-linear polynomial elements of size $\Delta x = \Delta y = \Delta z = 0.01$ m, while time integration uses the (second-order) Newmark-$\beta$ scheme with $\Delta t = 1 \times 10^{-5}$ s, $\beta = 0.25$, and $\gamma=0.5$. 

The discretized governing equations extremize the mechanical Lagrangian,
$$\mathcal{L} ( \state,\stateDot ) = T(\stateDot) - V(\state) \coloneqq \frac{1}{2}\stateDot^T \mass \stateDot - V(\state),$$
where $T(\stateDot) = \frac{1}{2} \stateDot^T \mass \stateDot$ is the kinetic energy and $V(\state)$ is the potential energy satisfying $V'(\state) = \boldsymbol f(\state)$.  To construct an NN-OpInf ROM that maintains the Lagrangian structure of the FOM, we parameterize the right hand side with the SPSD potential operator,
\begin{equation}\label{eq:nnopinf_lagrangian}
\reducedVelocity(\reducedState,\params) =   \frac{\partial}{\partial \reducedState} \left[ \reducedState^\intercal\reducedSpdMatrixLowerNN(\reducedState,\params;\weights) \reducedSpdMatrixLowerNN^\intercal(\reducedState,\params;\weights)\reducedState \right]. 
\end{equation}
We refer to this formulation as the NN-OpInf-SPSD-Potential ROM, and note that the resulting ROM satisfies the Euler-Lagrange equations
$$ \ddot{\reducedState}  - \hat{V}'(\reducedState) = \mathbf{0},$$
associated with the reduced Lagrangian
$$\hat{\mathcal{L}} ( \reducedState,\reducedStateDot ) = \dot\reducedState^\intercal\dot\reducedState - \hat{V}(\reducedState) \coloneqq \dot\reducedState^\intercal\dot\reducedState - \reducedState^\intercal\reducedSpdMatrixLowerNN(\reducedState,\params;\weights) \reducedSpdMatrixLowerNN^\intercal(\reducedState,\params;\weights)\reducedState.$$
This model is compared to the P-OpInf-A, P-OpInf-AH, and NN-OpInf-NN ROMs. Note that the mass matrix $\mass$ is readily available in this example, so we construct an $\mass$-orthonormal ROM basis as is standard practice. Due to the strong non-linearities in this problem, in the NN-OpInf ROMs we set the number of neurons per layer to be twice the size of the ROM dimension, $n_{\mathrm{n}} = 2K$.

Two configurations are considered:
\begin{enumerate}
\item Reproductive ROM: We set $\alpha = \beta = 7500$, collecting snapshots and making predictions over $t \in [0,2.5 \times 10^{-3}]$. Snapshots are collected at every FOM time step for a total of $250$ snapshots.
\item Future-state prediction: Using the same configuration as above, we predict over the longer interval $t \in [0,5 \times 10^{-3}]$. 
\end{enumerate}

Figure~\ref{fig:torsion_error_convergence} shows results for relative error vs basis dimension for the reproductive configuration (left) and future-state configuration (right). We observe that the NN-OpInf-SPSD-Potential model, which preserves the Lagrangian structure of the FOM, is approximately an order of magnitude more accurate that standard P-OpInf models. The vanilla NN-OpInf-NN model is unstable. Figure~\ref{fig:torsion_paraview_t0025} shows the physical space solutions for displacement for the reproductive configuration, while Figure~\ref{fig:torsion_paraview_t0050} shows results for the future-state prediction configuration. We observe that the NN-OpInf-SPSD-Potential operator faithfully approximates the FOM trajectory for the reproductive configuration and, while it differs slightly in phase as compared to the FOM in the future-state configuration, still provides a reasonable solution. While the OpInf-AH model provides a reasonable solution at the end of the reproductive regime $t=2.5 \times 10^{-3}$, it still associates with a high error and yields unphysical predictions in the future-state configuration. The linear P-OpInf-A model performs poorly for all configurations.  However, note that there exist in the literature Lagrangian and Hamiltonian structure-preserving P-OpInf formulations which may perform better than these baselines, e.g., Refs. \cite{ShKa24,ViMcGr25,GrTe23, GrTe25}. 

\begin{figure}
\begin{center}
\begin{subfigure}[t]{0.49\textwidth}
\includegraphics[trim={0cm 0cm 0cm 0cm},clip,width=1.0\linewidth]{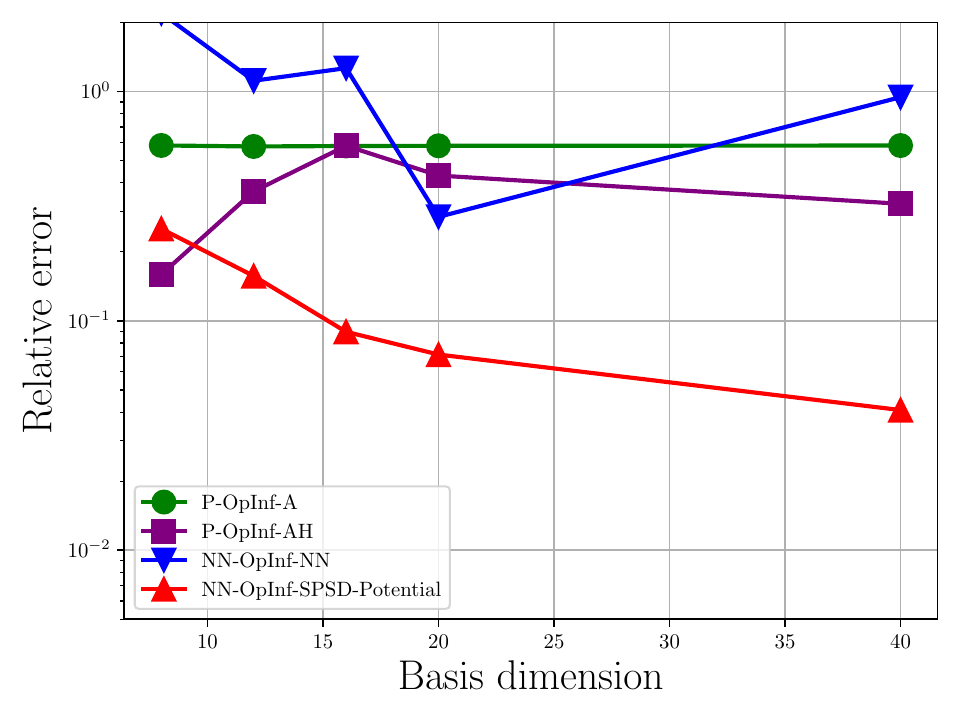}
\caption{Reproductive configuration.}
\end{subfigure}
\begin{subfigure}[t]{0.49\textwidth}
\includegraphics[trim={0cm 0cm 0cm 0cm},clip,width=1.0\linewidth]{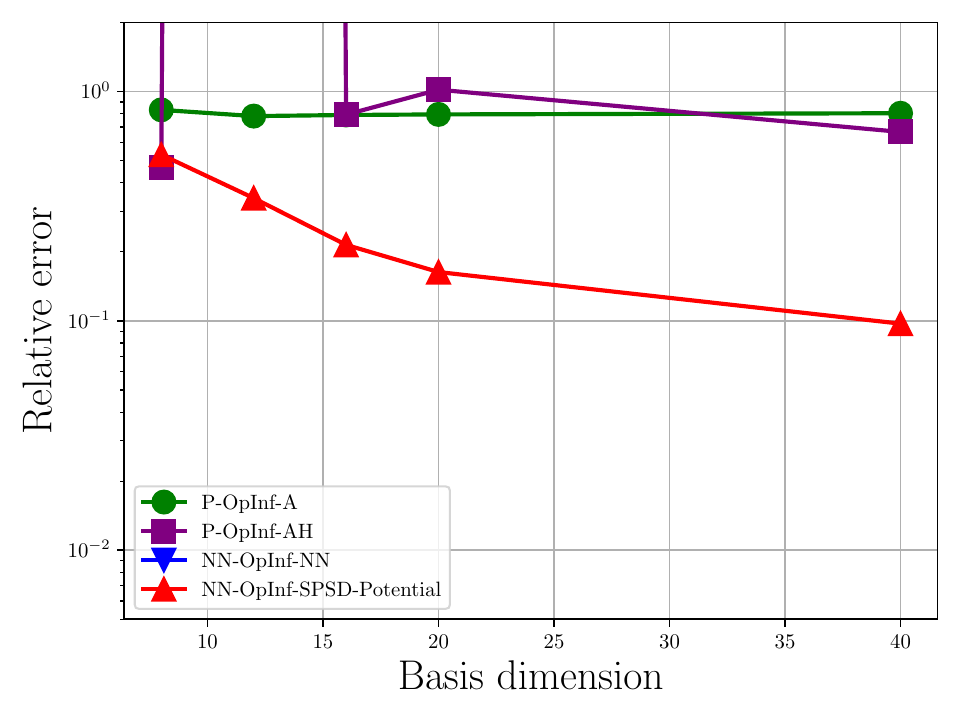}
\caption{Future-state prediction.}
\end{subfigure}
\caption{Torsion problem. Relative error versus basis dimension.}
\label{fig:torsion_error_convergence}
\end{center}
\end{figure}

\begin{figure}
\begin{center}
\begin{subfigure}[t]{0.19\textwidth}
\includegraphics[trim={25cm 0cm 25cm 0cm},clip,width=1.0\linewidth]{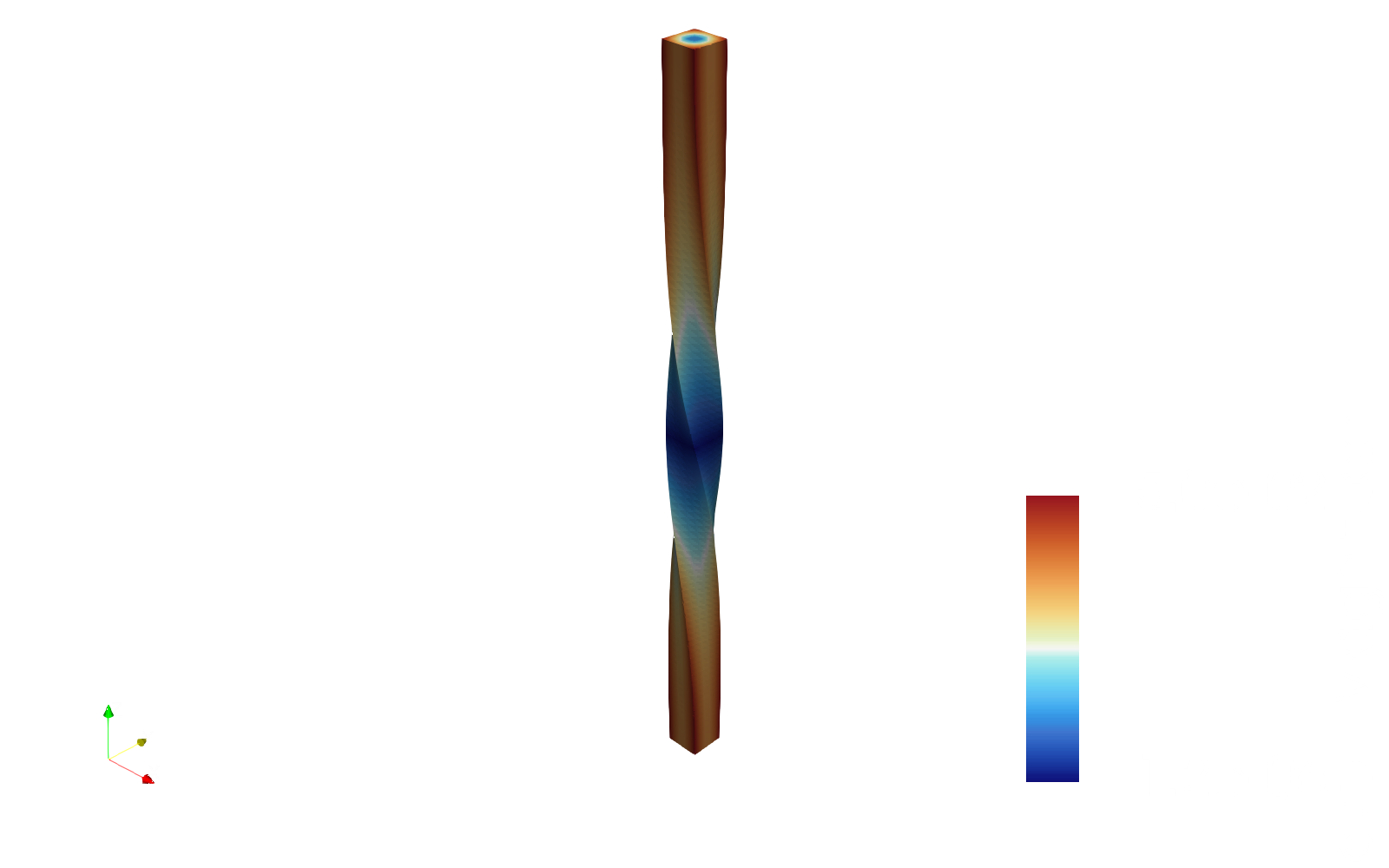}
\caption{FOM}
\end{subfigure}
\begin{subfigure}[t]{0.19\textwidth}
\includegraphics[trim={25cm 0cm 25cm 0cm},clip,width=1.0\linewidth]{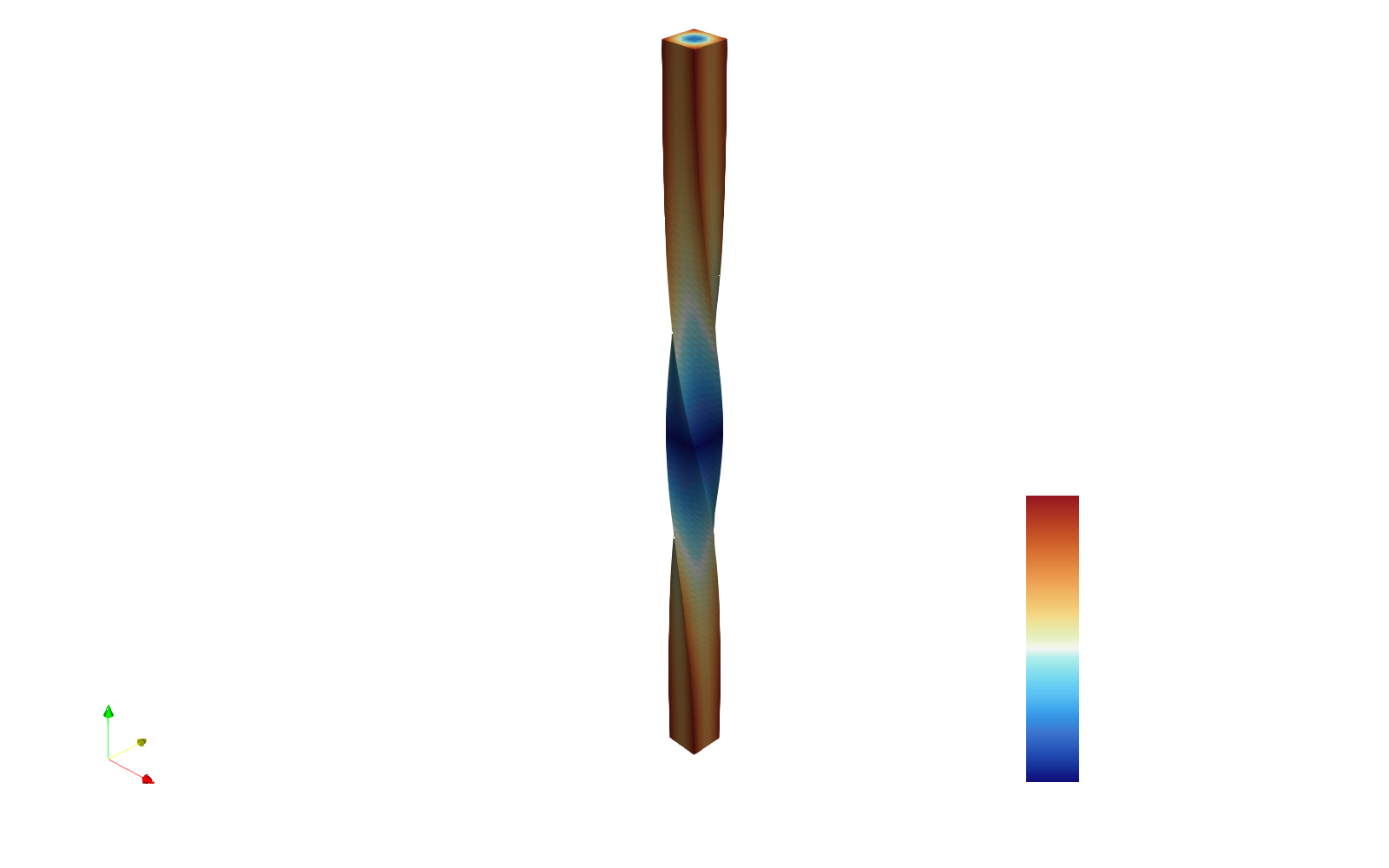}
\caption{NN-OpInf-SPSD-Potential}
\end{subfigure}
\begin{subfigure}[t]{0.19\textwidth}
\includegraphics[trim={25cm 0cm 25cm 0cm},clip,width=1.0\linewidth]{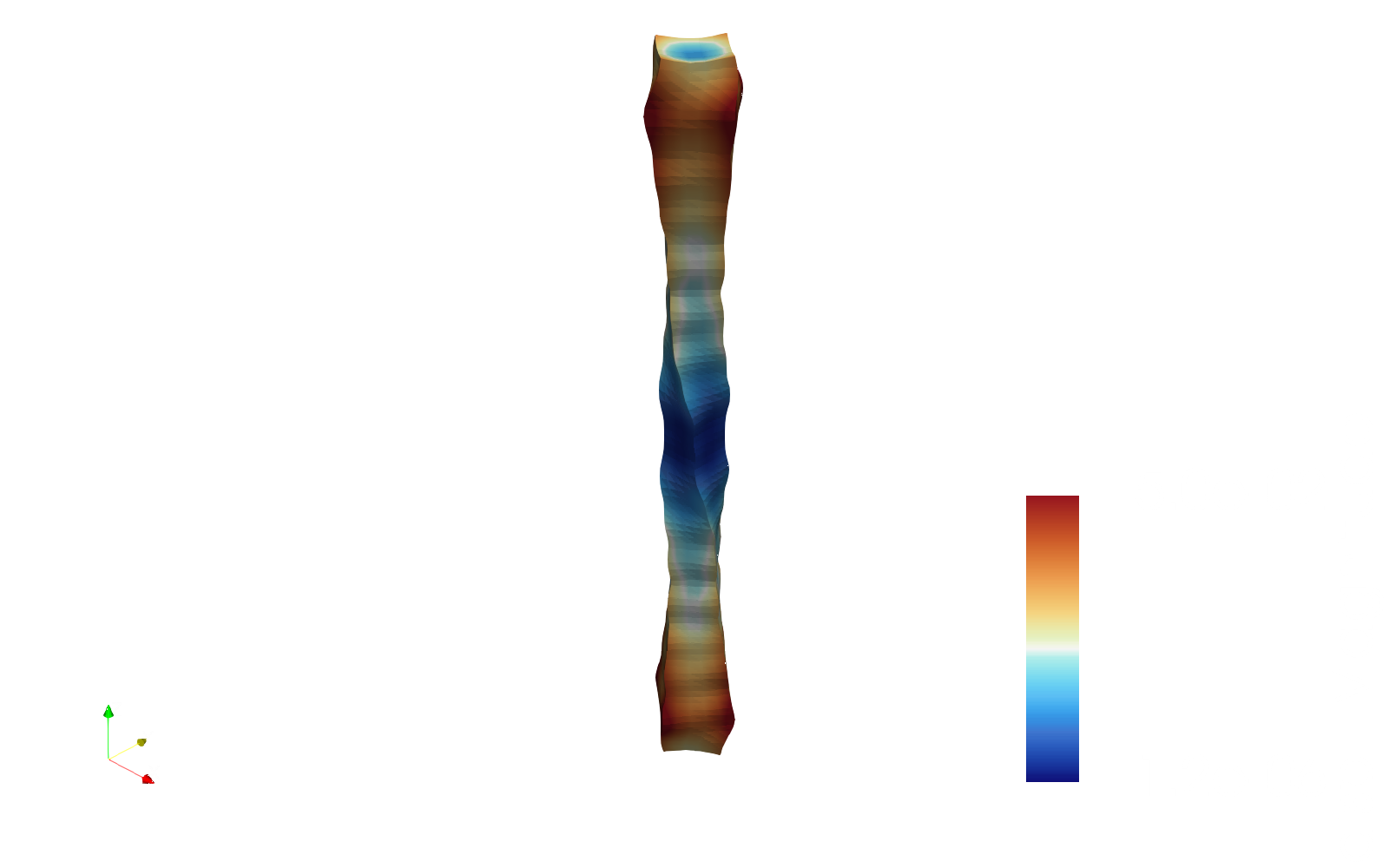}
\caption{P-OpInf-A}
\end{subfigure}
\begin{subfigure}[t]{0.19\textwidth}
\includegraphics[trim={25cm 0cm 25cm 0cm},clip,width=1.0\linewidth]{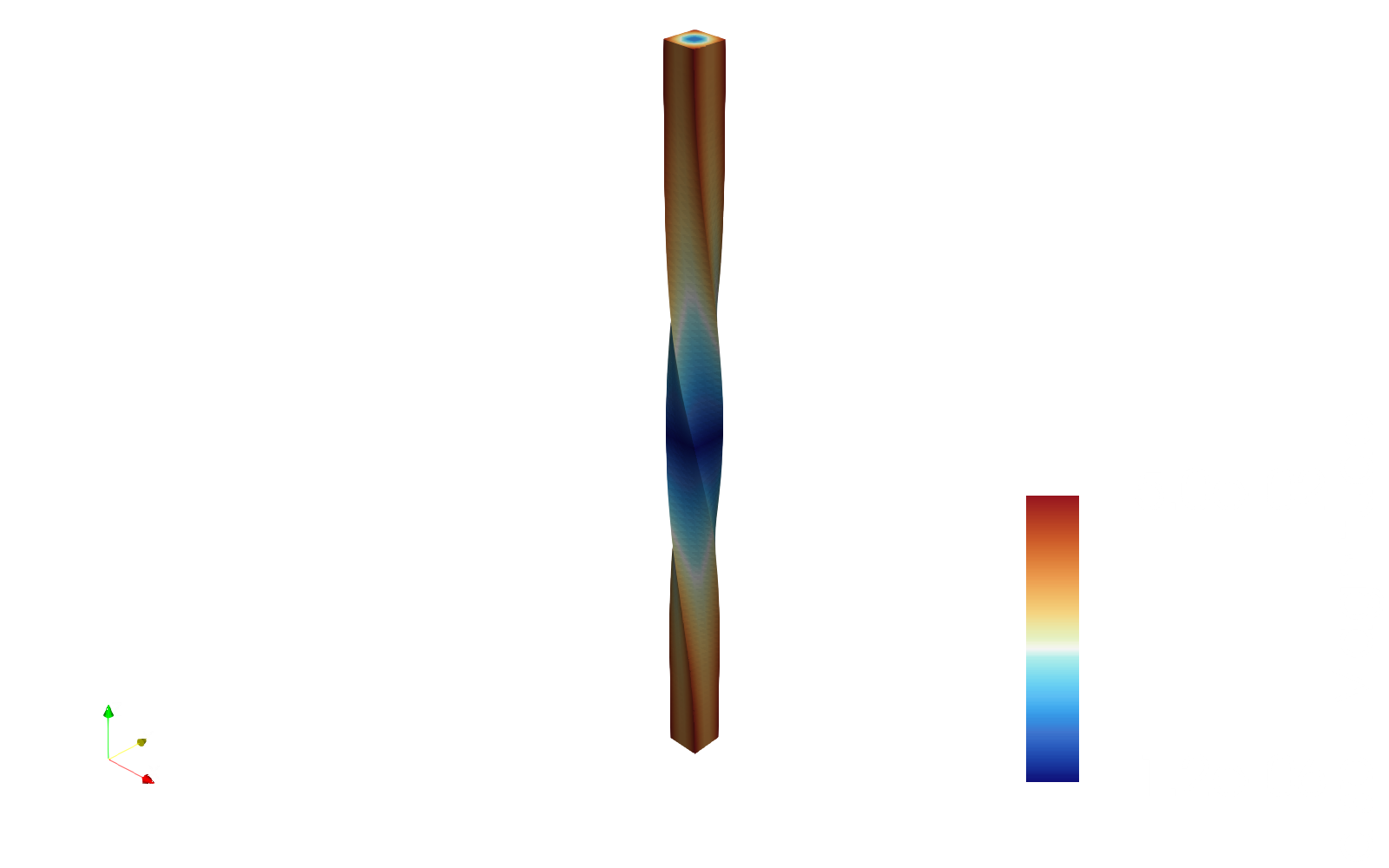}
\caption{P-OpInf-AH}
\end{subfigure}
\begin{subfigure}[t]{0.19\textwidth}
\includegraphics[trim={25cm 0cm 25cm 0cm},clip,width=1.0\linewidth]{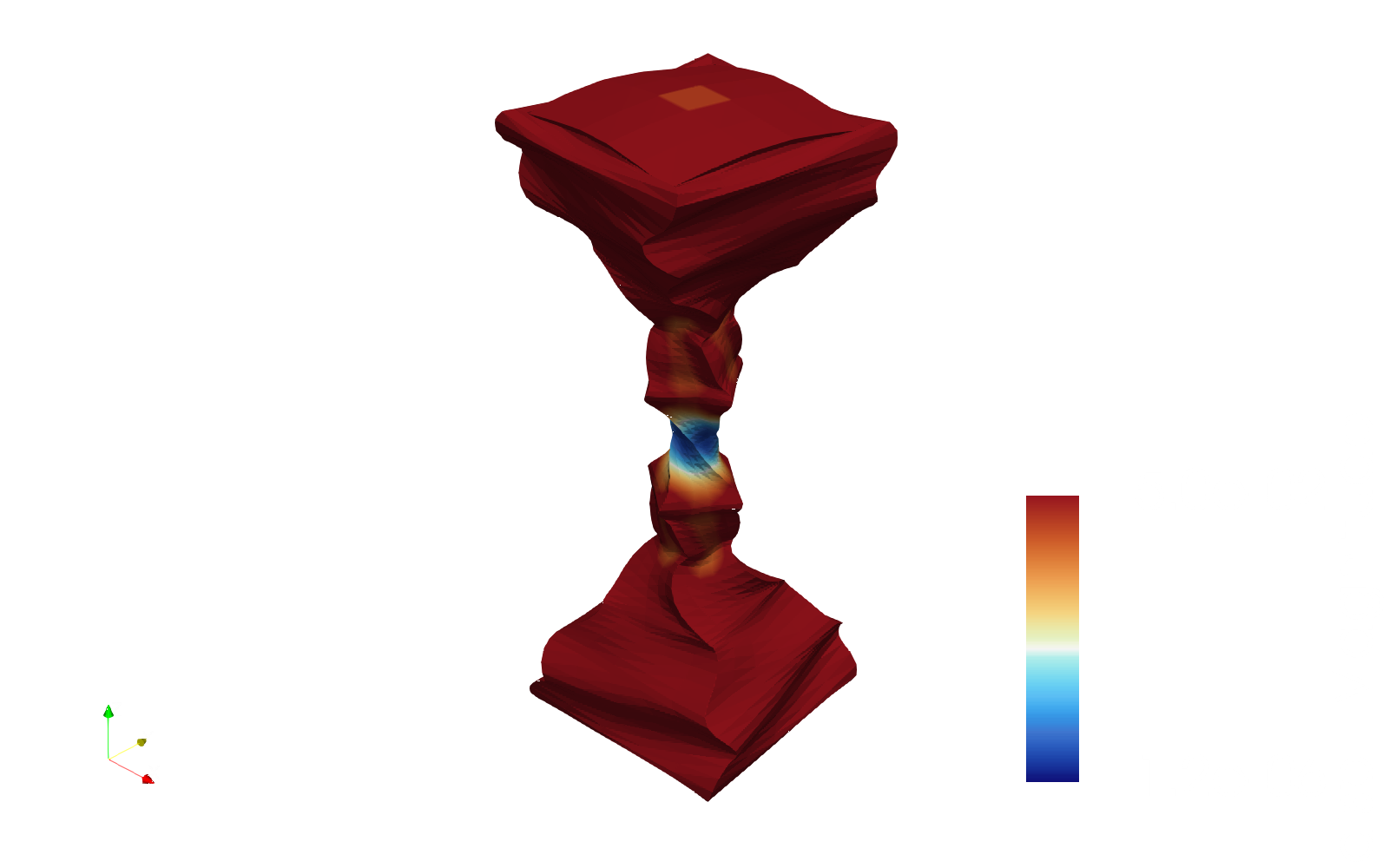}
\caption{NN-OpInf-NN}
\end{subfigure}
\caption{Torsion problem. Displacement solutions at $t=2.5\times 10^{-3}$ s ($K=40$). The coloring is the displacement magnitude.}
\label{fig:torsion_paraview_t0025}
\end{center}
\end{figure}

\begin{figure}
\begin{center}
\begin{subfigure}[t]{0.19\textwidth}
\includegraphics[trim={25cm 0cm 25cm 0cm},clip,width=1.0\linewidth]{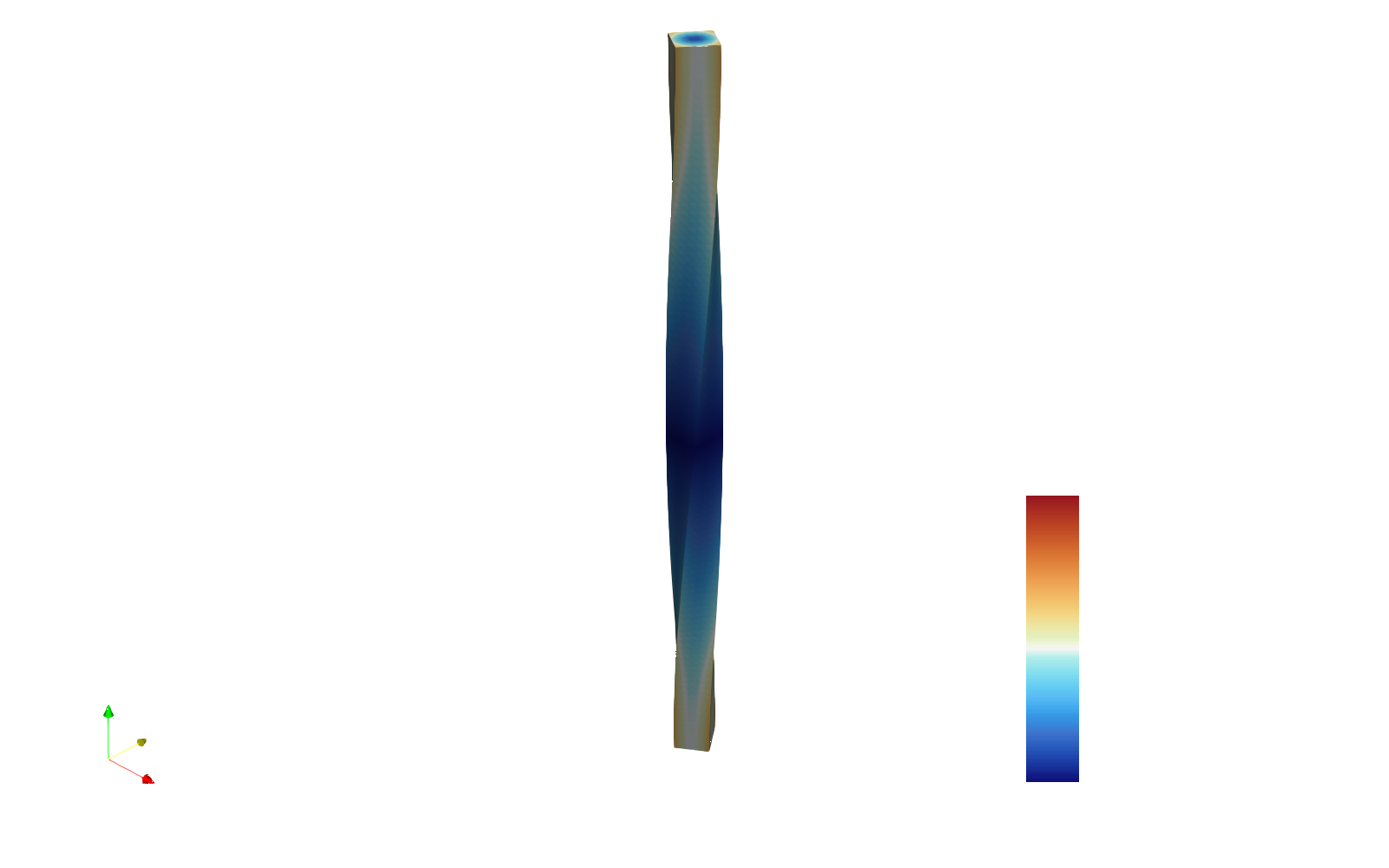}
\caption{FOM}
\end{subfigure}
\begin{subfigure}[t]{0.19\textwidth}
\includegraphics[trim={25cm 0cm 25cm 0cm},clip,width=1.0\linewidth]{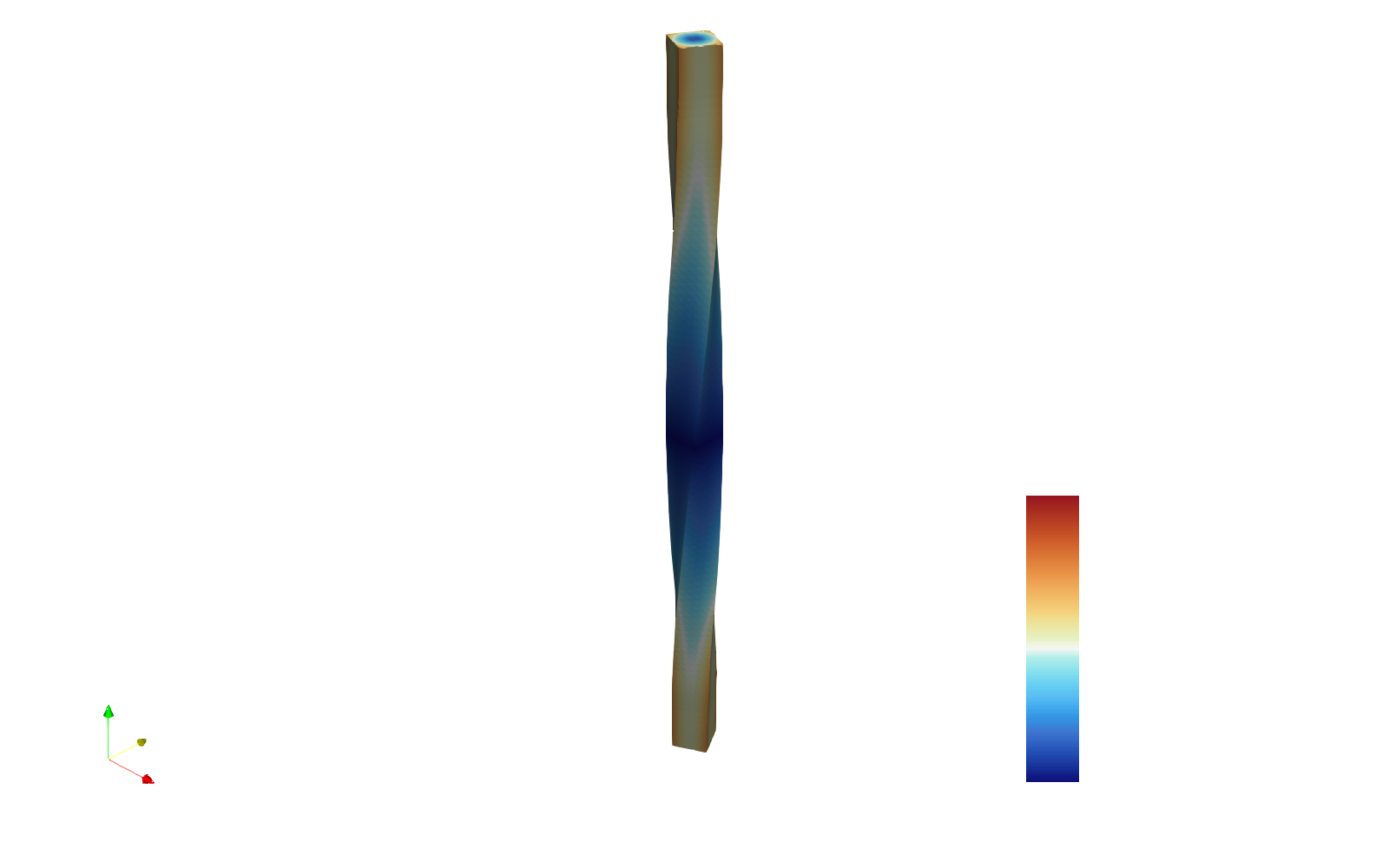}
\caption{NN-OpInf-SPSD-Potential}
\end{subfigure}
\begin{subfigure}[t]{0.19\textwidth}
\includegraphics[trim={25cm 0cm 25cm 0cm},clip,width=1.0\linewidth]{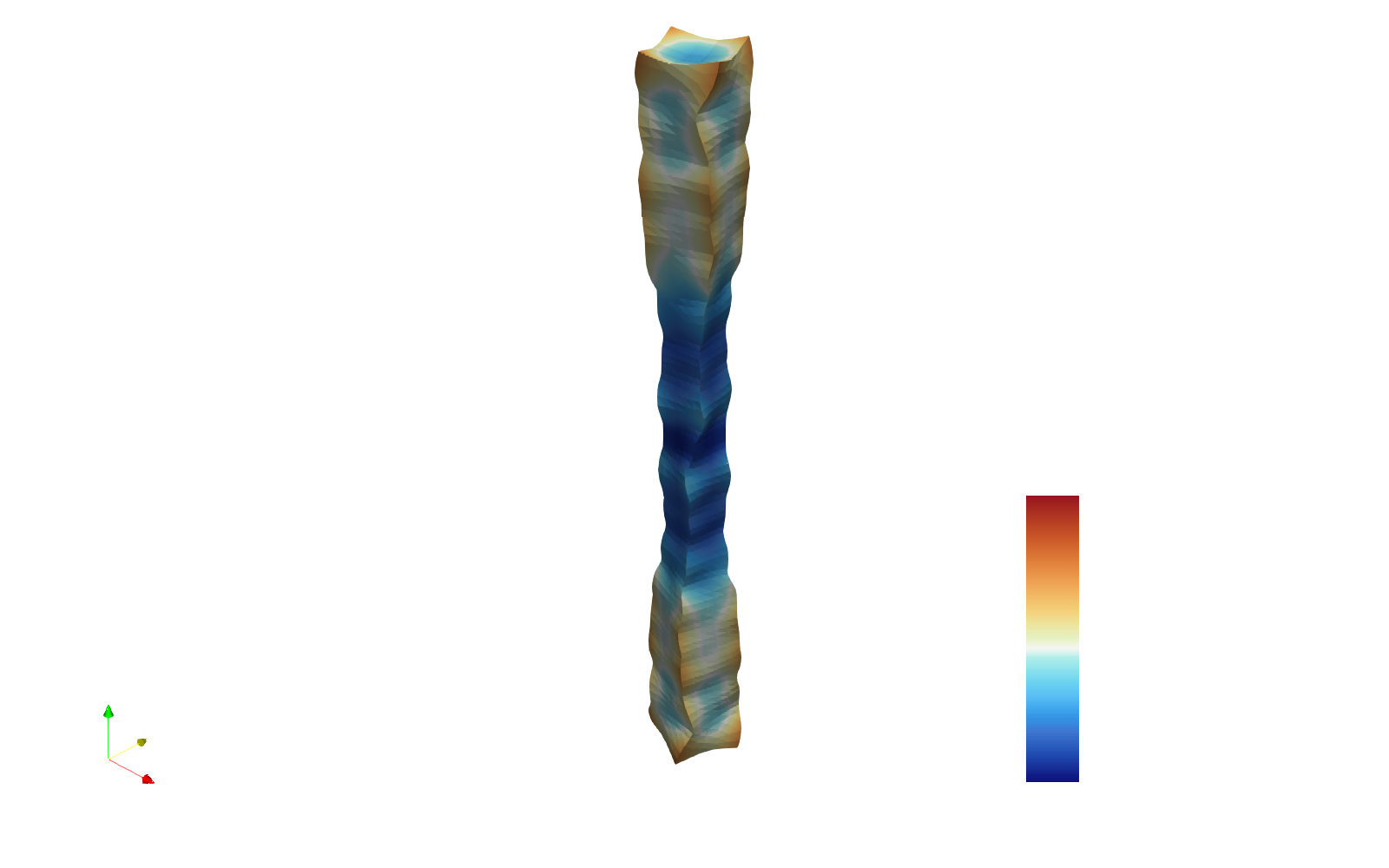}
\caption{P-OpInf-A}
\end{subfigure}
\begin{subfigure}[t]{0.19\textwidth}
\includegraphics[trim={25cm 0cm 25cm 0cm},clip,width=1.0\linewidth]{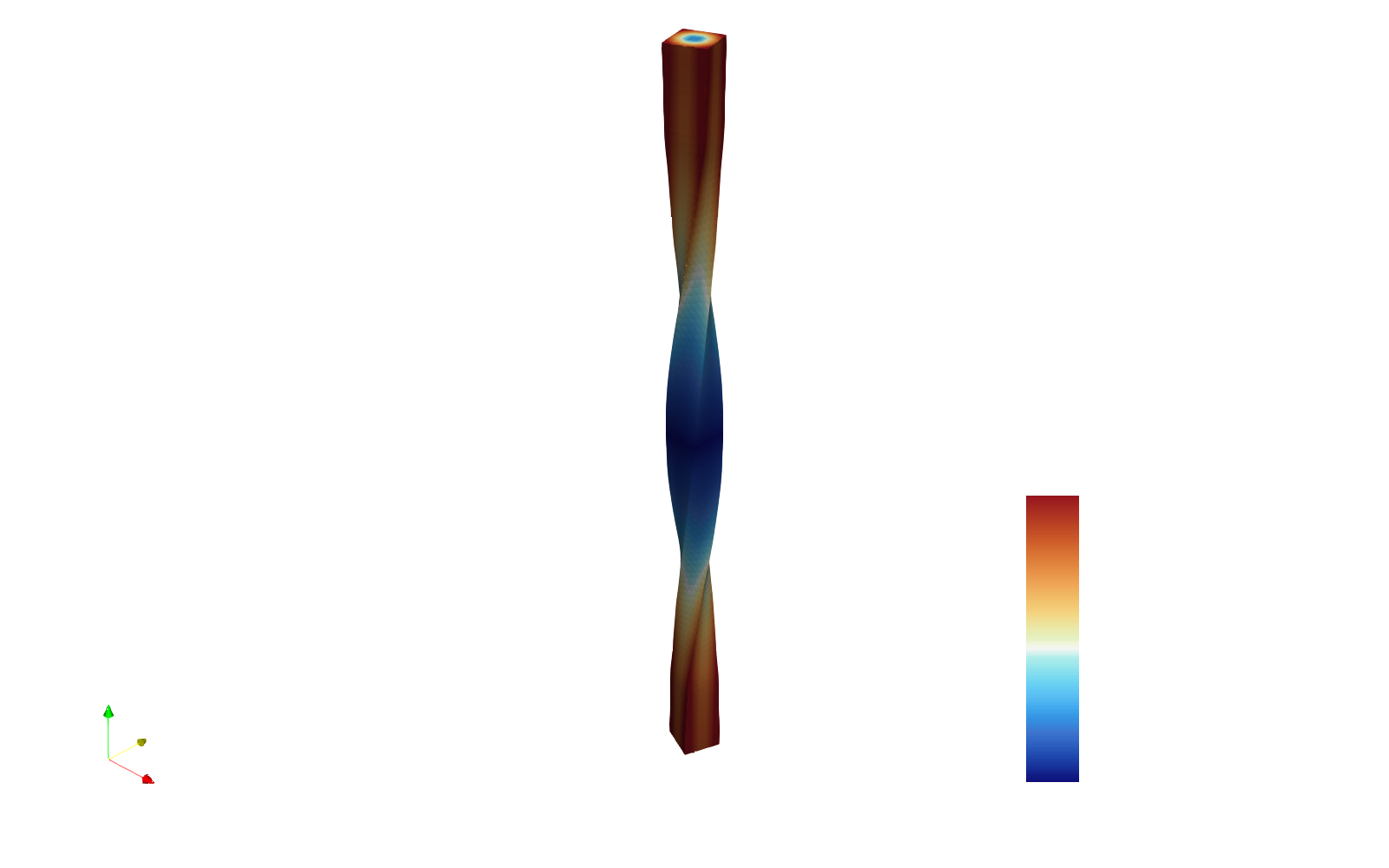}
\caption{P-OpInf-AH}
\end{subfigure}
\begin{subfigure}[t]{0.19\textwidth}
\includegraphics[trim={25cm 0cm 25cm 0cm},clip,width=1.0\linewidth]{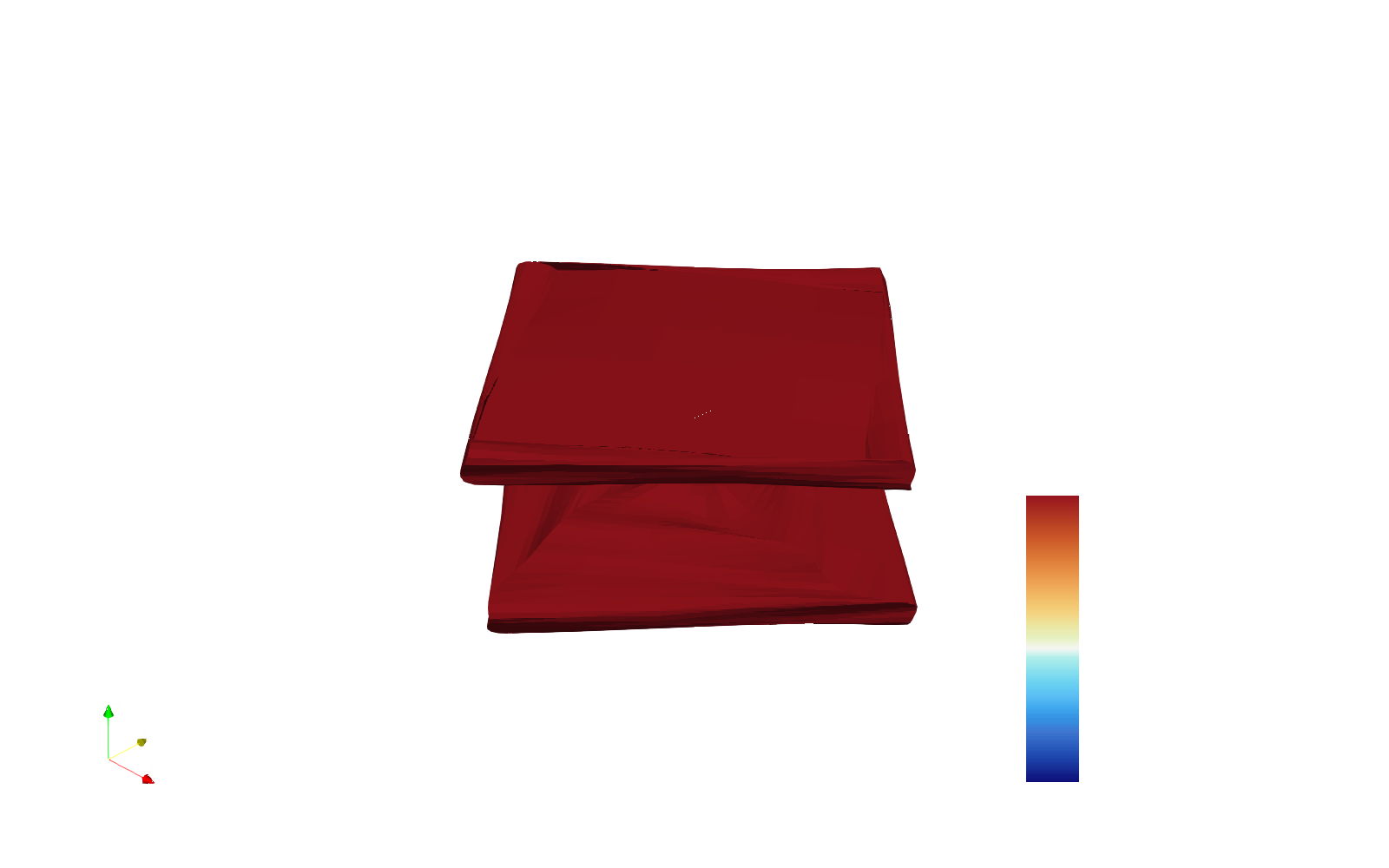}
\caption{NN-OpInf-NN}
\end{subfigure}
\caption{Torsion problem. Displacement solutions at $t=3.75\times 10^{-3}$ s ($K=40$). The coloring is the displacement magnitude.}
\label{fig:torsion_paraview_t0050}
\end{center}
\end{figure}

\section{Conclusions}\label{sec:conclusions}
We presented NN-OpInf: a neural-network-based, structure-preserving, composable OpInf framework for non-intrusive reduced-order modeling. The approach learns a latent dynamics model from snapshot data by parameterizing a reduced velocity field with a sum of neural-network-based operators, each admitting a specified algebraic structure (e.g., skew-symmetry). The approach captures complex dynamics while preserving key local structure, providing a flexible mechanism for blending heterogeneous operator blocks while maintaining interpretability and physical constraints at the component level. We outlined the mathematical formulation of the approach as well as the open-source \code{NN-OpInf} software package in which the approach is implemented. 

We analyzed computational cost and convexity properties of NN-OpInf, including explicit FLOP comparisons against linear and quadratic P-OpInf and convexity results for linear structured operators. This analysis shows that the computational cost of the forward pass of NN-OpInf models is on the same order of magnitude as that for quadratic P-OpInf models, and that the primary increase in cost from NN-OpInf occurs in the (offline) training phase.  
This is due to NN-OpInf inference problems being non-convex in general, requiring iterative batch gradient-descent optimization algorithms where many forward and adjoint passes are executed.  However, note that this relative increase in training cost over P-OpInf reduces for large basis dimensions, while the online costs remain comparable to global quadratic models of the same dimension.  Moreover, NN-OpInf ROMs are more general, as they are capable of capturing generic nonlinearities.


Numerical experiments across several nonlinear and parametric systems demonstrate that the structured NN-OpInf models consistently improve accuracy and robustness relative to P-OpInf and vanilla neural network baselines, particularly in predictive regimes. The results also show that enforcing structure can stabilize ROM performance and improve generalization when polynomial structure is not adequate. Future work will focus on improved optimization strategies, richer operator libraries, dynamics-constrained training, uncertainty quantification for ROM predictions, and extensions to additional multiphysics systems.

\section{Acknowledgements}
Support for this work was received through Sandia National Laboratories’ Laboratory Directed Research
and Development (LDRD) program and through the U.S. Department of Energy, Office of Science,
Office of Advanced Scientific Computing Research, Mathematical Multifaceted Integrated Capability
Centers (MMICCs) program, under Field Work Proposal 22025291 and the Multifaceted Mathematics for
Predictive Digital Twins (M2dt) project. Additionally, the writing of this manuscript was funded in part
by Irina Tezaur’s Presidential Early Career Award for Scientists and Engineers (PECASE).
Sandia National Laboratories is a multi-mission laboratory managed and
operated by National Technology \& Engineering Solutions of Sandia, LLC
(NTESS), a wholly owned subsidiary of Honeywell International Inc.,
for the U.S. Department of Energy's National Nuclear Security
Administration (DOE/NNSA) under contract DE-NA0003525. This written
work is authored by an employee of NTESS. The employee, not NTESS,
owns the right, title and interest in and to the written work and is
responsible for its contents. Any subjective views or opinions that
might be expressed in the written work do not necessarily represent
the views of the U.S. Government. The publisher acknowledges that the
U.S. Government retains a non-exclusive, paid-up, irrevocable,
world-wide license to publish or reproduce the published form of this
written work or allow others to do so, for U.S. Government
purposes. The DOE will provide public access to results of federally
sponsored research in accordance with the DOE Public Access Plan.

\section{Declaration on the Use of Generative AI}
In preparing this manuscript, the authors used generative AI for manuscript formatting, assistance with table generation, general manuscript editing and review, and code documentation.

\end{document}